\newtheorem{assumption}{Assumption}
\newtheorem{theorem}{Theorem}
\newenvironment{app_assumption}[1]{\par\noindent\textbf{Assumption }\space#1}{}
\newenvironment{app_theorem}[1]{\par\noindent\textbf{Theorem:}\space#1}{}
\newenvironment{lemma}[1]{\par\noindent\textbf{Lemma}:\space#1}{}
\newenvironment{corollary}[1]{\par\noindent\textbf{Corollary:}\space#1}{}
\title{Efficient Semi-Implicit Variational Inference}
\author{%
  Vincent Moens \\
  Huawei R\&D UK \\
  \texttt{vincent.moens@huawei.com}
   \And
   Hang Ren \\
   Huawei R\&D UK \\
   \texttt{hang.ren1@huawei.com} \\
   \AND
   Alexandre Maraval \\
   Huawei R\&D UK \\
   \texttt{alexandre.maravel@huawei.com} \\
   \And
   Rasul Tutunov \\
   Huawei R\&D UK \\
   \texttt{rasul.tutunov@huawei.com} \\
   \And
   Jun Wang \\
   Huawei R\&D UK \\
   University College London \\
   \texttt{w.j@huawei.com} \\
   \And
   Haitham Ammar \thanks{Honorary position at UCL.} \\
   Huawei R\&D UK \\
   University College London \\
   \texttt{haitham.ammar@huawei.com} \\
}
\begin{document}

\maketitle

\begin{abstract}
In this paper, we propose CI-VI an efficient and scalable solver for semi-implicit variational inference (SIVI). Our method, first, maps SIVI's evidence lower bound (ELBO) to a form involving a nonlinear functional nesting of expected values and then develops a rigorous optimiser capable of correctly handling bias inherent to nonlinear nested expectations using an extrapolation-smoothing mechanism coupled with gradient sketching. Our theoretical results demonstrate convergence to a stationary point of the ELBO in general non-convex settings typically arising when using deep network models and an order of $\mathcal{O}(t^{-\sfrac{4}{5}})$ gradient-bias-vanishing rate. We believe these results generalise beyond the specific nesting arising from SIVI to other forms. Finally, in a set of experiments, we demonstrate the effectiveness of our algorithm in approximating complex posteriors on various data-sets including those from natural language processing.
\end{abstract}
\section{Introduction}
Variational Inference (VI) is an approximate Bayesian inference framework that recasts reasoning about latent variable models as an instance of numerical optimisation~\cite{Jaakkola1999, Saul1996}. This is achieved by positing a class of variational distributions and optimising an evidence lower-bound (ELBO) that involves log-joint densities rather than intractable posteriors. Classical VI introduces simplifying assumptions (e.g., mean-field and/or conditional conjugacy) to allow for tractable optimisation leading to algorithms that ascend to a stationary point of the  ELBO~\cite{Blei2017, Jaakkola1999}. Though successful in many applications~\cite{multiviewRL, MTRL, Apps}, such assumptions, unfortunately, restrict  ``representation-power'' and thus, lead to models that underestimate the variance of the posterior.

Realising this problem, numerous frameworks aiming at expanding expressiveness of variational families have been proposed. Works in~\cite{Giordano2015, gregor15, Han2016, Hoffman2012,  Jaakkola1999, maaloe16, Ranganath2016, Rezende2015,  Saul1996, Tran2016, Tran2017}, for instance, relax mean-field assumptions and attempt to restore some dependencies in variational distributions but still require analytical probability density functions. 
Others in~\cite{Huszar2017, Li2018, Mohamed2017,  Shi2018, Tran2017}, moreover, introduce implicit models by sampling noise vectors and propagating these through deep networks~\cite{UIVI}. 
These techniques do lead to distributions that are implicit but render computing log-variational densities and their gradients intractable. 
For this reason, authors resort to density ratio estimation; a methodology hard-to-scale and stabilise in high-dimensional settings. 

To avoid density ratio estimation, recent work in~\cite{SIVI} proposed semi-implicit variational inference (SIVI) as a hierarchical framework that obtains variational distributions through a mixing parameter accompanied by known-noise priors~\cite{UIVI}. 
Exploiting this definition, original SIVI optimises a sequence of lower bounds that asymptotically converge to the original ELBO.
Rather than focusing on approximate lower (upper) bounds, an unbiased estimator of the exact gradient of SIVI's ELBO has been newly achieved~\cite{UIVI} through a Hamiltonian Monte-Carlo simulator that draws samples from a reverse conditional acquiring state-of-the-art status. 
Though achieving better results than previous works, unbiased estimators based on Markov Chain Monte Carlo (MCMC) easily become computationally expensive in high-dimensional regimes. Hence, an efficient solver for models supporting semi-implicit variational family distributions, largely, remains an open problem to which we contribute in this paper.

Tackling the above problem, we present CI-VI, an efficient semi-implicit solver with rigorous theoretical guarantees capable of scaling to high-dimensional scenarios. Our method first maps the ELBO in SIVI to a nonlinear nested expectation (or compositional) form\footnote{Please notice the decoupling between the inner and outer functions $f_{\nu}(\cdot)$ and $g_{\omega}(\cdot)$. This requires us to further analyse SIVI's ELBO; see Section~\ref{Sec:Solver}.}, i.e., $\mathbb{E}_{\nu}[f_{\nu}\left(\mathbb{E}_{\omega}[\bm{g}_{\omega}(\bm{\theta})]\right)]$ with $\nu$ and $\omega$ being random variables\footnote{Please note we do not assume any independence between $\nu$ and $\omega$}, $f_{\nu}:\mathbb{R}^{n}\to\mathbb{R}$, $\bm{g}_{w}:\mathbb{R}^{p}\to\mathbb{R}^{n}$ are smooth, not necessarily convex functions,  and $\bm{\theta}$ the optimisation parameter -- and then devises an algorithm capable of handling the bias resulting from the non-linear composition of $\mathbb{E}_{\omega}[\cdot]$ and $\mathbb{E}_{\nu}[\cdot]$ through $f_{\nu}(\cdot)$. 
To do so, we introduce an extrapolation-smoothing step to an ADAM-like~\cite{kingma2014adam, Rasul} solver and, in turn, show vanishing gradient bias in the order of $\mathcal{O}(t^{-\sfrac{4}{5}})$ ultimately enabling convergence to a stationary point. Our resulting algorithm shares similarities to recent work from~\cite{Rasul} but unlocks novel theoretical results analysing gradient-bias terms. 
Though similar in spirit to~\cite{Rasul}, we realise that the original version presented in Algorithm~\ref{Algo:ADAM} fails to scale to high-dimensions due to the need of computing large matrix-vector products (see Gradients' instructions in Algorithm~\ref{Algo:ADAM}). 
Rectifying this problem, we lastly anchor a gradient-sketching mechanism allowing for batched matrix-vector products, and, further, study the theoretical outcomes of such a combination. Finally, we conduct an in-depth empirical study demonstrating that CI-VI outperforms other algorithms from SIVI, nested Monte-Carlo~\cite{NestedMC}, and compositional optimisation~\cite{Rasul, Mengdi} literature.

\vspace{-1em}

\section{Compositional implicit variational inference (CI-VI):}
In this section, we demonstrate the connection between semi-implicit variational inference and compositional stochastic optimisation\footnote{We use compositional and nested stochastic optimisation interchangeably.}. We, first, present the semi-implicit inference framework as detailed in~\cite{UIVI, SIVI}, and then link to compositional optimisation in Section~\ref{Sec:CompForm}. We, finally, feature an efficient adaptive solver and provide its relevant theoretical guarantees in Section~\ref{Sec:Solver}. 
\subsection{Semi-implicit variational inference}\label{subsec:semi_implicit_vi}
To approximate the posterior $p(\bm{z}|\bm{x})$ of a probabilistic model $p(\bm{x}, \bm{z})$, we define a semi-implicit variational distribution $q_{\bm{\theta}}(\bm{z})$ in a hierarchical fashion using a mixing parameter as introduced in~\cite{UIVI}: 
\begin{equation}
\label{Eq:HVI}
    \bm{\epsilon} \sim q(\bm{\epsilon}), \ \ \ \ \bm{z} \sim q_{\bm{\theta}}(\bm{z}| \bm{\epsilon}) \ \ \ \ \implies \ \ \ \ q_{\bm{\theta}}(\bm{z}) = \underbrace{\int q_{\bm{\theta}}(\bm{z}|\bm{\epsilon})q(\bm{\epsilon}) d\bm{\epsilon}}_{\text{intractable}}.
\end{equation}
Equation~\ref{Eq:HVI} reveals the reason behind $q_{\bm{\theta}}(\bm{z})$ being implicit as we can obtain latent variable samples through $\bm{\epsilon}$ but can not (tractably) compute the integral especially when using deep networks in representing $q_{\bm{\theta}}(\bm{z}|\bm{\epsilon})$. In this work, we impose two standard assumptions on the nature of the variational distribution as previously explained in~\cite{UIVI, SIVI}. The first assumes that $q_{\bm{\theta}}(\bm{z}|\bm{\epsilon})$ is reparameterisable. That is, to obtain samples from $\bm{z} \sim q_{\bm{\theta}}(\bm{z}|\bm{\epsilon})$, one can draw an auxiliary variable $\bm{u}$ and then set $\bm{z}$ as a deterministic function $h_{\bm{\theta}}(\cdot)$ of the sampled $\bm{u}$ i.e., $\bm{u} \sim q(\bm{u}),  \ \bm{z} = h_{\bm{\theta}}(\bm{u}; \bm{\epsilon}) \ \equiv  \ \bm{z} \sim q_{\bm{\theta}} (\bm{z}|\bm{\epsilon})$. The second, moreover, assumes that we can evaluate the log-density of the conditional i.e., $\log q_{\bm{\theta}}(\bm{z}|\bm{\epsilon})$ as well as its gradient. As noted in~\cite{UIVI}, such an assumption is not strong in that it holds for many reparameterisable distributions, e.g., Gaussian, Laplace, exponential, and many others. Analogous to standard variational inference, model parameters are fit by minimising the negate of an evidence-lower (ELBO) bound that can be derived as follows: 
\begin{equation}
\label{Eq:ELBO}
    \log p(\bm{x}) = \log \int \frac{q_{\bm{\theta}}(\bm{z})}{q_{\bm{\theta}}(\bm{z})} p(\bm{x}, \bm{z}) d \bm{z}  \geq \int q_{\bm{\theta}}(\bm{z}) \log \left[\frac{p(\bm{x},\bm{z})}{q_{\bm{\theta}}(\bm{z})}\right] d\bm{z} = \mathbb{E}_{q_{\bm{\theta}}(\bm{z})} \left[\log\frac{ p (\bm{x}, \bm{z})}{q_{\bm{\theta}}(\bm{z})} \right].
\end{equation}
Contrary to classical VI that assumes tractable expectations, SIVI introduces additional intractability (e.g., in the entropy term) due to the implicit nature of the variational distribution defined in Equation~\ref{Eq:HVI}. To tackle such intractability, recently the authors in~\cite{UIVI} proposed writing the gradient of the entropy term as an expectation and following an MCMC sampler~\cite{MCMC}, e.g., Hamiltonian Monte Carlo~\cite{HMC} to estimate the gradient of the ELBO. MCMC methods, however, are known to be computationally expensive and can exhibit high variance as they assume no model -- see Section~\ref{Sec:Exp} for a detailed comparison. Rather than following MCMC, in this paper we contribute by showing that the semi-implicit ELBO can be written as an instance of compositional optimisation and devise an adaptive and efficient solver with rigorous theoretical guarantees.

\subsection{SIVI in a compositional nested form}\label{Sec:CompForm}
In this section, we present a novel connection mapping implicit variational inference to compositional stochastic optimisation, paving-the-way for an efficient and scalable solver that we later develop in Section~\ref{Sec:Solver}. To do so, we start by plugging-in the variational distribution from Equation~\ref{Eq:HVI} in the inner-part of the ELBO (i.e., Equation~\ref{Eq:ELBO}) to get: $\log p(\bm{x}) \geq \mathbb{E}_{\bm{z} \sim q_{\bm{\theta}}(\bm{z})} \left[\log\frac{ p (\bm{x}, \bm{z})}{q_{\bm{\theta}}(\bm{z})} \right] = \mathbb{E}_{\bm{z} \sim q_{\bm{\theta}}(\bm{z})} \left[\log\frac{ p (\bm{x}, \bm{z})}{\mathbb{E}_{\bm{\hat{\epsilon}}\sim q(\bm{\epsilon})}[q_{\bm{\theta}}(\bm{z}|\bm{\hat{\epsilon}})]} \right]$, where we used $\bm{\hat{\epsilon}}$ to denote an inner-random variable also sampled according to $q(\bm{\epsilon})$. As noted earlier, we assume that the conditional $q_{\bm{\theta}}(\bm{z}|\bm{\epsilon})$ is reparameterisable through an auxiliary variable $\bm{u}$ and a deterministic function $h_{\bm{\theta}}(\bm{u};\bm{\epsilon})$ with $\bm{\epsilon} \sim q(\bm{\epsilon})$ but independent from $\bm{\hat{\epsilon}}$. Rather than reparametrising both inner and outer expectations, we only reparameterise the outer expectation leading us to: 
\begin{align*}
    \mathbb{E}_{\bm{z} \sim q_{\bm{\theta}}(\bm{z})} \left[\log\frac{ p (\bm{x}, \bm{z})}{\mathbb{E}_{\bm{\hat{\epsilon}}\sim q(\bm{\epsilon})}[q_{\bm{\theta}}(\bm{z}|\bm{\hat{\epsilon}})]} \right] &= \mathbb{E}_{\bm{u} \sim q(\bm{u}), \bm{\epsilon} \sim q(\bm{\epsilon})}\left[\log \frac{p(\bm{x}|h_{\bm{\theta}}(\bm{u}; \bm{\epsilon}))p(h_{\bm{\theta}}(\bm{u};\bm{\epsilon}))}{\mathbb{E}_{\bm{\hat{\epsilon}} \sim q(\bm{\epsilon})}[q_{\bm{\theta}}\left(h_{\bm{\theta}}(\bm{u};\bm{\epsilon})|\bm{\hat{\epsilon}}\right)]}\right] \\ \nonumber
   &\hspace{15em} \text{(reparam.)}
\end{align*}
Remembering that in variational inference one minimises the negate of the ELBO, we can further write:
\small
\begin{align*}
    - \mathbb{E}_{\bm{u} \sim q(\bm{u}), \bm{\epsilon} \sim q(\bm{\epsilon})}\left[\log \frac{p(\bm{x}|h_{\bm{\theta}}(\bm{u}; \bm{\epsilon}))p(h_{\bm{\theta}}(\bm{u};\bm{\epsilon}))}{\mathbb{E}_{\bm{\hat{\epsilon}}}[q_{\bm{\theta}}\left(h_{\bm{\theta}}(\bm{u};\bm{\epsilon})|\bm{\hat{\epsilon}}\right)]}\right] &= \mathbb{E}_{q(\bm{u}), q(\bm{\epsilon})}\left[\log \frac{\mathbb{E}_{\bm{\hat{\epsilon}}\sim q(\bm{\epsilon})}[q_{\bm{\theta}}\left(h_{\bm{\theta}}(\bm{u};\bm{\epsilon})|\bm{\hat{\epsilon}}\right)]}{p(\bm{x}|h_{\bm{\theta}}(\bm{u}; \bm{\epsilon}))p(h_{\bm{\theta}}(\bm{u};\bm{\epsilon}))}\right] \\
    & = \mathbb{E}_{\bm{\mu} \sim q(\bm{\mu})}[\log \mathbb{E}_{\bm{\hat{\epsilon}}}[\mathcal{J}_{\bm{\mu}, \bm{\hat{\epsilon}}}(\bm{\theta})]],  
\end{align*}
\normalsize
where we used $\bm{\mu} = \{\bm{u}, \bm{\epsilon}\}$ to concatenate outer random variables with $q(\bm{\mu}) = q(\bm{u}, \bm{\epsilon}) = q(\bm{u})q(\bm{\epsilon})$. Hence, the optimisation problem involved in SIVI can be written as: $ \min_{\bm{\theta}} \mathbb{E}_{\bm{\mu}}  [\log \mathbb{E}_{\bm{\hat{\epsilon}}}[\mathcal{J}_{\bm{\mu}, \bm{\hat{\epsilon}}}(\bm{\theta})]]$. Superficially, the aforementioned problem looks compositional in nature due to the non-linear (through the logarithm) nesting of both expectations. It is worth emphasizing, however, that the standard nested form introduced in Section~1 assumes an inherent decoupling between the inner and outer expectations, i.e., $\mathbb{E}_{\nu}[f_{\nu}\left(\mathbb{E}_{\omega}[\bm{g}_{\omega}(\bm{\theta})]\right)]$. In light of this realisation, we now introduce a formalisation capable of achieving this decoupling. To do so, we consider a pool of $n$-$\bm{\mu}$ samples distributed according to $q(\bm{\mu})$: $\text{Pool} = \{ \bm{\mu}_{i} =  \left \langle \bm{u}_{i}, \bm{\epsilon}_{i} \right\rangle \}_{i=1}^{n}$. Now, we define a vector-valued function, $\bm{g}_{\bm{\hat{\epsilon}}}(\bm{\theta})$, of size $n$ corresponding to the evaluations of $\mathcal{J}_{\bm{\mu}, \bm{\hat{\epsilon}}}(\bm{\theta})$ on each of the samples from the pool, i.e., $\forall j \in [1,n]$ we have:
$\bm{g}_{\bm{\hat{\epsilon}}} (\bm{\theta}) = [\mathcal{J}_{\bm{\mu}_{1}, \bm{\hat{\epsilon}}}(\bm{\theta}), \dots, \mathcal{J}_{\bm{\mu}_{n}, \bm{\hat{\epsilon}}}(\bm{\theta})]^{\mathsf{T}}, \ \text{with $  \mathcal{J}_{\bm{\mu}_{j}, \bm{\hat{\epsilon}}}(\bm{\theta})= \frac{q_{\bm{\theta}}\left(h_{\bm{\theta}}(\bm{u}_{j};\bm{\epsilon}_{j})|\bm{\hat{\epsilon}}\right)}{p(\bm{x}|h_{\bm{\theta}}(\bm{u}_{j}; \bm{\epsilon}_{j}))p(h_{\bm{\theta}}(\bm{u}_{j};\bm{\epsilon}_{j}))}$.}$ To achieve the decoupling between inner an outer expectations, we allow  $f_{\nu}(\bm{y}) = [\log \bm{y}]^{\mathsf{T}}\bm{e}_{\nu}$ with $\bm{e}_{\nu}$ being the $\nu$'th basis vector in $\mathbb{R}^n$, i.e., a vector of all zeros except a value of one in the $\nu$'th position. Sampling uniformly from the pool, we can finally write SIVI's optimisation problem in a compositional form as:
\begin{equation}
\label{Eq:Final}
    \min_{\bm{\theta}} \mathbb{E}_{\nu \sim \text{Uniform}[1,n]}\left[f_{\nu}\left(\mathbb{E}_{\bm{\hat{\epsilon}} \sim q(\bm{\epsilon})}[\bm{g}_{\bm{\hat{\epsilon}}}(\bm{\theta})]\right)\right] \equiv \min_{\bm{\theta}} \mathcal{L}(\bm{\theta}), 
\end{equation}
with $f_{\nu}(\bm{y}) = [\log \bm{y}]^{\mathsf{T}}\bm{e}_{\nu}$ and $\bm{g}_{\bm{\hat{\epsilon}}} (\bm{\theta}) = [\mathcal{J}_{\bm{\mu}_{1}, \bm{\hat{\epsilon}}}(\bm{\theta}), \dots, \mathcal{J}_{\bm{\mu}_{n}, \bm{\hat{\epsilon}}}(\bm{\theta})]^{\mathsf{T}}$. Clearly, our problem becomes exact only when assuming an infinite number of samples, i.e., $n \rightarrow \infty$. As such, one would naturally choose $n$ to be large-enough for a small variance estimator of the loss. This, in turn, adds complexity in designing a solver that now has to handle \emph{nested expectations and high-dimensional   regimes.} In the next section, we introduce such an algorithm through a novel combination of extrapolation-smoothing and gradient sketching mechanisms.

\subsection{An adaptive solver}\label{Sec:Solver}
When designing a solver for the optimisation problem in Equation~\ref{Eq:Final}, we consider three essential criteria. First, we would like a simple-to-implement (i.e., single loop) yet effective and scalable algorithm. Second, we aim to have a bias-controlling procedure\footnote{Here, it is to be understood that bias is due to estimating nonlinear nested expectations.} and third, we need a rigorous and theoretically-grounded solver. When surveying optimisation literature, we realise that a promising direction is a first-order method as opposed to zero~\cite{gabillon2019derivative} or second-order~\cite{TutunovBJ16} ones. Such a realisation is grounded in the fact that first-order methods only require gradient information, are typically simple to implement through a single-loop, and perform competitively in large-scale machine learning applications~\cite{ADAgrad, kingma2014adam, RMSProp}. Among first-order methods, one can further categorise adaptive~\cite{ADAgrad, RMSProp, zeiler2012adadelta} and momentum-based~\cite{APG, Nesterov} algorithms. In spite of numerous theoretical developments~\cite{allenzhu2016variance, fang2018spider, liu2018stochastically}, ADAM – an adaptive optimiser originally proposed in~\cite{kingma2014adam}, and then theoretically grounded in~\cite{OnConvAdamand_Beyond, NonAdaptiveGR} – (arguably) retains state-of-the-art status. Therefore, following an adaptive-like update scheme to solving the problem in Equation~\ref{Eq:Final} promises ease of implementation and scalability to real-world scenarios. 

Though meeting two out of the three criteria above, a simple adaptation of standard optimisation techniques to a compositional problem of the form in Equation~\ref{Eq:Final} is challenging due to the bias incurred from a naive Monte-Carlo sampling of non-linear nested expectations; see~\cite{NestedMC} for a detailed discussion. Of course, such a problem is not unique to this paper and has been previously studied in ~\cite{NestedMC, Mendi_2017}. Current methods, however, are either not-scalable to high-dimensional large-data problems~\cite{davis2007methods, nakatsukasa2018approximate} (e.g., require full gradients -- over all data -- for variance reduction), or solve a relaxed version that presumes a finite sum empirical-risk-minimisation\footnote{Please note that relaxed empirical-risk versions target a different problem all-together~\cite{Mendi_2017}. In other words, solving a finite-sum approximation does not guarantee convergence for the nested expectation problem presented in Equation~\ref{Eq:Final}.} problem~\cite{NestedMC, Mendi_2017}. To meet our requirements, we next present a novel solver that combines extrapolation-smoothing for bias-reduction and gradient-sketching for efficiency and scalability. It is worth noting that our optimiser shares similarities to the work by~\cite{Rasul} but refines analysis to derive bias-handling results (Equation~\ref{Eq:Bias}) and introduces additional constructs (e.g., gradient-sketching mechanisms). Such additions require new proof foundations and re-derivations that we present in the appendix for completeness. 
\paragraph{Algorithmic development} We aim to offer an adaptive algorithm that exhibits similar (theoretical and practical) performance guarantees to ADAM but that is also capable of correctly handling the bias inherent to the problem in Equation~\ref{Eq:ELBO}. To do so, we introduce an update scheme that resembles ADAM but incorporates auxiliary variables that are updated in a subsequent step. Being at an iteration $t$, our algorithm first executes the following updates:
\small
\begin{empheq}[innerbox=\fbox,
left=\text{Primary}\Rightarrow]{align*}
 &\bm{m}_{t}  = \gamma_{t}^{(1)}\bm{m}_{t-1} + \left(1-\gamma_{t}^{(1)}\right)\overline{\nabla \mathcal{L}(\bm{\theta}_{t})} \ , \ \bm{v}_{t} = \gamma_{t}^{(2)}\bm{v}_{t-1} + \left(1 - \gamma_{t}^{(2)}\right)[\overline{\nabla \mathcal{L}(\bm{\theta}_{t})}]^{2} \\ \nonumber
    &\hspace{12em}\rightsquigarrow \bm{\theta}_{t+1} = \bm{\theta}_{t} - \alpha_{t}  \frac{\bm{m}_{t}}{\sqrt{\bm{v}_{t}} + \xi}, \\ \nonumber
    &\hspace{0em}\text{with $\alpha_t$ being a learning rate, $\xi \in \mathbb{R}_{>0}$, $\gamma_{t}^{(1)}$ and $\gamma_{t}^{(2)}$ denoting hyper-parameters.}
\end{empheq}
\normalsize
Assuming the availability of sub-sampled gradients of the loss (i.e., $\overline{\nabla \mathcal{L}(\bm{\theta}_{t})}$), the above set of instructions simply performs an ADAM-like update on the model's free parameters\footnote{It is worth noting that later in our theoretical analysis we provide a rigorous scheme for tuning all hyper-parameters. We also follow such a schedule in our experiments.} $\bm{\theta}$ starting from an initialisation for $\bm{m}_{t}$ and $\bm{v}_{t}$. The problem, however, arises when aiming to acquire unbiased gradients of the objective in Equation~\ref{Eq:Final}~\cite{NestedMC, UIVI, SIVI}. To illustrate this, consider computing the actual gradient of $\mathcal{L}(\bm{\theta})$ at some iteration $t$. This can be written as: 
$\nabla \mathcal{L}(\bm{\theta}_{t}) = \mathbb{E}_{\bm{\hat{\epsilon}}}[\nabla \bm{g}_{\bm{\hat{\epsilon}}}(\bm{\theta}_{t})]^{\mathsf{T}}\mathbb{E}_{\nu}[\nabla f_{\nu}\left(\mathbb{E}_{\bm{\hat{\epsilon}}}[\bm{g}_{\bm{{\hat{\epsilon}}}}(\bm{\theta}_{t})]\right)]$. It is clear that one can easily implement a Monte-Carlo estimator of the first part of the gradient, i.e., $\mathbb{E}_{\bm{\hat{\epsilon}}}[\nabla \bm{g}_{\bm{\hat{\epsilon}}}(\bm{\theta}_{t})]$. The second term, on the other hand, is much harder to estimate due to its nested nature:
\begin{equation}
\label{Eq:SecondPart}
    \mathbb{E}_{\nu}[\nabla f_{\nu}\left(\mathbb{E}_{\bm{\hat{\epsilon}}}[\bm{g}_{\bm{{\epsilon}}_{2}}(\bm{\theta}_{t})]\right)] = \mathbb{E}_{\nu}\left[\frac{1}{\left(\mathbb{E}_{\bm{\hat{\epsilon}}}[\bm{g}_{\bm{\hat{\epsilon}}}(\bm{\theta}_{t})]\right)_{\nu}}\right], \ \ \text{where $(\bm{v})_{\nu}$ denotes the $\nu^{th}$ component of $\bm{v}$.}
\end{equation}
Of course, a simple Monte-Carlo estimator\footnote{We mean by a simple estimator the following: $\mathbb{E}_{\nu}\left[\frac{1}{\left(\mathbb{E}_{\bm{\hat{\epsilon}}}[\bm{g}_{\bm{\hat{\epsilon}}}(\bm{\theta}_{t})]\right)_{\nu}}\right] \approx \frac{1}{N}\sum_{i=1}^{N} \left[\frac{1}{\left(\frac{1}{M}\sum_{j=1}^{M} \bm{g}_{j}(\bm{\theta}_{t})\right)_{i}}\right]$.} of the gradient's second part is biased. Our methodology in tackling this challenge is to find an ``unbiased'' \textit{approximation} with properties allowing us to control such a bias at appropriate rates. To do so, we follow an extrapolation-smoothing scheme~\cite{Rasul, Mendi_2017, Mengdi} originally established in time series~\cite{ExtrSmooth} and differential equations literature~\cite{ExtraSmooth}. These methods introduce a two-step procedure to approximate an unknown quantity, e.g., $\mathbb{E}_{\nu}[\nabla f_{\nu}\left(\mathbb{E}_{\bm{\hat{\epsilon}}}[\bm{g}_{\bm{{\hat{\epsilon}}}}(\bm{\theta}_{t})]\right)]$ in our case. In the first step, a linear extrapolation query vector, $\bm{z}$, is computed while in the second, a smoothed average is evaluated around the extrapolated $\bm{z}$. Precisely, given two model parameter updates $\bm{\theta}_{t}$ and $\bm{\theta}_{t+1}$ we execute the following: 
\small
\begin{empheq}[innerbox=\fbox,
left=\text{Auxiliary}\Rightarrow]{align*}
& \bm{z}_{t+1} = \underbrace{\left(1 -\sfrac{1}{\beta}_{t}\right)\bm{\theta}_{t} + \sfrac{1}{\beta_{t}}\bm{\theta}_{t+1}}_{\text{Extrapolation}} \ \ \text{and} \ \   \bm{y}_{t+1} = \underbrace{(1-\beta_{t})\bm{y}_{t} + \beta_{t}\overline{\bm{g}_{t}(\bm{z}_{t+1})}}_{\text{Smoothing}}, \\
& \text{with $\beta_{t}$ being a free parameter, and $\overline{\bm{g}_{t}(\bm{z}_{t+1})}$ is a sampled estimator of $\mathbb{E}_{\bm{\hat{\epsilon}}}[{\bm{g}}_{\bm{\hat{\epsilon}}}\left(\bm{z}_{t+1}\right)]$}.
\end{empheq}
\normalsize
Simply, the smoothing step is attempting to track $\mathbb{E}_{\bm{\hat{\epsilon}}}[{\bm{g}}_{\bm{\hat{\epsilon}}}\left(\bm{z}_{t+1}\right)]$ which can then be substituted in Equation~\ref{Eq:SecondPart} to approximate the second term of the gradient. Interestingly, we evaluate this smoothing step around a linearly extrapolated vector $\bm{z}_{t+1}$ and not only on the updated model parameters $\bm{\theta}_{t+1}$. Though an evaluation around $\bm{\theta}_{t+1}$ can guarantee convergence~\cite{Mendi_2017}, we demonstrate that following the above extrapolation scheme leads to faster convergence rates by enabling a better control of the bias. Informally, we can, under further technical consideration, demonstrate that the difference between true and sub-sampled gradients abides by: 
\begin{equation}
\label{Eq:Bias}
    \mathbb{E}_{\text{total}}[||\nabla \mathcal{L}(\bm{\theta}_{t}) - \overline{\nabla \bm{g}_{t}(\bm{\theta}_{t})}^{\mathsf{T}}\overline{\nabla f_{t}(\bm{y}_{t})} ||_{2}^{2}] \approx \mathcal{O}(t^{-\sfrac{4}{5}}),
\end{equation}
where $\overline{\nabla \bm{g}_{t}(\bm{\theta}_{t})}$ and $\overline{\nabla f_{t}(\bm{y}_{t})}$ denote estimate gradients of the functions $\bm{g}_{\bm{\hat{\epsilon}}}(\cdot)$ and $f_{\nu}(\cdot)$, and $\mathbb{E}_{\text{total}}[\cdot]$ the expectation under all incurred randomness in the algorithm. Importantly, the result in Equation~\ref{Eq:Bias} shows that the bias in our gradient estimator vanishes with an increased number of iterations. This, in turn, allows us to prove convergence of the resulting algorithm as studied in Section~\ref{Sec:TheoGurantees}. We now introduce the complete algorithm combining both of the above primary and auxiliary steps. To that end, we assume a schedule of learning rates $\eta\text{-schedule} = \{\langle \alpha_{t}, \beta_{t},\gamma^{(1)}_{t}, \gamma_{t}^{(2)}\rangle\}_{t=1}^{T}$, and a time-varying set of batch-sizes $K\text{-size} = \{\langle K_{t}^{(1)}, K_{t}^{(2)}, K_{t}^{(3)} \rangle\}$ needed to mini-batch $\nabla f_{\nu}(\cdot)$, $\nabla \bm{g}_{\bm{\hat{\epsilon}}}(\cdot)$, and $\mathbb{E}_{\bm{\hat{\epsilon}}}[\bm{g}_{\bm{\hat{\epsilon}}}(\cdot)]$ respectively. Our computations are achieved through two oracles that can return gradients and function values when needed\footnote{Please note that in Section~\ref{Sec:TheoGurantees} we provide explicit schedules for each of the hyper-parameters introduced.}: 
\begin{align*}
    \text{Oracle}_{f}\left(\bm{y}_{t}, K_{t}^{(1)}\right) &= \{\langle \nu_{t_{i}}, \nabla f_{\nu_{t_{i}}}(\bm{y}_{t})\rangle\}_{i=1}^{K_{t}^{(1)}}, \ \ \text{with $\{\nu_{t_{i}}\}_{i=1}^{K_{t}^{(1)}}$ being i.i.d.} \\
    \text{Oracle}_{g}\left(\bm{z}_{t}, K_{t}^{(2)}\right) &= \{\langle \bm{\hat\epsilon}_{t_{i}}, \bm{g}_{\bm{\hat{\epsilon}}_{ t_{i}}}(\bm{z}_{t}), \nabla \bm{g}_{\bm{\hat{\epsilon}}_{ t_{i}}}(\bm{z}_{t})\rangle\}_{i=1}^{K_{t}^{(2)}}, \ \ \text{with $\{\bm{\hat{\epsilon}}_{t_{i}}\}_{i=1}^{K_{t}^{(2)}}$ also i.i.d.}
\end{align*}

In addition, we define $\delta \in (0,1)$ used to measure solution accuracy and a small positive constant $\xi$ for numerical stability. The overall procedure is practical requiring only one implementation loop and is summarised in Algorithm~\ref{Algo:ADAM}. It operates in three main steps. In the first, sub-sampled gradients are computed by calling $\text{Oracle}_{f}(\cdot)$ and $\text{Oracle}_{g}(\cdot)$. When estimated, the second step computes primary updates leading to improved model parameters $\bm{\theta}_{t+1}$. Given $\bm{\theta}_{t}$ and $\bm{\theta}_{t+1}$, the third step executes extrapolation-smoothing to update an estimate of $\mathbb{E}_{\bm{\hat{\epsilon}}}[\bm{g}_{\bm{\hat{\epsilon}}}(\cdot)]$ guaranteeing vanishing bias with increased iterations. It is to be noted that the smoothing step requires an additional call to $\text{Oracle}_{g}(\cdot)$ to sample $\overline{\bm{g}_{t}(\bm{z}_{t+1})}$. The updated smoothed variable $\bm{y}_{t+1}$ is then used in subsequent iterations where the overall process repeats. 
\begin{algorithm}[h!]
\caption{CI-VI: Compositional Implicit Variational Inference}
\label{Algo:ADAM}
\begin{algorithmic}

\STATE \textbf{Inputs:} Initial variable $\bm{\theta}_{1}$, $\delta \in (0, 1)$, $\xi$, $T= \mathcal{O}(\delta^{-\sfrac{5}{4}})$, $\eta\text{-schedule}$ and $K\text{-size}$

\STATE \textbf{Initialisation:} Initialise $\bm{z}_{1} = \bm{\theta}_{1}$, $\bm{y}_{1} = \bm{0}$, and $\bm{m}_{0} = \bm{v}_{0} = 0$

\STATE \textbf{for} $t=1$ to $T$ \textbf{do:}

\STATE \hspace{1em} \color{orange} Compute sub-sampled gradients by calling oracles: \hspace{5em} 	$\triangleright \ \text{Gradients}$ 

\begin{equation*}
\begin{alignedat}{2}
\overline{\nabla f_{t}(\bm{y}_{t})}  & = \sfrac{1}{K_{t}^{(1)}} \sum_{i=1}^{K_{t}^{(1)}} \nabla f_{\nu_{t_{i}}}(\bm{y}_{t})
&\quad&\hspace{5em}\raisebox{-2\normalbaselineskip}[1pt][1pt]{
      \text{$\implies \overline{\nabla \mathcal{L}(\bm{\theta}_{t})} = \overline{\nabla \bm{g}_{t}(\bm{\theta}_{t})}^{\mathsf{T}}
      \overline{\nabla f_{t}(\bm{y}_{t})}$}
}
\\ \overline{\nabla \bm{g}_{t}(\bm{\theta}_{t})}  & = \sfrac{1}{K_{t}^{(2)}} \sum_{j=1}^{K_{t}^{(2)}} \nabla \bm{g}_{\bm{\hat\epsilon}_{t_{j}}}(\bm{\theta}_{t})
\end{alignedat}
\end{equation*}

\STATE \hspace{1em} 	\color{red} Perform the following primary updates: \hspace{10em}  $\triangleright \ \text{Primary Update}$  \begin{equation*}
\begin{alignedat}{2}
\bm{m}_{t}  & = \gamma_{t}^{(1)}\bm{m}_{t-1} + \left(1-\gamma_{t}^{(1)}\right)\color{orange}\overline{\nabla \mathcal{L}(\bm{\theta}_{t})}
&\quad&\raisebox{-.6\normalbaselineskip}[1pt][1pt]{
     \text{$\implies \bm{\theta}_{t+1} = \bm{\theta}_{t} - \alpha_{t} \frac{\bm{m}_{t}}{\sqrt{\bm{v}_{t}}+\xi}$}
} \\
\bm{v}_{t} &= \gamma_{t}^{(2)}\bm{v}_{t-1} + \left(1 - \gamma_{t}^{(2)}\right)\color{orange}[\overline{\nabla \mathcal{L}(\bm{\theta}_{t})}]^{2}
\end{alignedat}
\end{equation*}
\STATE \color{blue} \hspace{1em} Perform the following auxiliary updates: \hspace{10em} 	$\triangleright \ \text{Auxiliary Update}$  \begin{equation*}
\bm{z}_{t+1} = \underbrace{\left(1 -\sfrac{1}{\beta}_{t}\right)\bm{\theta}_{t} + \sfrac{1}{\beta_{t}}\bm{\theta}_{t+1}}_{\text{Extrapolation}} \ \ \text{and} \ \   \bm{y}_{t+1} = \underbrace{(1-\beta_{t})\bm{y}_{t} + \beta_{t}\overline{\bm{g}_{t}(\bm{z}_{t+1})}}_{\text{Smoothing}}
\end{equation*}
\STATE \color{black} \textbf{Output:} Return solution as a uniform sample from $\left\{\bm{\theta}_{t}\right\}_{t=1}^{T}$
\end{algorithmic}
\end{algorithm}
\paragraph{On practicability} Algorithm~\ref{Algo:ADAM}, though successful, assumes an idealised setting in which computing products of gradient estimates is feasible. In most reasonably-sized problems, however, such products are prohibitively expensive due to the problem's dimensionality and number of samples available (e.g., order of thousands). To remedy this problem, we next present a matrix-sketching mechanism (not introduced previously in both variational inference and compositional optimisation literature) from randomised linear algebra~\cite{RAlgebra} enabling scalability. Here, we simply replace gradient computations (in orange) in Algorithm~\ref{Algo:ADAM} with the set of instructions\footnote{In Algorithm~\ref{Algo:ADAM2}, we use $\bm{A}(:, k)$ to denote the $k^{th}$ column of some matrix $\bm{A}$, and $\boldsymbol{b}(k)$ the $k^{th}$ component of a vector $\bm{b}$.} in Algorithm~\ref{Algo:ADAM2} that eases implementation in that it allows for randomised matrix-vector products which come-in handy in big-data problems. Rather than needing to sum overall entries in the corresponding product, we can now simply enable a batch-like version through sketching. Chiefly, as we show in Section~\ref{Sec:Exp}, we can further specialise Algorithm~\ref{Algo:ADAM2} by accounting for the sparsity pattern of our gradients for improved adjustability to the SIVI setting. 
\begin{algorithm}[h!]
\caption{Gradient-Sketching for Large-Scale SIVI}
\label{Algo:ADAM2}
\begin{algorithmic}
\STATE \textbf{Inputs:} $\text{Oracle}_{f}(\bm{y}_{t}, K_{t}^{(1)})$, $\text{Oracle}_{g}(\bm{\theta}_{t}, K_{t}^{(2)})$, \#-samples $d_{t} \in [1,n]$
\STATE Sample subset $S_t\subseteq [1,\ldots, n]$ of size $d_t$, where $\mathbb{P}\text{r}(k\in S_t) = \sfrac{1}{n}$ for all $k=1,\ldots,n$.
\STATE \textbf{Return:} $\overline{\nabla \mathcal{L}(\bm{\theta}_{t})} = \sfrac{1}{K^{(1)}_t}\sum_{a=1}^{K^{(1)}_t}[\sfrac{q}{d_t}\sum_{k\in S_t}\nabla \boldsymbol{g}_{\bm{\hat\epsilon}_{ t_a}}^{\mathsf{T}}(\bm{\theta}_t)(:, k)\nabla f_{\nu_{t_a}}(\boldsymbol{y}_t)(k)]$ 
\end{algorithmic}
\end{algorithm}

Of course, sketching mechanisms can affect speeds of convergence due to additionally induced randomness. To gain insight into such phenomena, we next provide rigorous theoretical guarantees demonstrating convergence of the resulting algorithm (i.e., Algorithm~\ref{Algo:ADAM} with Sketching for gradient computation) and quantifying oracle complexities.

\paragraph{Theoretical guarantees} \label{Sec:TheoGurantees}
We demonstrate that Algorithm~\ref{Algo:ADAM} converges to a stationary point of the \emph{non-convex objective}\footnote{Non-convexity is abundant in our problem due to the usage of neural networks. In these scenarios, one aims at a stationary point as even assessing a local-minimum is NP-Hard~\cite{NonConvex}.} in Equation~\ref{Eq:Final}. As is standard in optimisation literature, we provide our results in terms of the number of oracle calls needed to convergence\footnote{Please note that analysing other metrics, e.g., generalisation bounds is an interesting avenue for future research.}. Due to space constraints, we defer the proof to the appendix. Here, we provide the statement of the main theorem. Our main results are based on the following common assumptions: 
\begin{assumption}\label{assum_1}
We make the following assumptions\footnote{Please note that for additional clarity, our assumptions are further elaborated in the appendix.}: 1) $\left|f_{\nu}(\bm{y})\right| \leq B_{f}$, and $\left|\left|\nabla f_{\nu}(\bm{y})\right|\right| \leq M_f$, for all $\bm{y}$ and $\nu$; 2) $f_{\nu}(\cdot)$ is $L_{f}$-smooth, and $\bm{g}_{\bm{\hat{\epsilon}}}(\bm{x})$ is $M_{g}$- Lipschitz continuous, and $L_{g}$-smooth; 3) oracle sample-pairs are independent; and 4) oracles return unbiased gradient estimates with bounded variances.  
\end{assumption}

Now, we present the main theorem analysing convergence and oracle complexities of Algorithm~\ref{Algo:ADAM}: 
\begin{theorem}[Convergence \& Oracle Complexities]
Consider a parameter setup given by: $\alpha_{t} = \sfrac{C_{\alpha}}{t^{\frac{1}{5}}}, \beta_{t} = C_{\beta}, K_{t}^{(1)} = C_{1}t^{\frac{4}{5}}, K_{t}^{(2)} = C_{2}t^{\frac{4}{5}},  K_{t}^{(3)}  = C_{3} t^{\frac{4}{5}}, \gamma_{t}^{(1)}= C_{\gamma}\mu^{t}, \ \gamma_{2}^{(t)}  = 1 - \sfrac{C_{\alpha}}{t^{\frac{2}{5}}}(1 - C_{\gamma}\mu^{t})^{2}$, for some positive constants $C_{\alpha}, C_{\beta}, C_{1}, C_{2}, C_{3}, C_{\gamma}, \mu$ such that $C_{\beta} < 1$ and $\mu \in (0,1)$. For any $\delta \in (0,1)$, Algorithm~\ref{Algo:ADAM} running gradient-sketching (i.e., using Algorithm \ref{Algo:ADAM2} to compute gradient products) with a sample-size $d_{t} = \mathcal{O}(1)$ outputs, in expectation, a $\delta$-approximate first-order stationary point $\tilde{\bm{\bm{\theta}}}$ of $\mathcal{L}(\bm{\bm{\theta}})$. That is: $\mathbb{E}_{\textrm{total}}[||\nabla\mathcal{L}(\tilde{\bm{\theta}})||_{2}^{2}] \leq \delta$, with ``total'' representing \emph{all} incurred randomness. Moreover, Algorithm~\ref{Algo:ADAM} acquires $\tilde{\bm{\theta}}$ with an overall oracle complexity of the order $\mathcal{O}\left(\delta^{-\sfrac{9}{4}}\right)$. 
\end{theorem}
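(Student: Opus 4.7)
The plan is to bound the expected average squared gradient norm $\tfrac{1}{T}\sum_{t=1}^T \mathbb{E}_{\text{total}}[\|\nabla\mathcal{L}(\bm{\theta}_t)\|_2^2]$ by combining a smoothness-based descent inequality with the vanishing-bias guarantee promised in Equation~\ref{Eq:Bias}, then tune $T$ so the average falls below $\delta$ (since $\tilde{\bm\theta}$ is drawn uniformly from the iterates). From Assumption~\ref{assum_1}, $\mathcal{L}$ is $L$-smooth with $L=L_f M_g^2 + M_f L_g$; applying the standard smoothness descent lemma to $\mathcal{L}(\bm{\theta}_{t+1})$ and substituting the primary ADAM-style step $\bm{\theta}_{t+1}-\bm{\theta}_t=-\alpha_t \bm{m}_t/(\sqrt{\bm{v}_t}+\xi)$ gives a per-step decrease of the form $-\alpha_t\langle \nabla\mathcal{L}(\bm{\theta}_t),\bm{m}_t/(\sqrt{\bm{v}_t}+\xi)\rangle$ plus a quadratic term of order $\alpha_t^2$. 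The goal will be to turn the linear term into $-\tilde c\,\alpha_t\|\nabla\mathcal{L}(\bm{\theta}_t)\|_2^2$ plus controllable error.

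The key lemmas I would establish are three. First, an upper/lower bound on the adaptive preconditioner: using $\gamma_t^{(2)}=1-\tfrac{C_\alpha}{t^{2/5}}(1-C_\gamma\mu^t)^2$ together with the boundedness of $\overline{\nabla\mathcal{L}}$ that follows from Assumption~\ref{assum_1}(1,2,4), show $\xi\le\sqrt{\bm{v}_t}+\xi\le V_{\max}$ deterministically so the denominator behaves like an ``almost constant'' factor and the ADAM step essentially collapses to a gradient-descent type step up to higher-order error. Second, the momentum tracking bound: decompose $\bm{m}_t-\nabla\mathcal{L}(\bm{\theta}_t)$ via the EMA recursion to write it as a sum of (i) a geometrically vanishing initial bias, (ii) a variance term controlled by the i.i.d.\ oracle noise and the sketching noise, and (iii) a ``moving target'' term controlled by $\|\bm{\theta}_t-\bm{\theta}_{t-1}\|=\mathcal{O}(\alpha_t)$. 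Third, and most importantly, the extrapolation-smoothing bias bound $\mathbb{E}_{\text{total}}[\|\bm{y}_t-\mathbb{E}_{\bm{\hat\epsilon}}[\bm{g}_{\bm{\hat\epsilon}}(\bm{\theta}_t)]\|_2^2]=\mathcal{O}(t^{-4/5})$: iterating the smoothing recursion $\bm{y}_{t+1}=(1-\beta_t)\bm{y}_t+\beta_t\overline{\bm{g}_t(\bm{z}_{t+1})}$, the extrapolated query $\bm{z}_{t+1}$ is chosen precisely so the deterministic drift $\mathbb{E}_{\bm{\hat\epsilon}}[\bm{g}_{\bm{\hat\epsilon}}(\bm{z}_{t+1})]-\mathbb{E}_{\bm{\hat\epsilon}}[\bm{g}_{\bm{\hat\epsilon}}(\bm{\theta}_{t+1})]$ cancels to leading order by a Taylor expansion against the smoothing weight $\beta_t$; what remains is a martingale-difference variance of order $\beta_t^2/K_t^{(3)}=\mathcal{O}(t^{-4/5})$ plus an $L_g$-smoothness remainder of order $\alpha_t^2/\beta_t^2=\mathcal{O}(t^{-2/5})$, and combining these via the discrete Gronwall-type inequality produces the claimed rate. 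Chain-ruling through the $L_f$-smoothness of $f_\nu$ then delivers Equation~\ref{Eq:Bias} verbatim.

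With these in hand, I rearrange the descent inequality, take total expectation, and sum over $t=1,\dots,T$ to get $\sum_{t=1}^T \alpha_t\,\mathbb{E}_{\text{total}}[\|\nabla\mathcal{L}(\bm{\theta}_t)\|_2^2]\le\mathcal{L}(\bm{\theta}_1)-\mathcal{L}^\ast+\sum_{t=1}^T(\alpha_t \cdot \mathcal{O}(t^{-4/5})+\alpha_t^2\cdot\mathcal{O}(1))$. With the schedule $\alpha_t=C_\alpha t^{-1/5}$, the bias contribution sums as $\sum_t t^{-1}=\mathcal{O}(\log T)$ and the quadratic contribution as $\sum_t t^{-2/5}=\mathcal{O}(T^{3/5})$; meanwhile $\sum_t \alpha_t=\Theta(T^{4/5})$. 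Dividing and recalling $\tilde{\bm\theta}$ is sampled uniformly gives $\mathbb{E}_{\text{total}}[\|\nabla\mathcal{L}(\tilde{\bm\theta})\|_2^2]=\mathcal{O}(T^{-1/5}+T^{-4/5}\log T)=\mathcal{O}(T^{-1/5})$. Setting this below $\delta$ forces $T=\mathcal{O}(\delta^{-5/4})$, and the total oracle calls $\sum_{t=1}^T(K_t^{(1)}+K_t^{(2)}+K_t^{(3)})=\mathcal{O}(T\cdot T^{4/5})=\mathcal{O}(T^{9/5})=\mathcal{O}(\delta^{-9/4})$, as claimed. The sketching step with $d_t=\mathcal{O}(1)$ preserves unbiasedness of $\overline{\nabla\mathcal{L}(\bm{\theta}_t)}$ (because each coordinate is drawn with uniform probability) and only inflates the variance by a bounded factor, so it enters the analysis as a constant-order additive contribution that is absorbed into the existing variance bound.

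The main obstacle will be the extrapolation-smoothing tracking bound in item three above, because one must delicately balance the Taylor remainder from $\bm{z}_{t+1}$ (which scales like $\alpha_t^2/\beta_t^2$) against the stochastic noise injected by $\overline{\bm{g}_t(\bm{z}_{t+1})}$ (which scales like $\beta_t/K_t^{(3)}$) while simultaneously accommodating the drift in $\bm{\theta}_t$ introduced by the ADAM denominator $\sqrt{\bm{v}_t}+\xi$; this coupling is what forces the specific $t^{1/5}, t^{4/5}, t^{2/5}$ exponents in the schedule and is the technical core of the argument. A secondary, but not trivial, difficulty is verifying that the adaptive preconditioner does not invalidate the descent lemma — this needs the uniform two-sided bound on $\sqrt{\bm{v}_t}+\xi$ together with a careful decomposition of $\langle \nabla\mathcal{L}(\bm{\theta}_t),\bm{m}_t/(\sqrt{\bm{v}_t}+\xi)-\nabla\mathcal{L}(\bm{\theta}_t)/(\sqrt{\bm{v}_t}+\xi)\rangle$ using the momentum tracking lemma, so that the contraction factor multiplying $\|\nabla\mathcal{L}(\bm{\theta}_t)\|_2^2$ stays strictly positive uniformly in $t$.
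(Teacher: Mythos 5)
Your overall architecture matches the paper's: an $L$-smooth descent inequality with $L=M_g^2L_f+M_fL_g$, a two-sided bound on the ADAM denominator, a tracking lemma for $\bm{y}_t$ built on the exact first-order cancellation induced by the extrapolation, and a uniform-output argument. However, there is a genuine gap in your final rate computation that breaks the claimed complexity. You bound the second-order term of the descent lemma as $\alpha_t^2\cdot\mathcal{O}(1)$, so your summed error is $\sum_t \alpha_t^2=\mathcal{O}(T^{3/5})$, and after dividing by $\sum_t\alpha_t=\Theta(T^{4/5})$ you correctly obtain $\mathbb{E}[\|\nabla\mathcal{L}(\tilde{\bm\theta})\|_2^2]=\mathcal{O}(T^{-1/5})$ --- but then $T^{-1/5}\le\delta$ forces $T=\mathcal{O}(\delta^{-5})$ and an oracle complexity of $\mathcal{O}(\delta^{-9})$, not the claimed $T=\mathcal{O}(\delta^{-5/4})$ and $\mathcal{O}(\delta^{-9/4})$. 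The step from $T^{-1/5}\le\delta$ to $T=\mathcal{O}(\delta^{-5/4})$ is simply false. The paper avoids this by never leaving an $\alpha_t^2\cdot\mathcal{O}(1)$ residue: the term $\tfrac{L}{2}\alpha_t^2\,\mathbb{E}_t\|\overline{\nabla\mathcal{L}(\bm{\theta}_t)}\|^2$ (and the analogous $\mathcal{O}(\alpha_t)\sqrt{1-\gamma_t^{(2)}}$ cross-term, which is why $\gamma_t^{(2)}$ is tuned so that $\sqrt{1-\gamma_t^{(2)}}=\mathcal{O}(\alpha_t)$) is decomposed into (i) a multiple of $\|\nabla\mathcal{L}(\bm{\theta}_t)\|^2$ that is absorbed into the negative descent term for $C_\alpha$ small, (ii) a variance piece of order $1/K_t^{(1)}=\mathcal{O}(t^{-4/5})$, and (iii) a piece proportional to $\|\bm{y}_t-\bm{g}(\bm{\theta}_t)\|^2=\mathcal{O}(t^{-4/5})$. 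Your proposal never uses the growing batch sizes $K_t^{(1)},K_t^{(2)}\sim t^{4/5}$ inside the descent analysis, which is exactly the ingredient needed to make every normalised per-step error $\mathcal{O}(t^{-4/5})$ and hence obtain the $T^{-4/5}$ rate.

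A secondary imprecision sits in your tracking bound: you describe the $L_g$-smoothness remainder as being ``of order $\alpha_t^2/\beta_t^2=\mathcal{O}(t^{-2/5})$'' and then invoke a Gronwall argument to conclude $\mathbb{E}\|\bm{y}_t-\bm{g}(\bm{\theta}_t)\|_2^2=\mathcal{O}(t^{-4/5})$. As stated this does not follow: a $t^{-2/5}$ additive term in the recursion for the squared error yields only a $t^{-2/5}$ bound. The paper's argument works because the quantity $\mathcal{D}_t=\sum_j\theta_j^{(t)}\|\bm{z}_{j+1}-\bm{\theta}_{t+1}\|_2^2=\mathcal{O}(t^{-2/5})$ is a \emph{first-order} (unsquared) remainder coefficient that enters the squared tracking error as $\tfrac{L_g^2}{2}\mathcal{D}_t^2=\mathcal{O}(t^{-4/5})$; you need to make this squaring explicit, otherwise the $\mathcal{O}(t^{-4/5})$ bias rate in Equation~\ref{Eq:Bias} is not established. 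The remaining elements of your sketch (unbiasedness of the sketched gradient, the bounded variance inflation from $d_t=\mathcal{O}(1)$, and the final oracle count $\sum_t K_t=\mathcal{O}(T^{9/5})$) are consistent with the paper once the two issues above are repaired.
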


\vspace{-1em}
\section{Experiments and results}\label{Sec:Exp}
In this section, we present an empirical study demonstrating the effectiveness of CI-VI that we implement in PyTorch~\cite{pytorch}. We benchmark on three broad tasks covering toy examples, Bayesian logistic regression and variational autoencoders. Of course, our derivations need to be specialised to each of these tasks. We provide such constructs in the appendix due to space constraints. We compare against standard semi-implicit variational inference algorithms (e.g., SIVI and UIVI) in addition to methods from nested Monte-Carlo~\cite{NestedMC} and compositional optimisation~\cite{Mendi_2017, Mengdi}. To improve stability and computational efficiency, our implementation of CI-VI tracks log-gradients instead of $\overline{\nabla\mathcal{L}(\bm{\theta}_{t}})$. Furthermore, due to the special structures of $f_{\nu}(\cdot)$ (e.g., logarithmic function) and $g_{\hat{\bm{\epsilon}}}$, we can further improve gradient sketching by only (uniformly) sampling non-zero elements from $\nabla g_{\bm{\hat{\epsilon}}} (\cdot)$ rather than the whole $d_{t}$-set $S_{t}$ in Algorithm~\ref{Algo:ADAM2}. Due to space constraints, such adaptations in addition to exact experimental settings can be found in the appendix. We ran all experiments on a single NVIDIA GeForce RTX 2080 GPU. Crucially, CI-VI is highly efficient consuming 30 seconds for toy experiments and, at most, 1.5 hours for text modelling when using variational autoencoders.

\underline{\textbf{Toy Experiments:}} In this set of experiments, we apply our method to minimise a KL-divergence between semi-implicit and ground truth distributions on two-modal, star, and banana as taken from~\cite{UIVI}.
\begin{wrapfigure}{r}{0.6\textwidth}
  \begin{center}
    \includegraphics[trim = {1em, 23em, 1em, 21em}, clip=true , width=0.61\textwidth]{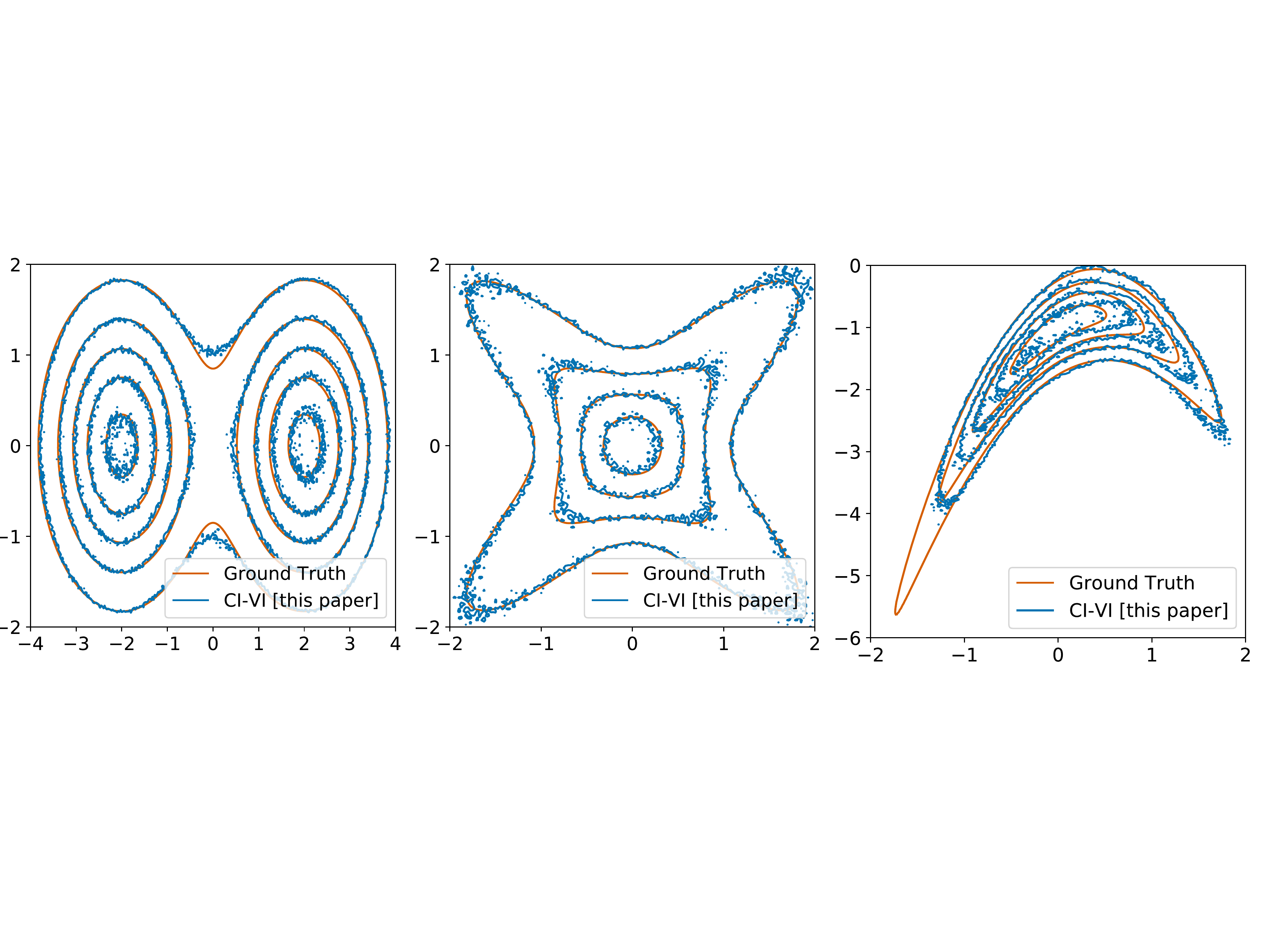}
  \end{center}
  \vspace{-5pt}
  \caption{Results demonstrating that CI-VI can approximate sophisticated distributions from~\cite{UIVI}. Left: Two-Modal distribution, Middle: Star distribution, and Right: Banana distribution.}
\end{wrapfigure}
For all semi-implicit distributions, we chose $q(\bm{\epsilon})$ to be a multi-variate zero-mean identity-covariance-matrix Gaussian, i.e., $q(\bm{\epsilon}) = \mathcal{N}(\bm{0}, \bm{I}_{3 \times 3})$. The conditional distribution $q_{\bm{\theta}}(\bm{z}|\bm{\epsilon})$, is also assumed Gaussian with a neural network (two layers 50 by 50 hidden units) parameterised mean and a parameterised diagonal covariance matrix, where $q_{\bm{\theta}}(\bm{z}|\epsilon) = \mathcal{N}(\bm{\mu}_{\bm{\theta}_{1}}(\bm{\epsilon}), \text{diag}(\bm{\theta}_{2}))$ with $\bm{\theta}_{1}$ denoting neural network parameters, and $\bm{\theta}_{2}$ another set of free parameters. Figures 1 compares the contour plots of the optimised variational distribution with ground-truth. We can clearly see that CI-VI can accurately capture sophisticated patterns like skewness, kurtosis and multi-modality.

\underline{\textbf{Bayesian Logistic Regression:}} With our method performing well on toy examples, we ran CI-VI in Bayesian logistic regression that aims to acquire posteriors on classification tasks. Given a data-set $\mathcal{D}=\{\bm{x}_i, y_i\}_{i=1}^N$ where $\bm{x}_i=( \bm{x}_{i1}, \dots, \bm{x}_{iD})^\mathsf{T}$ is a $D$-dimensional feature vector and $y_i \in \{0, 1\}$ a binary label, Bayesian logistic regression considers a probabilistic model $p(\mathcal{D}, z)=p(\bm{z})\prod_{i=1}^N p(y_i|\bm{x}_i, \bm{z})=p(\bm{z})\prod_{i=1}^N \text{Bernoulli}\left[(1+\exp(-\bm{x}_i^\mathsf{T}\bm{z}))^{-1}\right]$ and infers the posterior of $\bm{z}$ given the observed $\mathcal{D}$. We experimented with the Waveform, Spam, and Nodal data-sets from~\cite{SIVI}.
\begin{figure}
    \centering
    \includegraphics[trim = {1em, 1em, 1em, 1em}, clip=true, width=0.95\textwidth]{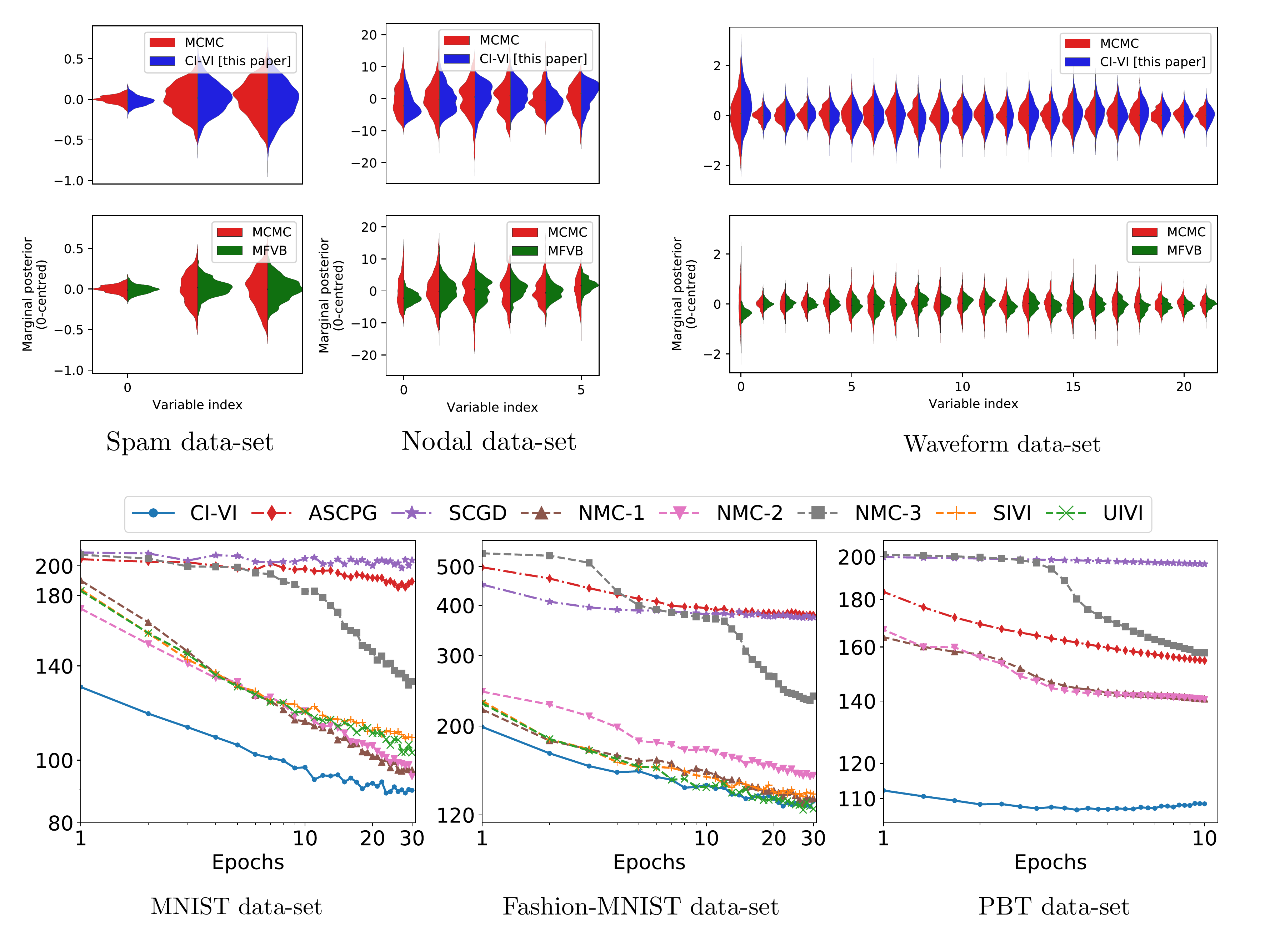}
    \caption{Results depicting performance of CI-VI on both Bayesian logistic regression (top 2 rows) and variational autoenconders (bottom row). We realise that CI-VI is closer in its estimation to MCMC than is mean-field variational Bayes (MFVB). On the variational autoencoders side, we realise the CI-VI outperforms others on MNIST, Fashion-MNIST, and PBT. We use NMC-1, NMC-2, NMC-3 to denote nested-Monte-Carlo algorithms that use ADAM~\cite{kingma2014adam}, RMS-Prop~\cite{OnConvAdamand_Beyond}, and SGD respectively.}
    \vspace{-0.5em}
    \label{fig:my_label}
\end{figure}
We fixed the prior $p(\bm{z})$ to be a zero-mean Gaussian given by: $p(\bm{z}) = \mathcal{N}(\bm{0}, 100\times\bm{I}_{D\times D})$. For the semi-implicit setting, $\bm{\epsilon}$ followed a standard Gaussian whose dimension varied across data-sets. $q_{\bm{\theta}}(\bm{z}|\bm{\epsilon})$ was again a Gaussian with parameterised mean (two layer neural network with 200 units each) but with a full covariance matrix: $q_{\bm{\theta}}(\bm{z}|\bm{\epsilon}) = \mathcal{N}(\bm{\mu}_{\bm{\theta}_{1}}(\bm{\epsilon}), \bm{L}_{\bm{\theta}_{2}}\bm{L}_{\bm{\theta}_{2}}^{\mathsf{T}})$. Due to space constraints, the full set of results can be found in the appendix. In the first two rows in Figures~1, we demonstrate violin plots on all three data-sets. Clearly CI-VI captures the variance better than Mean-Field Variational Bias (MFVB in the figure) when compared to MCMC distributions \footnote{Please note we also show a marginalised pair-wise posterior plot in the appendix.} across all latent variables. 

\underline{\textbf{Semi-implicit Variational Autoencoders:}} In our final evaluation, we extensively experimented with variational autoencoders \cite{VAE} but ones that exhibited semi-implicit variational distributions. In fact, it has been shown that upon the usage of semi-implicit variational distributions, the gap between the ELBO and marginal data likelihood can further be reduced~\cite{UIVI, SIVI}. We experimented with three data-sets, two of which are standard (MNIST and Fashion-MNIST), while the third considered a text modelling task with the Penn-Tree-Bank (PTB) as presented in~\cite{fuflexible}. All structural details in each of these scenarios can be found in the appendix. 
Our results depicted in Figures~2 (bottom-row) compare CI-VI with semi-implicit solvers~\cite{UIVI, SIVI}, nested Monte-Carlo algorithms, and compositional optimisers. Again, it is clear that CI-VI outperforms others in terms of the number of epochs needed for convergence. Interestingly, such a gap is further signified on the PBT data-set\footnote{Please note that we do not show the results of SIVI and UIVI in Figure~1 (h) as it was hard to get these to correctly operate on NLP tasks due to their source implementations. Staying fair to these methods, we opted-out of demonstrating their performance.}.

\vspace{-0.3em}
\section{Conclusions and Future Work}
We proposed CI-VI, a compositional solver for scalable and efficient semi-implicit variational inference. Our method rewrites SIVI as an instance of a compositional optimisation and devises a solver that correctly handles nested bias through an extrapolation-smoothing and a gradient sketching mechanism. We tested our method on a variety of tasks, including text modelling from natural language processing. In all these instances, we showed CI-VI's effectiveness. In papers to follow, we plan to further scale our method to dialogue problems from NLP and to extend our analysis to time-series models. We also think our nested-expectation theoretical results can be broadly applied beyond this paper to cover topics from experimental design. We will also tackle this direction in the future. 


\bibliographystyle{plain}
\bibliography{main}

\begin{thebibliography}{10}

\bibitem{allenzhu2016variance}
Zeyuan {Allen Zhu} and Elad Hazan.
\newblock {Variance Reduction for Faster Non-Convex Optimization}.
\newblock In {\em Proceedings of the 33nd International Conference on Machine
  Learning, ICML 2016}, volume~48, pages 699--707, 2016.

\bibitem{HMC}
Michael Betancourt.
\newblock {A Conceptual Introduction to Hamiltonian Monte Carlo}.
\newblock {\em arXiv preprint arXiv:1701.02434}, 2017.

\bibitem{Blei2017}
David~M. Blei, Alp Kucukelbir, and Jon~D. McAuliffe.
\newblock {Variational Inference: A Review for Statisticians}.
\newblock {\em Journal of the American Statistical Association},
  112(518):859--877, 2017.

\bibitem{davis2007methods}
P.J. Davis and P.~Rabinowitz.
\newblock {\em Methods of Numerical Integration}.
\newblock Dover Books on Mathematics Series. Dover Publications, 2007.

\bibitem{RAlgebra}
Petros Drineas, Ravi Kannan, and Michael~W. Mahoney.
\newblock {Fast Monte Carlo Algorithms for Matrices I: Approximating Matrix
  Multiplication}.
\newblock {\em SIAM J. Comput.}, 36(1):132–157, July 2006.

\bibitem{ADAgrad}
John Duchi, Elad Hazan, and Yoram Singer.
\newblock {Adaptive Subgradient Methods for Online Learning and Stochastic
  Optimization}.
\newblock {\em J. Mach. Learn. Res.}, page 2121–2159, 2011.

\bibitem{fang2018spider}
Cong Fang, Chris~Junchi Li, Zhouchen Lin, and Tong Zhang.
\newblock {SPIDER: Near-Optimal Non-Convex Optimization via Stochastic
  Path-Integrated Differential Estimator}.
\newblock In {\em Advances in Neural Information Processing Systems 31, NeurIPS
  2018}, pages 687--697, 2018.

\bibitem{fuflexible}
Hao Fu, Chunyuan Li, Ke~Bai, Jianfeng Gao, and Lawrence Carin.
\newblock {Flexible Text Modeling with Semi-Implicit Latent Representations}.

\bibitem{gabillon2019derivative}
Victor Gabillon, Rasul Tutunov, Michal Valko, and Haitham~Bou Ammar.
\newblock {Derivative-Free \& Order-Robust Optimisation}.
\newblock {\em arXiv preprint arXiv:1910.04034}, 2019.

\bibitem{Giordano2015}
Ryan Giordano, Tamara Broderick, and Michael~I. Jordan.
\newblock {Linear Response Methods for Accurate Covariance Estimates from Mean
  Field Variational Bayes}.
\newblock In {\em Advances in Neural Information Processing Systems 28, NeurIPS
  2015}, pages 1441--1449, 2015.

\bibitem{gregor15}
Karol Gregor, Ivo Danihelka, Alex Graves, Danilo Rezende, and Daan Wierstra.
\newblock {DRAW: A Recurrent Neural Network For Image Generation}.
\newblock In {\em Proceedings of the 32nd International Conference on Machine
  Learning, ICML 2015}, volume~37, pages 1462--1471, 2015.

\bibitem{Han2016}
Shaobo Han, Xuejun Liao, David~B. Dunson, and Lawrence Carin.
\newblock {Variational Gaussian Copula Inference}.
\newblock In {\em Proceedings of the 19th International Conference on
  Artificial Intelligence and Statistics, {AISTATS} 2016}, volume~51, pages
  829--838, 2016.

\bibitem{Hoffman2012}
Matthew~D. Hoffman, David~M. Blei, Chong Wang, and John Paisley.
\newblock {Stochastic Variational Inference}.
\newblock {\em J. Mach. Learn. Res.}, 14(1):1303–1347, 2013.

\bibitem{Huszar2017}
Ferenc Husz{\'{a}}r.
\newblock {Variational Inference using Implicit Distributions}.
\newblock {\em CoRR}, abs/1702.08235, 2017.

\bibitem{Jaakkola1999}
Tommi~S. Jaakkola and Michael~I. Jordan.
\newblock {Variational Probabilistic Inference and the QMR-DT Network}.
\newblock {\em J. Artif. Intell. Res.}, 10:291--322, 1999.

\bibitem{NonConvex}
Chi Jin, Rong Ge, Praneeth Netrapalli, Sham~M. Kakade, and Michael~I. Jordan.
\newblock {How to Escape Saddle Points Efficiently}.
\newblock In {\em Proceedings of the 34th International Conference on Machine
  Learning, ICML 2017}, volume~70, pages 1724--1732, 2017.

\bibitem{ExtraSmooth}
D.~C. Joyce.
\newblock {Survey of Extrapolation Process in Numerical Analysis}.
\newblock {\em SIAM Review}, 13(4):435--490, 1971.

\bibitem{kingma2014adam}
Diederik~P. Kingma and Jimmy Ba.
\newblock {ADAM: A Method for Stochastic Optimization}.
\newblock In {\em 3rd International Conference on Learning Representations,
  ICLR 2015}, 2015.

\bibitem{VAE}
Diederik~P. Kingma and Max Welling.
\newblock {Auto-Encoding Variational Bayes}.
\newblock In {\em 2nd International Conference on Learning Representations,
  {ICLR} 2014}, 2014.

\bibitem{APG}
Huan Li and Zhouchen Lin.
\newblock {Accelerated Proximal Gradient Methods for Nonconvex Programming}.
\newblock In {\em Advances in Neural Information Processing Systems 28, NeurIPS
  2015}, pages 379--387, 2015.

\bibitem{multiviewRL}
Minne Li, Lisheng Wu, Jun Wang, and Haitham Bou{-}Ammar.
\newblock {Multi-View Reinforcement Learning}.
\newblock In {\em Advances in Neural Information Processing Systems 32, NeurIPS
  2019}, pages 1418--1429, 2019.

\bibitem{Li2018}
Yingzhen Li and Richard~E. Turner.
\newblock {Gradient Estimators for Implicit Models}.
\newblock In {\em 6th International Conference on Learning Representations,
  ICLR 2018}, 2018.

\bibitem{liu2018stochastically}
Liu Liu, Ji~Liu, Cho-Jui Hsieh, and Dacheng Tao.
\newblock Stochastically controlled stochastic gradient for the convex and
  non-convex composition problem, 2018.

\bibitem{maaloe16}
Lars Maaløe, Casper~Kaae Sønderby, Søren~Kaae Sønderby, and Ole Winther.
\newblock {Auxiliary Deep Generative Models}.
\newblock In {\em Proceedings of The 33rd International Conference on Machine
  Learning, ICML 2016}, volume~48, pages 1445--1453, 2016.

\bibitem{Mohamed2017}
Shakir Mohamed and Balaji Lakshminarayanan.
\newblock {Learning in Implicit Generative Models}.
\newblock {\em CoRR}, abs/1610.03483, 2016.

\bibitem{RMSProp}
Mahesh~Chandra Mukkamala and Matthias Hein.
\newblock {Variants of RMSProp and Adagrad with Logarithmic Regret Bounds}.
\newblock In {\em Proceedings of the 34th International Conference on Machine
  Learning, ICML 2017}, page 2545–2553, 2017.

\bibitem{nakatsukasa2018approximate}
Yuji Nakatsukasa.
\newblock {Approximate and Integrate: Variance Reduction in Monte Carlo
  Integration via Function Approximation}.
\newblock {\em arXiv preprint arXiv:1806.05492}, 2018.

\bibitem{Nesterov}
Yurii Nesterov.
\newblock {\em Introductory Lectures on Convex Optimization: A Basic Course}.
\newblock Springer Publishing Company, Incorporated, first edition, 2014.

\bibitem{pytorch}
Adam Paszke, Sam Gross, Francisco Massa, Adam Lerer, James Bradbury, Gregory
  Chanan, Trevor Killeen, Zeming Lin, Natalia Gimelshein, Luca Antiga, Alban
  Desmaison, Andreas Kopf, Edward Yang, Zachary DeVito, Martin Raison, Alykhan
  Tejani, Sasank Chilamkurthy, Benoit Steiner, Lu~Fang, Junjie Bai, and Soumith
  Chintala.
\newblock {PyTorch: An Imperative Style, High-Performance Deep Learning
  Library}.
\newblock In {\em Advances in Neural Information Processing Systems 32, NeurIPS
  2019}, pages 8024--8035. 2019.

\bibitem{NestedMC}
Tom Rainforth, Robert Cornish, Hongseok Yang, and Andrew Warrington.
\newblock {On Nesting Monte Carlo Estimators}.
\newblock In {\em Proceedings of the 35th International Conference on Machine
  Learning, ICML 2018}, volume~80, pages 4264--4273, 2018.

\bibitem{Ranganath2016}
Rajesh Ranganath, Dustin Tran, and David~M. Blei.
\newblock {Hierarchical Variational Models}.
\newblock {\em 33rd International Conference on Machine Learning, ICML 2016},
  1:515--528, 2016.

\bibitem{OnConvAdamand_Beyond}
Sashank~J. Reddi, Satyen Kale, and Sanjiv Kumar.
\newblock {On the Convergence of ADAM and Beyond}.
\newblock In {\em 6th International Conference on Learning Representations,
  ICLR 2018}, 2018.

\bibitem{Rezende2015}
Danilo~Jimenez Rezende and Shakir Mohamed.
\newblock {Variational Inference with Normalizing Flows}.
\newblock In {\em 32nd International Conference on Machine Learning, ICML
  2015}, volume~2, pages 1530--1538, 2015.

\bibitem{Saul1996}
Lawrence~K. Saul and Michael Jordan.
\newblock {Exploiting Tractable Substructures in Intractable Networks}.
\newblock In {\em Advances in Neural Information Processing Systems 8, NeurIPS
  1996}, pages 486--492, 1996.

\bibitem{Shi2018}
Jiaxin Shi, Shengyang Sun, and Jun Zhu.
\newblock {Kernel Implicit Variational Inference}.
\newblock {\em 6th International Conference on Learning Representations, ICLR
  2018}, 2018.

\bibitem{UIVI}
Michalis~K. Titsias and Francisco J.~R. Ruiz.
\newblock {Unbiased Implicit Variational Inference}.
\newblock In {\em The 22nd International Conference on Artificial Intelligence
  and Statistics, AISTATS 2019}, volume~89, pages 167--176, 2019.

\bibitem{Tran2016}
Dustin Tran, Rajesh Ranganath, and David~M. Blei.
\newblock {The Variational Gaussian Process}.
\newblock In {\em 4th International Conference on Learning Representations,
  ICLR 2016}, pages 1--14, 2016.

\bibitem{Tran2017}
Dustin Tran, Rajesh Ranganath, and David~M. Blei.
\newblock {Hierarchical Implicit Models and Likelihood-Free Variational
  Inference}.
\newblock In {\em Advances in Neural Information Processing Systems 30, NeurIPS
  2017}, pages 5523--5533, 2017.

\bibitem{TutunovBJ16}
Rasul Tutunov, Haitham Bou{-}Ammar, and Ali Jadbabaie.
\newblock {Distributed Newton Method for Large-Scale Consensus Optimization}.
\newblock {\em {IEEE} Trans. Autom. Control.}, 64(10):3983--3994, 2019.

\bibitem{MTRL}
Rasul Tutunov, Dongho Kim, and Haitham Bou~Ammar.
\newblock {Distributed Multitask Reinforcement Learning with Quadratic
  Convergence}.
\newblock In {\em Advances in Neural Information Processing Systems 31, NeurIPS
  2018}, pages 8907--8916. 2018.

\bibitem{Rasul}
Rasul Tutunov, Minne Li, Jun Wang, and Haitham Bou{-}Ammar.
\newblock {Compositional ADAM: An Adaptive Compositional Solver}.
\newblock {\em CoRR}, abs/2002.03755, 2020.

\bibitem{MCMC}
Don van Ravenzwaaij, Peter Cassey, and Scott Brown.
\newblock {A Simple Introduction to Markov Chain Monte–Carlo Sampling}.
\newblock {\em Psychonomic Bulletin \& Review}, 25, 03 2016.

\bibitem{Mendi_2017}
Mengdi Wang, Ethan~X. Fang, and Han Liu.
\newblock {Stochastic Compositional Gradient Descent: Algorithms for Minimizing
  Compositions of Expected-value Functions}.
\newblock {\em Math. Program.}, 161(1-2):419--449, 2017.

\bibitem{Mengdi}
Mengdi Wang, Ji~Liu, and Ethan Fang.
\newblock {Accelerating Stochastic Composition Optimization}.
\newblock In {\em Advances in Neural Information Processing Systems 29, NeurIPS
  2016}, pages 1714--1722. 2016.

\bibitem{ExtrSmooth}
Norbert Wiener.
\newblock {\em Extrapolation, Interpolation, and Smoothing of Stationary Time
  Series}.
\newblock The MIT Press, 1964.

\bibitem{SIVI}
Mingzhang Yin and Mingyuan Zhou.
\newblock {Semi-Implicit Variational Inference}.
\newblock In {\em Proceedings of the 35th International Conference on Machine
  Learning, {ICML} 2018}, volume~80, pages 5646--5655, 2018.

\bibitem{NonAdaptiveGR}
Manzil Zaheer, Sashank Reddi, Devendra Sachan, Satyen Kale, and Sanjiv Kumar.
\newblock {Adaptive Methods for Nonconvex Optimization}.
\newblock In {\em Advances in Neural Information Processing Systems 31, NeurIPS
  2018}, pages 9793--9803. 2018.

\bibitem{zeiler2012adadelta}
Matthew~D. Zeiler.
\newblock {Adadelta: An Adaptive Learning Rate Method}.
\newblock {\em CoRR}, abs/1212.5701, 2012.

\bibitem{Apps}
Cheng Zhang, Judith B{\"{u}}tepage, Hedvig Kjellstr{\"{o}}m, and Stephan Mandt.
\newblock {Advances in Variational Inference}.
\newblock {\em {IEEE} Trans. Pattern Anal. Mach. Intell.}, 41(8):2008--2026,
  2019.

\end{thebibliography}

\appendix
\section{Practical implementation}
To improve numerical stability and computational efficiency, our implementation of CI-VI has been adapted to properties of functions $f_{\nu}(\cdot)$ and $g_{\bm{\hat\epsilon}}(\cdot)$ defined in Section 2.2. These adaptions are clarified below:

\paragraph{Log trick:} The output from $g_{\bm{\hat\epsilon}}(\cdot)$ is a vector of density ratios whose value can be extreme, especially when the $\bm{z}$ and $\bm{x}$ are high-dimensional rvs. In order to do inference in such probabilistic model, we propose a CI-VI implementation which conducts most of the computations in log-scale. The sub-sampled gradient $\overline{\nabla \mathcal{L}(\bm{\theta}_{t})}$ in Algorithm 1 can be reformulated as: 
\begin{align*}
    \overline{\nabla \mathcal{L}(\bm{\theta}_{t})} = \overline{\nabla \bm{g}_t(\bm{\theta}_{t})}^{\mathsf{T}}\overline{\nabla f_{t}(\bm{y}_{t})} 
    &= [\nabla\log\overline{\bm{g}_t(\bm{\theta}_t)}^{\mathsf{T}}]_{p \times n} \cdot
    \underbrace{\exp 
    \left[\log \overline{\bm{g}_t(\bm{\theta}_t)} + \log \overline{\nabla f_{t}(\bm{y}_{t})}\right]_{n \times 1}}_{\bm{k}_t} \\
    \text{where} \quad \overline{f_{t}(\bm{y}_{t})} &= \sfrac{1}{K_{t}^{(1)}} \sum_{i=1}^{K_{t}^{(1)}}  [\log \bm{y}_{t} ]^{\mathsf{T}}\bm{e}_{\nu_{t_{i}}}
\end{align*}
The $j$-th element of $\bm{k}_t$ is then given by Equation \ref{Eq:Logscale}. Since $\bm{y}_{t}$ is a smoothed average of $\overline{\bm{g}_t(\bm{\theta}_t)}$, their logarithms will cancel each other before the exponentiation is taken. 
\begin{align}
\label{Eq:Logscale}
    (\bm{k}_t)_j=
     \begin{cases}
        0, \quad \textit{if} \ \ \sum_{i=1}^{K_{t}^{(1)}} \mathcal{I}(\nu_{t_{i}}=j) = 0\\
        \exp(\log \overline{\bm{g}_t(\bm{\theta}_t)}_j -\log{K_{t}^{(1)}} - {\log(\bm{y}_{t}})_j + \log \sum_{i=1}^{K_{t}^{(1)}} \mathbb{I}(\nu_{t_{i}}=j))  , \quad \textit{otherwise}\\
     \end{cases}
\end{align}

Finalising this log-scale implementation, we track the value of $\log \bm{y}_{t}$ instead of $\bm{y}_{t}$ for the auxiliary update as follow:
\begin{align*}
    \bm{a} \coloneqq \log(1-\beta_{t}) + \log\bm{y}_{t}, \quad
    \bm{b} \coloneqq \log \beta_{t} + \log \overline{\bm{g}_{t}(\bm{z}_{t+1})}, \quad
    \log\bm{y}_{t+1} = \log [
    \exp(\bm{a}) + \exp(\bm{b}) ]
\end{align*}
The update of $\log\bm{y}_{t}$ may encounter numerical overflow, which can be handled by Taylor series approximation. For the $j$-th overflowed dimension, $\bm{r}_j \coloneqq \exp[\max (\bm{a}_j, \bm{b}_j) - \min (\bm{a}_j, \bm{b}_j)], \ \ \log(\bm{y}_{t+1})_j = \max (\bm{a}_j, \bm{b}_j) + \sum_{i=1}^N \frac{(-1)^{i-1} \bm{r}_{j}^{i}}{i} + \mathcal{O}(\bm{r}_j^{N+1})$. 

For large-scale CI-VI, updating all dimensions of $\log\bm{y}_{t}$ is challenging due to computational and memory constraints. Rather than performing a full update for $\log\bm{y}_{t}$, we follow a batch-like mechanism that proved effective in our experiments. Namely, we split the index set $[1, n]$ into smaller chunks and sample $\nu_t$ from one of these chunks to execute the updates. This way, only those dimensions indexed by the current chunk need to be updated rather than the whole high-dimensional vector $\log\bm{y}_{t}$. Of course, such chunks have to vary across iterations so as to guarantee the update of $\log\bm{y}_{t}$. To do so, we switch to a new chunk occasionally and re-initialize the dimensions of $\log\bm{y}_{t}$ indexed by the new chunk using the value of $\log \overline{\bm{g}_{t}(\bm{z}_{t})}$.

\paragraph{Sparse gradient-sketching: }Another insight from Equation \ref{Eq:Logscale} is that the $\bm{k}_t$ is a sparse vector with non-zero elements indexed by sampled $\nu_{t}$. The gradient-sketching in Algorithm 2 can be adapted by removing the zero elements in $\bm{k}_t$ and corresponding columns in $[\nabla\log\overline{\bm{g}_t(\bm{\theta}_t)}^{\mathsf{T}}]_{p \times n}$ first. The set $S_t$ is then sampled uniformly from the left indices.

\section{Experimental settings and results}
\paragraph{Toy Experiments:} In toy experiments, we minimize the KL-divergence between semi-implict variational distributions $q_{\bm{\theta}}(\bm{z})$ and ground-truth distributions $p(\bm{z})$. The compositional objective can be written as:
\begin{align*}
    \mathcal{KL} [q_{\bm{\theta}}(\bm{z})||p(\bm{z})] = \mathbb{E}_{\bm{u} \sim q(\bm{u}), \bm{\epsilon} \sim q(\bm{\epsilon})}\left[\log \mathbb{E}_{\bm{\hat \epsilon} \sim q(\bm{\epsilon})} \left[ \frac{q_{\bm{\theta}}(h_{\bm{\theta}}(\bm{u}; \bm{\epsilon})|\bm{\hat{\epsilon}})}{p(h_{\bm{\theta}}(\bm{u}; \bm{\epsilon}))} \right] \right].
\end{align*}

\paragraph{Bayesian Logistic Regression:} Given a data-set $\mathcal{D}=\{\bm{x}_i, y_i\}_{i=1}^N$, we can write the compositional form of negative ELBO as:
\begin{align*}
    -ELBO &= \mathbb{E}_{q_{\bm{\theta}}(\bm{z})} \left[\log\frac{q_{\bm{\theta}}(\bm{z})}{ p (\mathcal{D} |\bm{z})p (\bm{z})} \right] \\
    & = \mathbb{E}_{\bm{u} \sim q(\bm{u}), \bm{\epsilon} \sim q(\bm{\epsilon})} \left[ \log \mathbb{E}_{\bm{\hat\epsilon} \sim q(\bm{\epsilon})} \left[ \frac{q_{\bm{\theta}}(h_{\bm{\theta}}(\bm{u}; \bm{\epsilon})|\bm{\hat\epsilon})}{p(\mathcal{D} | h_{\bm{\theta}}(\bm{u}; \bm{\epsilon})) \cdot p(h_{\bm{\theta}}(\bm{u}; \bm{\epsilon}))} \right] \right],
\end{align*}
where $p(\mathcal{D}|\bm{z})=\prod_{i=1}^{N} p(y_i|\bm{x}_i, \bm{z})=\prod_{i=1}^N \text{Bernoulli}\left[(1+\exp(-\bm{x}_i^\mathsf{T}\bm{z}))^{-1}\right]$. 

\paragraph{Semi-implicit Variational Autoencoder:} Given a dataset $\mathcal{D}=\{{\bm{x}_i}\}_{i=1}^N$, the negative ELBO of a single datapoint $\bm{x}_i$ is given by:
\begin{align*}
  -ELBO(\bm{x}_i) = 
    \mathbb{E}_{\bm{u}_i \sim q(\bm{u}), \bm{\epsilon}_i \sim q(\bm{\epsilon})} \left[\log \mathbb{E}_{\bm{\hat\epsilon}_i \sim p({\bm{\hat\epsilon}_i})} \left[ \frac{q_{\bm{\theta}}(h_{\bm{\theta}}(\bm{u}_i; \bm{\epsilon}_i)|\bm{\hat\epsilon}_i, \bm{x}_i)}{p_{\bm{\hat\theta}}(\bm{x}_i | h_{\bm{\theta}}(\bm{u}_i; \bm{\epsilon}_i)) \cdot p(h_{\bm{\theta}}(\bm{u}_i; \bm{\epsilon}_i))}
    \right] \right],
\end{align*}
in which the encoder parameter $\bm{\theta}$ and decoder parameter $\bm{\hat\theta}$ are optimized jointly. We can further construct an estimator of the negative ELBO of the full data-set:
\small
\begin{align*}
    -ELBO(\mathcal{D}) = N \cdot \mathbb{E}_{\bm{x}_i \sim \text{Uniform}(\mathcal{D}),\bm{u}_i \sim q(\bm{u}), \bm{\epsilon}_i \sim q(\bm{\epsilon})} \left[\log \mathbb{E}_{\bm{\hat\epsilon}_i \sim p({\bm{\hat\epsilon}_i})} \left[ \frac{q_{\bm{\theta}}(h_{\bm{\theta}}(\bm{u}_i; \bm{\epsilon}_i)|\bm{\hat\epsilon}_i, \bm{x}_i)}{p_{\bm{\hat\theta}}(\bm{x}_i | h_{\bm{\theta}}(\bm{u}_i; \bm{\epsilon}_i)) \cdot p(h_{\bm{\theta}}(\bm{u}_i; \bm{\epsilon}_i))}
    \right] \right],
\end{align*}
\normalsize

\clearpage
\textbf{All the experimental settings are listed below:}
\begin{table}[htb!]
\centering
\begin{tabular}{|c|c|c|} 
    \hline
    Two-Modal& Star& Banana\\
    \hline
    $\begin{aligned}
        &0.5\mathcal{N}\left(\begin{bmatrix} -2\\0 \end{bmatrix}, I_{2\times2}\right) +\\
        &0.5\mathcal{N}\left(\begin{bmatrix} 2\\0 \end{bmatrix}, I_{2\times2}\right)
    \end{aligned}$&
    $\begin{aligned}
        &0.5\mathcal{N}\left(0, \begin{bmatrix} 2 &1.8\\1.8 &2 \end{bmatrix}\right) +\\ &0.5\mathcal{N}\left(0, \begin{bmatrix} 2 &-1.8\\-1.8 &2 \end{bmatrix}\right)
    \end{aligned}$&
    $\begin{aligned}
        &\bm{z}=\begin{bmatrix}\bm{z}_1\\\bm{z}_2-\bm{z}_1^2-1\end{bmatrix}\\ \begin{bmatrix}\bm{z}_1\\\bm{z}_2\end{bmatrix} &\sim \mathcal{N}\left(0, \begin{bmatrix} 1 &0.9\\0.9 &1 \end{bmatrix}\right)
    \end{aligned}$\\
    \hline
\end{tabular}
\caption{Ground-truth distributions used in toy experiments}
\label{Table:KL_exp}
\end{table}
\small
\begin{table}[htb!]
\centering
\begin{tabular}{|c|c|c|c|} 
    \hline
        &Two-Modal  &Star   &Banana\\
    \hline
    $q(\bm{\epsilon})$& 
    $\mathcal{N}\left(\bm{0},I_{3\times3}\right)$& $\mathcal{N}\left(\bm{0},I_{3\times3}\right)$&
    $\mathcal{N}\left(\bm{0},I_{3\times3}\right)$\\
    \hline
    $q_{\bm{\theta}}(\bm{z}|\bm{\epsilon})$&
    $\mathcal{N}(\bm{\mu}_{\bm{\theta}_{1}}(\bm{\epsilon}), diag(\bm{\theta}_{2}))$&
    $\mathcal{N}(\bm{\mu}_{\bm{\theta}_{1}}(\bm{\epsilon}), diag(\bm{\theta}_{2}))$&
    $\mathcal{N}(\bm{\mu}_{\bm{\theta}_{1}}(\bm{\epsilon}), diag(\bm{\theta}_{2}))$\\
    \hline
    $\bm{\mu}_{\bm{\theta}_{1}}$&
    \makecell{Hidden units: (50, 50) \\ Hidden activation: ReLU \\ Initializer: Xavier\_normal}&
    \makecell{Hidden units: (50, 50) \\ Hidden activation: ReLU \\ Initializer: Xavier\_normal}&
    \makecell{Hidden units: (50, 50) \\ Hidden activation: ReLU \\ Initializer: Xavier\_normal}\\
    \hline
    \makecell{Hyper\\params}& 
    \makecell{$K^{(1)}_{t}=1\cdot10^2$ \\ $K^{(2)}_{t}=1\cdot10^3$ \\ $C_{\alpha}=3\cdot10^{-4}$ \\ $C_{\beta}=0.99$ \\ $C_{\gamma}=0.9$ \\ $\mu=0.999$}&
    \makecell{$K^{(1)}_{t}=2\cdot10^2$ \\ $K^{(2)}_{t}=2\cdot10^3$ \\ $C_{\alpha}=2\cdot10^{-4}$ \\ $C_{\beta}=0.999$ \\ $C_{\gamma}=0.9$ \\ $\mu=0.999$}&
    \makecell{$K^{(1)}_{t}=2\cdot10^2$ \\ $K^{(2)}_{t}=2\cdot10^3$ \\ $C_{\alpha}=3\cdot10^{-4}$ \\ $C_{\beta}=0.999$ \\ $C_{\gamma}=1.0$ \\ $\mu=0.999$}\\
    \hline
    \makecell{Time\\elapsed}&
    \makecell{0.037 sec/iter $\times$ 200 iters \\ 7.432 sec}&
    \makecell{0.040 sec/iter $\times$ 300 iters \\ 11.972 sec}&
    \makecell{0.042 sec/iter $\times$ 300 iters \\ 12.604 sec}\\
    \hline
\end{tabular}
\caption{Experimental settings for toy examples}
\label{Table:KL_exp2}
\end{table}
\normalsize

\small
\begin{table}[htb!]
\centering
\begin{tabular}{|c|c|c|c|} 
    \hline
        & Spam & Nodal & Waveform\\
    \hline
    $q(\bm{\epsilon})$ & $\mathcal{N}\left(\bm{0}, 100 \cdot I_{3\times3}\right)$ & $\mathcal{N}\left(\bm{0}, 100 \cdot I_{3\times3}\right)$ & $\mathcal{N}\left(\bm{0}, 100 \cdot I_{10\times10}\right)$ \\
    \hline
    $q_{\bm{\theta}}(\bm{z}|\bm{\epsilon})$ & $\mathcal{N}(\bm{\mu}_{\bm{\theta}_{1}}(\bm{\epsilon}), \bm{L}_{\bm{\theta}_{2}}\bm{L}_{\bm{\theta}_{2}}^{\mathsf{T}}), ~ \bm{z} \in \mathbb{R}^{3}$ & $\mathcal{N}(\bm{\mu}_{\bm{\theta}_{1}}(\bm{\epsilon}), \bm{L}_{\bm{\theta}_{2}}\bm{L}_{\bm{\theta}_{2}}^{\mathsf{T}}), ~ \bm{z} \in \mathbb{R}^{6}$ &
    $\mathcal{N}(\bm{\mu}_{\bm{\theta}_{1}}(\bm{\epsilon}), \bm{L}_{\bm{\theta}_{2}}\bm{L}_{\bm{\theta}_{2}}^{\mathsf{T}}), ~ \bm{z} \in \mathbb{R}^{22}$ \\
    \hline
     $\bm{\mu}_{\bm{\theta}_{1}}(\bm{\epsilon})$&
    \makecell{Hidden units: (200, 200) \\ Hidden activation: ReLU \\ Initializer: Xavier\_normal}&
    \makecell{Hidden units: (200, 200) \\ Hidden activation: ReLU \\ Initializer: Xavier\_normal}&
    \makecell{Hidden units: (200, 200) \\ Hidden activation: ReLU \\ Initializer: Xavier\_normal}\\
    \hline
    \makecell{Hyper\\params}& 
    \makecell{$K^{(1)}_{t}=5\cdot10^1$ \\ $K^{(2)}_{t}=5\cdot10^2$ \\ $C_{\alpha}^{(\bm{\theta}_1)}=1.5\cdot10^{-4}$ \\ $C_{\alpha}^{(\bm{\theta}_2)}=2\cdot10^{-1}$ \\ $C_{\beta}=0.999$ \\ $C_{\gamma}^{(\bm{\theta}_1)}=0.7$ \\ $C_{\gamma}^{(\bm{\theta}_2)}=0.6$ \\ $\mu=0.999$}&
    \makecell{$K^{(1)}_{t}=2\cdot10^2$ \\ $K^{(2)}_{t}=2\cdot10^3$ \\ $C_{\alpha}^{(\bm{\theta}_1)}=1.7\cdot10^{-4}$ \\ $C_{\alpha}^{(\bm{\theta}_2)}=1.7\cdot10^{-4}$ \\ $C_{\beta}=0.99$ \\ $C_{\gamma}^{(\bm{\theta}_1)}=0.75$ \\ $C_{\gamma}^{(\bm{\theta}_2)}=0.85$ \\ $\mu=0.999$}&
    \makecell{$K^{(1)}_{t}=1\cdot10^2$ \\ $K^{(2)}_{t}=1\cdot10^3$ \\ $C_{\alpha}^{(\bm{\theta}_1)}=3\cdot10^{-4}$ \\ $C_{\alpha}^{(\bm{\theta}_2)}=2.5\cdot10^{-4}$ \\ $C_{\beta}=0.999$ \\ $C_{\gamma}^{(\bm{\theta}_1)}=0.85$ \\ $C_{\gamma}^{(\bm{\theta}_2)}=0.85$ \\ $\mu=0.999$} \\
    \hline
    \makecell{Time\\elapsed}&
    \makecell{0.024 sec/iter $\times$ 600 iters \\ 14.180 sec}&
    \makecell{0.015 sec/iter $\times$ 600 iters \\ 8.986 sec}&
    \makecell{0.015 sec/iter $\times$ 3000 iters \\ 45.761 sec}\\
    \hline

\end{tabular}
\caption{Experimental settings for Bayesian logistic regression}
\label{Table:LR_exp}
\end{table}
\normalsize

\begin{figure}[htb!]
     \centering
     \begin{subfigure}[b]{0.49\textwidth}
         \centering
         \includegraphics[width=\textwidth]{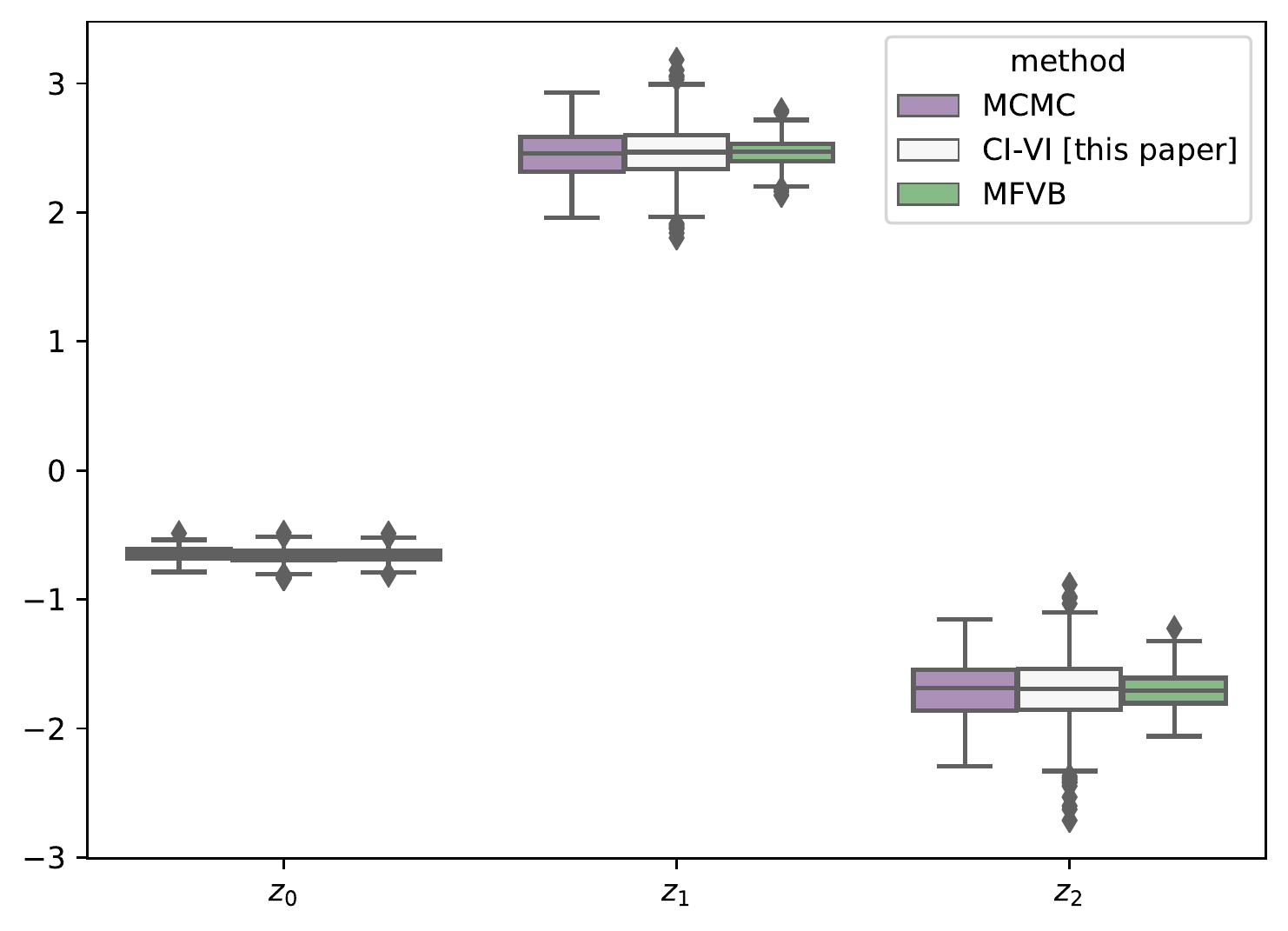}
         \label{fig:spam_boxplot}
     \end{subfigure}
     \hfill
     \begin{subfigure}[b]{0.49\textwidth}
         \centering
         \includegraphics[width=\textwidth]{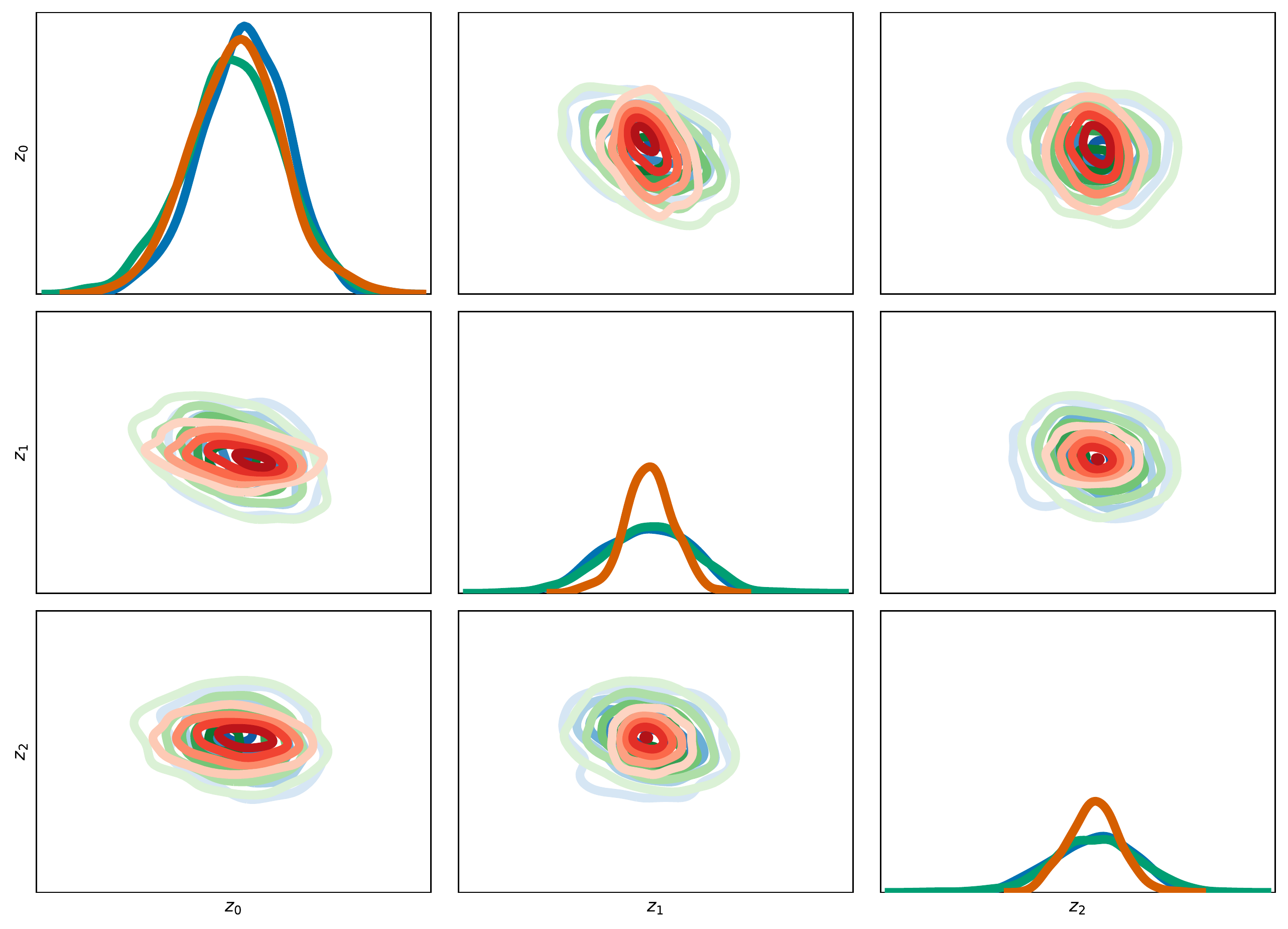}
         \label{fig:spam_pairplot}
     \end{subfigure}
     \begin{subfigure}[b]{0.49\textwidth}
         \centering
         \includegraphics[width=\textwidth]{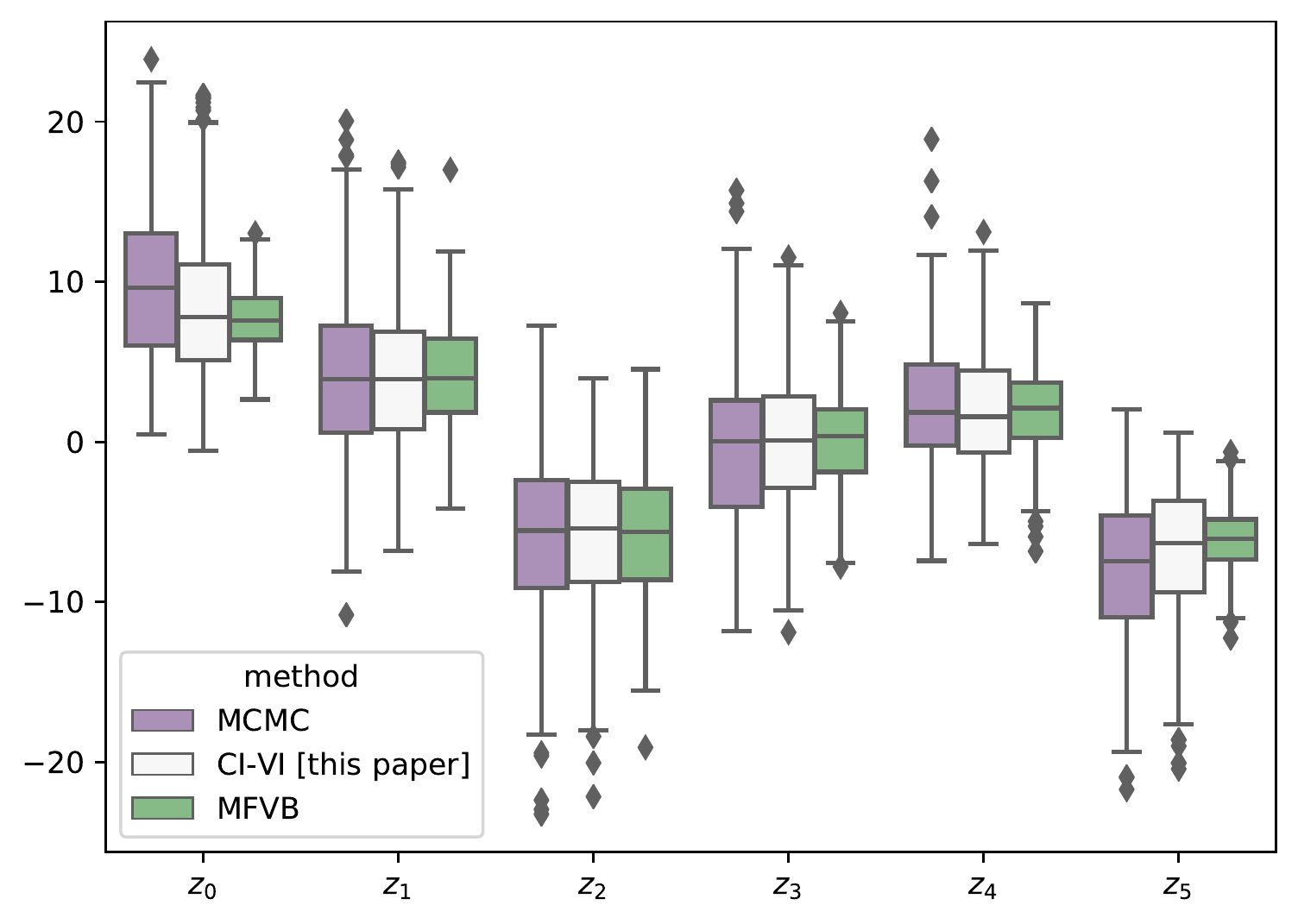}
         \label{fig:nodal_boxplot}
     \end{subfigure}
     \hfill
     \begin{subfigure}[b]{0.49\textwidth}
         \centering
         \includegraphics[width=\textwidth]{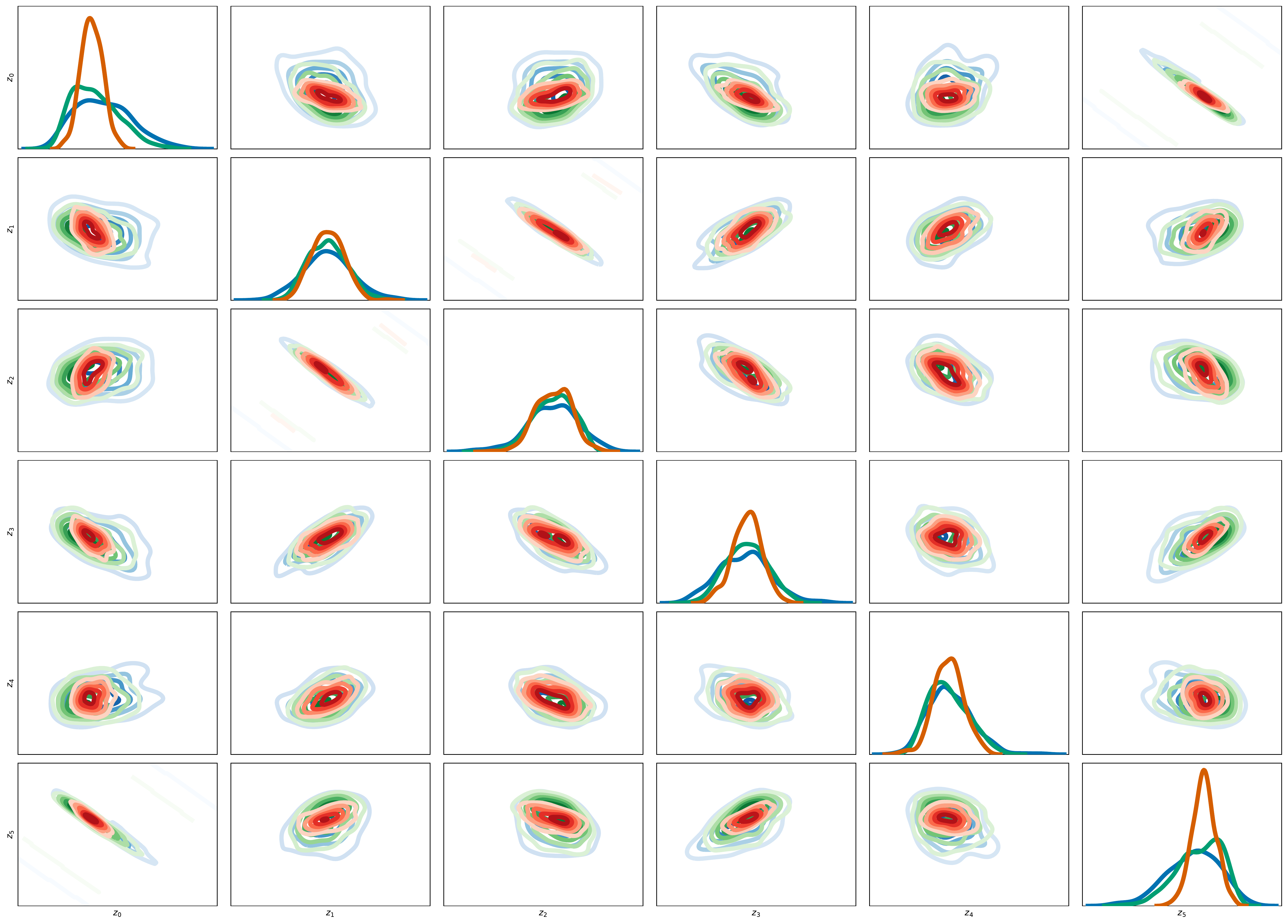}
         \label{fig:nodal_pairplot}
     \end{subfigure}
    \caption{Boxplot of marginal posteriors (left) and marginal \& pairwise joint-posteriors (right) using MCMC, CIVI and MFVB on Spam data-set (top row) and Nodal data-set (bottom row). On the right (grid plots), MCMC is blue, CIVI is green and MFVB is orange.}
\end{figure}

\begin{figure}[htb!]
    \centering
    \includegraphics[width=1.0\textwidth]{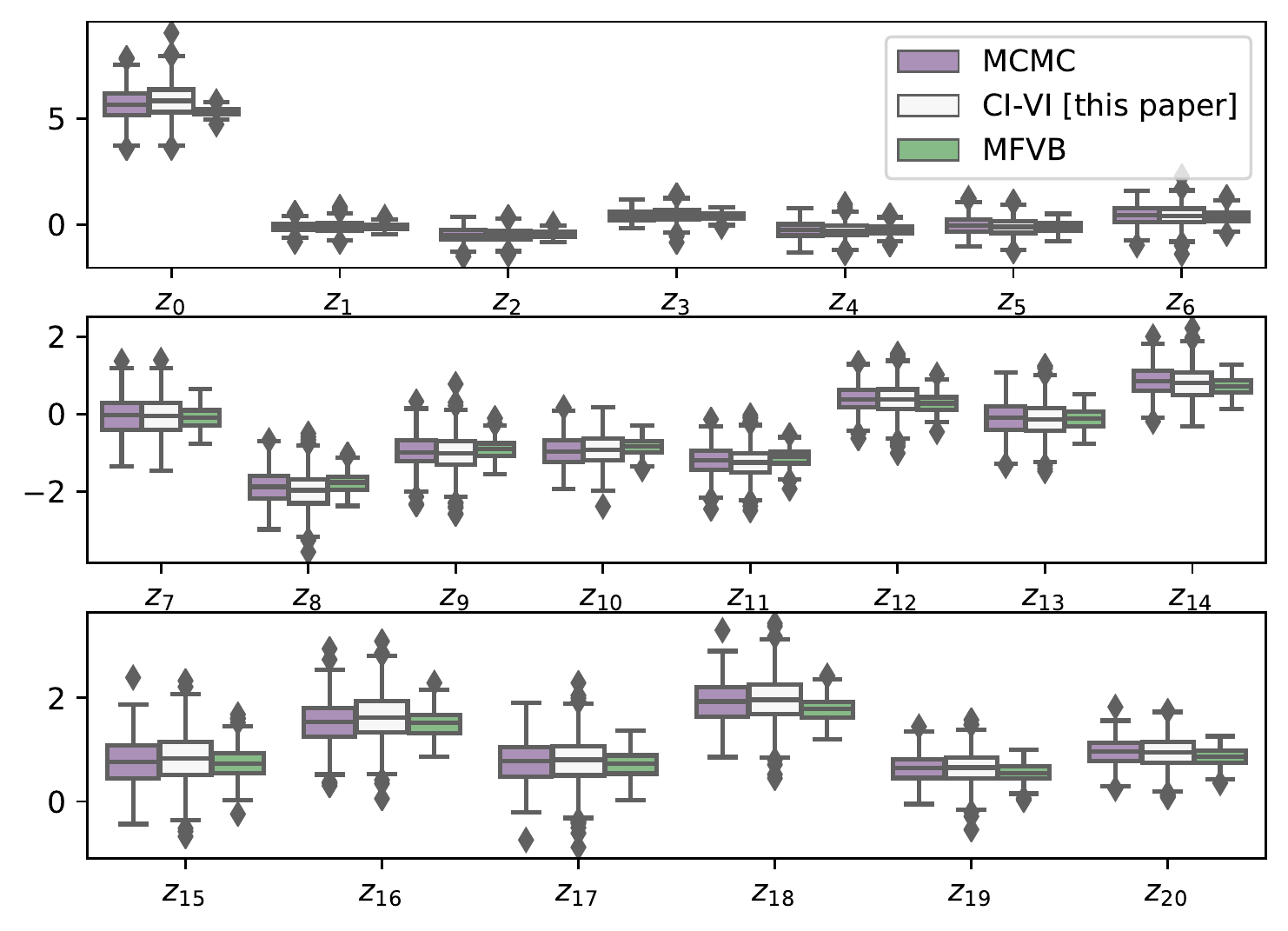}
    \caption{Boxplot of marginal posteriors using MCMC, CIVI and MFVB on Waveform data-set.}
    \label{fig:waveform_boxplot}
\end{figure}

\begin{figure}[ht!]
    \centering
    \includegraphics[width=1.4\textwidth, angle=90]{figures/app_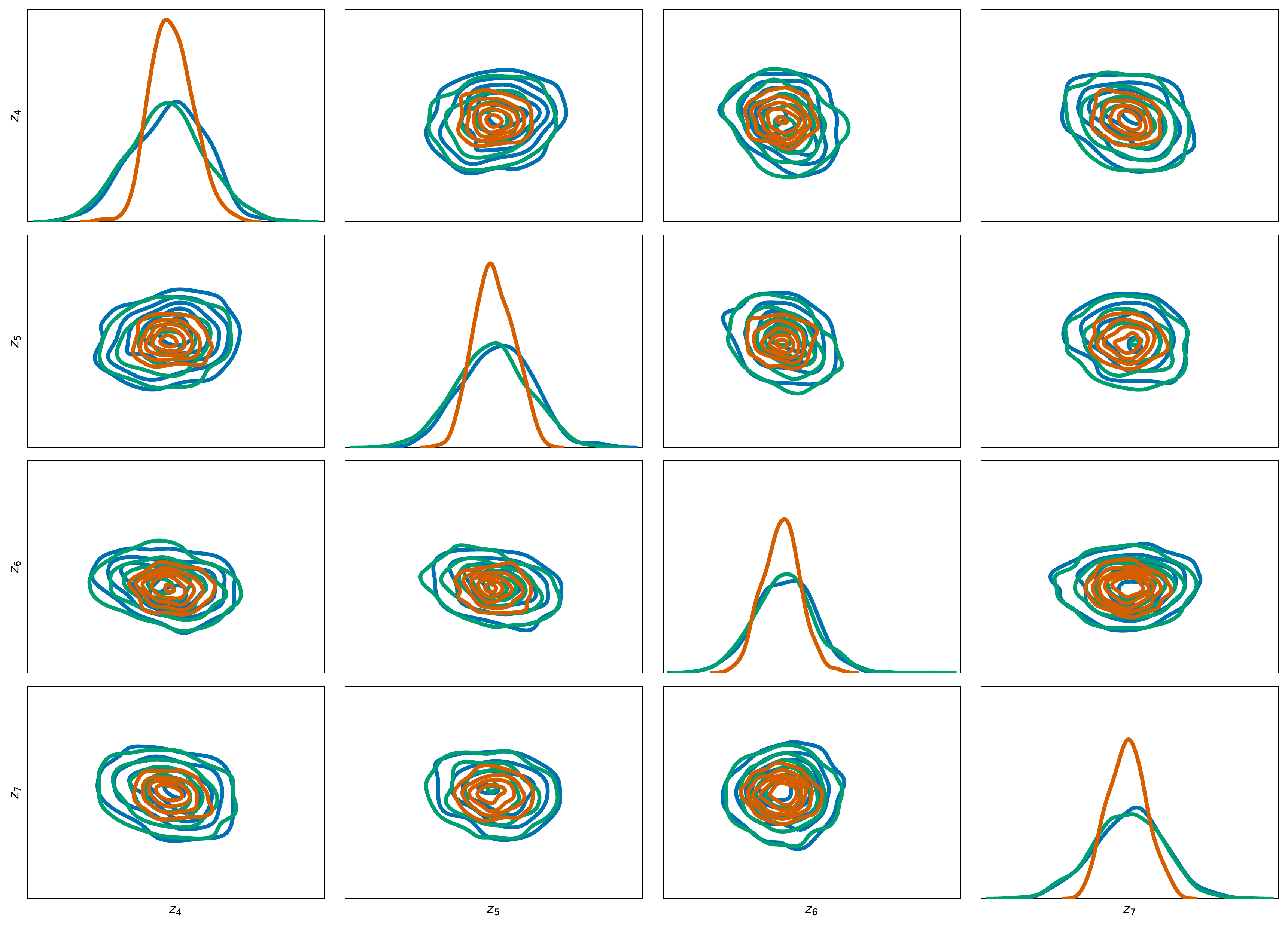}
    \caption{Marginal posteriors and pairwise joint-posteriors using MCMC, CIVI and MFVB on Waveform dataset. MCMC is blue, CIVI is green and MFVB is orange.}
    \label{fig:waveform_pairplot}
\end{figure}
\clearpage

\small
\begin{table}[htb!]
\centering
\begin{tabular}{|c|c|c|c|} 
    \hline
        & MNIST & FasionMNIST & PBT\\
    \hline
    $q(\bm{\epsilon})$ & $\mathcal{N}\left(\bm{0}, I_{10\times10}\right)$ & $\mathcal{N}\left(\bm{0},  I_{10\times10}\right)$ & $\mathcal{N}\left(\bm{0}, I_{10\times10}\right)$ \\
    \hline
    $q_{\bm{\theta}}(\bm{z}|\bm{\epsilon}, \bm{x}_i)$ & 
    \makecell{$\mathcal{N}(\bm{\mu}_{\bm{\theta}_1}(\bm{\epsilon}, \bm{x}_i), diag(\bm{\theta}_{2}))$ \\ $\bm{z} \in \mathbb{R}^{10}$} & 
    \makecell{$\mathcal{N}(\bm{\mu}_{\bm{\theta}_1}(\bm{\epsilon}, \bm{x}_i), diag(\bm{\theta}_{2}))$ \\ $\bm{z} \in \mathbb{R}^{10}$} & 
    \makecell{$\mathcal{N}(\bm{\mu}_{\bm{\theta}_1}(\bm{\epsilon}, \bm{x}_i), diag(\bm{\theta}_{2}))$ \\ $\bm{z} \in \mathbb{R}^{20}$} \\
    \hline
    $\bm{\mu}_{\bm{\theta}_{1}}(\bm{\epsilon}, \bm{x}_i)$&
    \makecell{Hidden units: (200, 200) \\ Hidden activation: ReLU \\ Initializer: Xavier\_normal}&
    \makecell{Hidden units: (200, 200) \\ Hidden activation: ReLU \\ Initializer: Xavier\_normal}&
    \makecell{Embedding dim: 256 \\ LSTM hidden dim: 256 \\ MLP hidden dim: 256 \\ MLP activation: ReLU}\\
    \hline
    $p_{\bm{\hat\theta}}(\bm{x}_i | \bm{z})$&
    $\prod_{i=1}^D \text{Bernoulli}[\pi_{\bm{\hat\theta}}(\bm{z})]$& 
    $\prod_{i=1}^D \text{Bernoulli}[\pi_{\bm{\hat\theta}}(\bm{z})]$& 
    $\prod_{i=1}^D \text{Categorical}[\bm{\pi}_{\bm{\hat\theta}}(\bm{z})]$\\
    \hline
    $\bm{\pi}_{\bm{\hat\theta}}(\bm{z})$&
    \makecell{Hidden units: (200, 200) \\ Hidden activation: ReLU \\ Initializer: Xavier\_normal}&
    \makecell{Hidden units: (200, 200) \\ Hidden activation: ReLU \\ Initializer: Xavier\_normal}&
    \makecell{Embedding dim: 256 \\ LSTM hidden dim: 256 \\ }\\
    \hline
    \makecell{Hyper\\params}& 
    \makecell{$K^{(1)}_{t}=300$ \\ $K^{(2)}_{t}=30$ \\ $C_{\alpha}^{(\bm{\theta}_1)}=1\cdot10^{-5}$ \\ $C_{\alpha}^{(\bm{\theta}_2)}=1\cdot10^{-4}$ \\ $C_{\alpha}^{(\bm{\hat\theta})}=5\cdot10^{-4}$ \\
    $C_{\beta}=0.999$ \\ $C_{\gamma}^{(\bm{\theta}_1)}=0.2$ \\ $C_{\gamma}^{(\bm{\theta}_2)}=0.2$ \\ $C_{\gamma}^{(\bm{\hat\theta})}=0.2$\\
    $\mu=0.999$}&
    \makecell{$K^{(1)}_{t}=300$ \\ $K^{(2)}_{t}=30$ \\ $C_{\alpha}^{(\bm{\theta}_1)}=1\cdot10^{-5}$ \\ $C_{\alpha}^{(\bm{\theta}_2)}=1\cdot10^{-4}$ \\ $C_{\alpha}^{(\bm{\hat\theta})}=5\cdot10^{-4}$ \\
    $C_{\beta}=0.999$ \\ $C_{\gamma}^{(\bm{\theta}_1)}=0.2$ \\ $C_{\gamma}^{(\bm{\theta}_2)}=0.2$ \\ $C_{\gamma}^{(\bm{\hat\theta})}=0.2$\\
    $\mu=0.999$}&
    \makecell{$K^{(1)}_{t}=320$ \\ $K^{(2)}_{t}=100$ \\ $C_{\alpha}^{(\bm{\theta}_1)}=2\cdot10^{-4}$ \\ $C_{\alpha}^{(\bm{\theta}_2)}=2\cdot10^{-4}$ \\ $C_{\alpha}^{(\bm{\hat\theta})}=5\cdot10^{-4}$ \\
    $C_{\beta}=0.99$ \\ $C_{\gamma}^{(\bm{\theta}_1)}=0.1$ \\ $C_{\gamma}^{(\bm{\theta}_2)}=0.1$ \\ $C_{\gamma}^{(\bm{\hat\theta})}=0.1$\\
    $\mu=0.999$} \\
    \hline
    \makecell{Time\\elapsed}&
    \makecell{0.0218 sec/iter $\times$ 18000 iters \\ 392 sec}&
    \makecell{0.0236 sec/iter $\times$ 18000 iters \\ 425 sec}&
    \makecell{0.0989 sec/iter $\times$ 40000 iters \\ 3956 sec}\\
    \hline
\end{tabular}
\caption{Experimental settings for variational autoencoder.}
\label{Table:VAE_exp}
\end{table}
\normalsize

\begin{table}[ht!]
    \centering
    \begin{tabular}{|c|c|c|c|c|c|c|c|}
        \hline
        Method & Time (ms) per iteration \\
        \hline
        SIVI                & 155 \\
        UIVI                & 69 \\
        ASCPG               & 157 \\
        SCGD                & 152 \\
        CI-VI [this paper]  & 56 \\
        \hline
    \end{tabular}
    \caption{Average time per iteration for training VAE on Intel i9-9900X 3.50GHz CPU}. 
    \label{Table:VAE_times}
\end{table}

\clearpage
\section{Detailed Descriptions of Assumptions}
Due to the lack of space, we provide  more detailed description of all assumptions required to establish theoretical convergence results for the proposed \text{CI-VI} Algorithm in this section.

Recall, we target the following nested optimisation problem:
\begin{equation}
    \min_{\boldsymbol{\theta}\in\mathbb{R}^{p}}\mathcal{L}(\boldsymbol{\theta}) = \mathbb{E}_{\nu}\left[f_{\nu}\left(\mathbb{E}_{\boldsymbol{\hat{\epsilon}}}\left[\boldsymbol{g}_{\boldsymbol{\hat{\epsilon}}}(\boldsymbol{\theta})\right] \right)\right]
\end{equation}
where for any $\nu,\boldsymbol{\hat{\epsilon}}$ we have $f_{\nu}(\cdot):\mathbb{R}^{n}\to \mathbb{R}$ and $\boldsymbol{g}_{\boldsymbol{\hat{\epsilon}}}(\cdot):\mathbb{R}^{p}\to \mathbb{R}^{n}$ and random variables $\nu,\boldsymbol{\hat{\epsilon}}$ follow some unknown distributions $\nu\sim \text{p}_{\nu}(\cdot)$ and $\boldsymbol{\hat{\epsilon}}\sim \text{p}_{\boldsymbol{\hat{\epsilon}}}(\cdot)$ correspondingly. Please notice, we do not assume that these distributions are independent. For brevity, let us denote $f(\boldsymbol{y}) = \mathbb{E}_{\nu}[f_{\nu}(\boldsymbol{y})]$ and $\boldsymbol{g}(\boldsymbol{\theta}) = \mathbb{E}_{\boldsymbol{\hat{\epsilon}}}[\boldsymbol{g}_{\boldsymbol{\hat{\epsilon}}}(\boldsymbol{\theta})]$, then it is easy to see:
\begin{align*}
    &\mathcal{L}(\boldsymbol{\theta}) = f(\boldsymbol{g}(\boldsymbol{\theta})),\\\nonumber
    &\nabla\mathcal{L}(\boldsymbol{\theta}) = \mathbb{E}_{\boldsymbol{\hat{\epsilon}}}[\nabla \boldsymbol{g}_{\boldsymbol{\hat{\epsilon}}}(\boldsymbol{\theta})^{\mathsf{T}}]\mathbb{E}_{\nu}[\nabla f_{\nu}(\mathbb{E}_{\boldsymbol{\hat{\epsilon}}}[\boldsymbol{g}_{\boldsymbol{\hat{\epsilon}}}(\boldsymbol{\theta})])] = \nabla \boldsymbol{g}(\boldsymbol{\theta})^{\mathsf{T}}\nabla f(\boldsymbol{g}(\boldsymbol{\theta})).
\end{align*}\\
Let the following assumption holds:
\begin{app_assumption}\label{assump_foo_1}\textbf{\text{1:}} 
\begin{enumerate}
    \item Function $f_{\nu}(\cdot)$ is bounded, i.e.$\forall\boldsymbol{y}\in\mathbb{R}^n$:
    \begin{equation*}
        |f_{\nu}(\boldsymbol{y})| \le B_f.
    \end{equation*}
    for any $v$.
    \item Function $f_{\nu}(\cdot)$ is $L_{f}-$ smooth. i.e.$\forall\boldsymbol{y}_1,\boldsymbol{y}_2\in\mathbb{R}^n$:
    \begin{equation*}
        ||\nabla f_{\nu}(\boldsymbol{y}_1) - \nabla f_{\nu}(\boldsymbol{y}_2)||_2 \le L_{f}||\boldsymbol{y}_1 - \boldsymbol{y}_2 ||_2.
    \end{equation*}
    for any $v$.
    \item Function $f_{\nu}(\cdot)$ has bounded gradient, i.e $\forall \boldsymbol{y}\in\mathbb{R}^n$:
    \begin{align*}
        &||\nabla f_{\nu}(\boldsymbol{y})||_2 \le M_{f}.
    \end{align*}
    for any $v$.
    \item Mapping $\boldsymbol{g}_{\boldsymbol{\hat{\epsilon}}}(\boldsymbol{\theta})$ is $M_g$ Lipschitz continuous, i.e. $\forall\boldsymbol{\theta},\boldsymbol{z}\in\mathbb{R}^{p}$:
    \begin{equation*}
        ||\boldsymbol{g}_{\boldsymbol{\hat{\epsilon}}}(\boldsymbol{\theta}) - \boldsymbol{g}_{\boldsymbol{\hat{\epsilon}}}(\boldsymbol{z})||_2 \le M_g||\boldsymbol{\theta} - \boldsymbol{z}||_2.
    \end{equation*}
    for any ${\boldsymbol{\hat{\epsilon}}}$.
    \item Mapping $\boldsymbol{g}_{\boldsymbol{\hat{\epsilon}}}(\boldsymbol{\theta})$ is $L_{g}-$ smooth. i.e.$\forall\boldsymbol{\theta,z}\in\mathbb{R}^p$:
    \begin{equation*}
        ||\nabla \boldsymbol{g}_{\boldsymbol{\hat{\epsilon}}}(\boldsymbol{\theta}) - \nabla \boldsymbol{g}_{\boldsymbol{\hat{\epsilon}}}(\boldsymbol{z})||_2 \le L_{g}||\boldsymbol{\theta} - \boldsymbol{z} ||_2.
    \end{equation*}
    for any ${\boldsymbol{\hat{\epsilon}}}$.
\end{enumerate}
\end{app_assumption}
Because distributions $\nu\sim \text{p}_{\nu}(\cdot)$ and $\boldsymbol{\hat{\epsilon}}\sim \text{p}_{\boldsymbol{\hat{\epsilon}}}(\cdot)$ are unknown, we assume the presence of two first order oracles $\mathcal{FOO}_{f}$ and $\mathcal{FOO}_{g}$, such that given fixed vectors $\boldsymbol{z}_t\in\mathbb{R}^p$, $\boldsymbol{y}_t\in\mathbb{R}^n$ and integer numbers $K^{(1)}_t$ and $K^{(2)}_{t}$ at time step $t$ they return the following collections:
\begin{align}\label{oracles}
    &\mathcal{FOO}_{f}[\boldsymbol{y}_t, K^{(1)}_{t}] = \{\nu_{t_a}, \nabla f_{\nu_{t_a}}(\boldsymbol{y}_t)\}^{K^{(1)}_t}_{a=1}, \ \ \ \ \ \text{where } \ \{\nu_{t_a}\}^{K^{(1)}_t}_{a=1} \ \ \ \text{are i.i.d} \\\nonumber
    &\mathcal{FOO}_{g}[\boldsymbol{z}_t, K^{(2)}_t] = \{{\boldsymbol{\hat{\epsilon}}_{t_a}},\boldsymbol{g}_{{\boldsymbol{\hat{\epsilon}}_{t_a}}}(\boldsymbol{z}_t), \nabla \boldsymbol{g}_{{\boldsymbol{\hat{\epsilon}}_{t_a}}}(\boldsymbol{z}_t)\}^{K^{(2)}_t}_{a=1}, \ \ \ \ \ \text{where } \ \{{\boldsymbol{\hat{\epsilon}}_{t_a}}\}^{K^{(2)}_t}_{a=1} \ \ \ \text{are i.i.d}
\end{align}
The complexity of the proposed algorithm will be evaluated in terms of total number of calls to first order oracles $\mathcal{FOO}_{f}[\cdot,\cdot]$, $\mathcal{FOO}_{g}[\cdot,\cdot]$.\\

\begin{app_assumption}\label{assump_foo_2}\textbf{\text{2:}}
At any given time step $t$, the oracles $\mathcal{FOO}_{f}[\boldsymbol{y}]_t$ and $\mathcal{FOO}_{g}[\boldsymbol{z}]_t$ satisfy the following two conditions for any $\boldsymbol{z}\in\mathbb{R}^p$ and $\boldsymbol{y}\in\mathbb{R}^n$:
\begin{enumerate}
    \item Sample collections 
    \begin{align*}
        \left(\{\nu_{1_a}\}^{K^{(1)}_1}_{a=1}, \{\boldsymbol{\hat{\epsilon}}_{1_a}\}^{K^{(2)}_1}_{a=1}\right),\left(\{\nu_{2_a}\}^{K^{(1)}_2}_{a=1}, \{\boldsymbol{\hat{\epsilon}}_{2_a}\}^{K^{(2)}_2}_{a=1}\right),\ldots, \left(\{\nu_{t_a}\}^{K^{(1)}_t}_{a=1}, \{{\boldsymbol{\hat{\epsilon}}_{t_a}}\}^{K^{(2)}_t}_{a=1}\right)
    \end{align*}
    are independent. 
    \item Unbiased estimates: for any $\boldsymbol{\theta}\in \mathbb{R}^{p}$, $\boldsymbol{y}\in \mathbb{R}^{n}$:
    \begin{align*}
    &\mathbb{E}_{{\boldsymbol{\hat{\epsilon}}_{t_a}}}\left[\boldsymbol{g}_{{\boldsymbol{\hat{\epsilon}}_{t_a}}}(\boldsymbol{\theta})\right] = \mathbb{E}_{\boldsymbol{\hat{\epsilon}}}\left[\boldsymbol{g}_{\boldsymbol{\hat{\epsilon}}}(\boldsymbol{\theta})\right],\\\nonumber
    &\mathbb{E}_{{\boldsymbol{\hat{\epsilon}}_{t_a}},\nu_{t_b}}\left[\nabla \boldsymbol{g}^{\mathsf{T}}_{{\boldsymbol{\hat{\epsilon}}_{t_a}}}(\boldsymbol{\theta})\nabla f_{\nu_{t_b}}(\boldsymbol{y})\right] = \mathbb{E}_{\boldsymbol{\hat{\epsilon}}}\left[\nabla \boldsymbol{g}^{\mathsf{T}}_{\boldsymbol{\hat{\epsilon}}}(\boldsymbol{\theta})\right]\mathbb{E}_{\nu}\left[\nabla f_{\nu}(\boldsymbol{y})\right].
    \end{align*}
    for any $t$ and $a\in [1,\ldots, K^{(2)}_{t}]$, $b\in [1,\ldots, K^{(1)}_{t}]$.
    \item Bounded variance of  stochastic gradients: for any $\boldsymbol{\theta}\in \mathbb{R}^{p}$, $\boldsymbol{y}\in \mathbb{R}^{n}$:
    \begin{align*}
        &\mathbb{E}_{\nu_{t_b}}||\nabla f_{\nu_{t_b}}(\boldsymbol{y}) - \nabla\mathbb{E}_{\nu}\left[f_{\nu}(\boldsymbol{y})\right]||^2_2 \le \sigma^2_1,\\\nonumber
        &\mathbb{E}_{{\boldsymbol{\hat{\epsilon}}_{t_a}}}||\nabla \boldsymbol{g}_{{\boldsymbol{\hat{\epsilon}}_{t_a}}}(\boldsymbol{\theta}) - \nabla\mathbb{E}_{\boldsymbol{\hat{\epsilon}}}\left[\boldsymbol{g}_{\boldsymbol{\hat{\epsilon}}}(\boldsymbol{\theta})\right]||^2_2 \le \sigma^2_2,\\\nonumber
        &\mathbb{E}_{{\boldsymbol{\hat{\epsilon}}_{t_a}}}\left[||\boldsymbol{g}_{{\boldsymbol{\hat{\epsilon}}_{t_a}}}(\boldsymbol{\theta}) - \boldsymbol{g}(\boldsymbol{\theta})||^2_2\right]\le \sigma^2_3.
    \end{align*}
    for any $t$ and $a\in [1,\ldots, K^{(2)}_{t}]$, $b\in [1,\ldots, K^{(1)}_{t}]$.
\end{enumerate}
\end{app_assumption}

\section{Theoretical Guarantees}
In this section, we establish all theoretical results needed for proving the main theorem and then present its proof.

\subsection{L-smoothness of function \texorpdfstring{$\mathcal{L}(\cdot)$}{TEXT}}
Our first result provides the important property of the overall compositional function $\mathcal{L}(\cdot)$ which will be used later in the convergence analysis of the proposed \text{CI-VI} Algorithm.
\begin{lemma}\label{lip_smoothnesss_claim}
Let Assumptions \textbf{\text{1}} and \textbf{\text{2}} hold, then function $\mathcal{L}(\cdot)$ is $L - $Lipschitz smooth, i,e:
\begin{equation}
    ||\nabla\mathcal{L}(\boldsymbol{\theta}_1) - \nabla\mathcal{L}(\boldsymbol{\theta}_2)||_2 \le L||\boldsymbol{\theta}_1 - \boldsymbol{\theta}_2||_2\ \ \ \ \ \ \ \ \ \forall \boldsymbol{\theta}_1,\boldsymbol{\theta}_2\in\mathbb{R}^p
\end{equation}
with $L = M^2_gL_f + L_gM_f$.
\end{lemma}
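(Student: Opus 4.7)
The plan is to establish $L$-smoothness of $\mathcal{L}(\cdot)$ by exploiting the composition structure $\mathcal{L}(\boldsymbol{\theta})=f(\boldsymbol{g}(\boldsymbol{\theta}))$, combined with the explicit form of the gradient $\nabla\mathcal{L}(\boldsymbol{\theta}) = \nabla\boldsymbol{g}(\boldsymbol{\theta})^{\mathsf{T}}\nabla f(\boldsymbol{g}(\boldsymbol{\theta}))$ that is already recorded in the excerpt. The proof is essentially a chain-rule calculation controlled by a standard add-and-subtract trick.

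First, I would transfer the individual regularity properties from $f_{\nu}$ and $\boldsymbol{g}_{\boldsymbol{\hat{\epsilon}}}$ to their expectations $f$ and $\boldsymbol{g}$. Using Jensen's inequality (valid since the $2$-norm is convex) under Assumption~\textbf{1}, one gets $\|\nabla f(\boldsymbol{y})\|_2 \le M_f$, that $f$ is $L_f$-smooth, and that $\boldsymbol{g}$ is $M_g$-Lipschitz and $L_g$-smooth with the same constants. In particular, the operator norm $\|\nabla \boldsymbol{g}(\boldsymbol{\theta})\|_2$ is bounded by $M_g$ (from Lipschitz continuity of $\boldsymbol{g}$), and $\|\nabla\boldsymbol{g}(\boldsymbol{\theta}_1)-\nabla\boldsymbol{g}(\boldsymbol{\theta}_2)\|_2 \le L_g\|\boldsymbol{\theta}_1-\boldsymbol{\theta}_2\|_2$. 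These are the only ingredients I will need going forward.

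Next, for any $\boldsymbol{\theta}_1,\boldsymbol{\theta}_2\in\mathbb{R}^{p}$, I would write
\begin{align*}
\nabla\mathcal{L}(\boldsymbol{\theta}_1)-\nabla\mathcal{L}(\boldsymbol{\theta}_2)
&= \nabla\boldsymbol{g}(\boldsymbol{\theta}_1)^{\mathsf{T}}\bigl[\nabla f(\boldsymbol{g}(\boldsymbol{\theta}_1))-\nabla f(\boldsymbol{g}(\boldsymbol{\theta}_2))\bigr] \\
&\quad + \bigl[\nabla\boldsymbol{g}(\boldsymbol{\theta}_1)-\nabla\boldsymbol{g}(\boldsymbol{\theta}_2)\bigr]^{\mathsf{T}}\nabla f(\boldsymbol{g}(\boldsymbol{\theta}_2)),
\end{align*}
by adding and subtracting the cross term $\nabla\boldsymbol{g}(\boldsymbol{\theta}_1)^{\mathsf{T}}\nabla f(\boldsymbol{g}(\boldsymbol{\theta}_2))$. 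Taking $\|\cdot\|_2$ and using the triangle inequality and the sub-multiplicativity of the operator/Euclidean norm, the first summand is bounded by $\|\nabla\boldsymbol{g}(\boldsymbol{\theta}_1)\|_2\cdot L_f\|\boldsymbol{g}(\boldsymbol{\theta}_1)-\boldsymbol{g}(\boldsymbol{\theta}_2)\|_2 \le M_g\cdot L_f\cdot M_g\|\boldsymbol{\theta}_1-\boldsymbol{\theta}_2\|_2$, while the second summand is bounded by $L_g\|\boldsymbol{\theta}_1-\boldsymbol{\theta}_2\|_2\cdot M_f$.

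Adding the two contributions yields $\|\nabla\mathcal{L}(\boldsymbol{\theta}_1)-\nabla\mathcal{L}(\boldsymbol{\theta}_2)\|_2 \le (M_g^{2}L_f + L_g M_f)\|\boldsymbol{\theta}_1-\boldsymbol{\theta}_2\|_2$, which is precisely the claim with $L=M_g^{2}L_f+L_gM_f$. The only non-routine step is the cleanly written add-and-subtract decomposition; everything else reduces to applying the four constants guaranteed by Assumption~\textbf{1}, so I do not anticipate a genuine obstacle. The expectation-based definitions of $f$ and $\boldsymbol{g}$ never cause trouble because all the relevant constants are uniform in $\nu$ and $\boldsymbol{\hat{\epsilon}}$, and can be pulled outside the expectation via Jensen.
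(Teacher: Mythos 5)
Your proposal is correct and follows essentially the same route as the paper's proof: the same add-and-subtract of the cross term $\nabla\boldsymbol{g}(\boldsymbol{\theta}_1)^{\mathsf{T}}\nabla f(\boldsymbol{g}(\boldsymbol{\theta}_2))$, the triangle inequality, and Jensen's inequality to transfer the constants $M_f, L_f, M_g, L_g$ from $f_{\nu}$ and $\boldsymbol{g}_{\boldsymbol{\hat{\epsilon}}}$ to their expectations, yielding $L = M_g^2 L_f + L_g M_f$. The only cosmetic difference is that you extract the regularity of $f$ and $\boldsymbol{g}$ up front, whereas the paper applies Jensen term by term inside the chain of inequalities.
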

\begin{proof}
Assumption \textbf{\text{1}} implies that $||\nabla \boldsymbol{g}_{\boldsymbol{\hat{\epsilon}}}(\boldsymbol{\theta})||_2 \le M_g$ for any $\boldsymbol{\theta}\in\mathbb{R}^p$. Hence, using Jensen inequality as well as property of the norm we have:
\begin{align*}
    &||\nabla\mathcal{L}(\boldsymbol{\theta}_1) - \nabla\mathcal{L}(\boldsymbol{\theta}_2)||_2 = \\\nonumber
    &||\mathbb{E}_{\boldsymbol{\hat{\epsilon}}}\left[\nabla \boldsymbol{g}^{\mathsf{T}}_{\boldsymbol{\hat{\epsilon}}}(\boldsymbol{\theta}_1)\right]\mathbb{E}_{\nu}\left[\nabla f_{\nu}\left(\mathbb{E}_{\boldsymbol{\hat{\epsilon}}}\left[\boldsymbol{g}_{\boldsymbol{\hat{\epsilon}}}(\boldsymbol{\theta}_1)\right]\right)\right]  - \mathbb{E}_{\boldsymbol{\hat{\epsilon}}}\left[\nabla \boldsymbol{g}^{\mathsf{T}}_{\boldsymbol{\hat{\epsilon}}}(\boldsymbol{x_2})\right]\mathbb{E}_{\nu}\left[\nabla f_{\nu}\left(\mathbb{E}_{\boldsymbol{\hat{\epsilon}}}\left[\boldsymbol{g}_{\boldsymbol{\hat{\epsilon}}}(\boldsymbol{\theta}_2)\right]\right)\right]||_2\le\\\nonumber
    &||\mathbb{E}_{\boldsymbol{\hat{\epsilon}}}\left[\nabla \boldsymbol{g}^{\mathsf{T}}_{\boldsymbol{\hat{\epsilon}}}(\boldsymbol{\theta}_1)\right]\mathbb{E}_{\nu}\left[\nabla f_{\nu}\left(\mathbb{E}_{\boldsymbol{\hat{\epsilon}}}\left[\boldsymbol{g}_{\boldsymbol{\hat{\epsilon}}}(\boldsymbol{\theta}_1)\right]\right)\right] - \mathbb{E}_{\boldsymbol{\hat{\epsilon}}}\left[\nabla \boldsymbol{g}^{\mathsf{T}}_{\boldsymbol{\hat{\epsilon}}}(\boldsymbol{\theta}_1)\right]\mathbb{E}_{\nu}\left[\nabla f_{\nu}\left(\mathbb{E}_{\boldsymbol{\hat{\epsilon}}}\left[\boldsymbol{g}_{\boldsymbol{\hat{\epsilon}}}(\boldsymbol{\theta}_2)\right]\right) \right]||_2 + \\\nonumber
    &||\mathbb{E}_{\boldsymbol{\hat{\epsilon}}}\left[\nabla \boldsymbol{g}^{\mathsf{T}}_{\boldsymbol{\hat{\epsilon}}}(\boldsymbol{\theta}_1)\right]\mathbb{E}_{\nu}\left[\nabla f_{\nu}\left(\mathbb{E}_{\boldsymbol{\hat{\epsilon}}}\left[\boldsymbol{g}_{\boldsymbol{\hat{\epsilon}}}(\boldsymbol{\theta}_2)\right]\right)\right] - \mathbb{E}_{\boldsymbol{\hat{\epsilon}}}\left[\nabla \boldsymbol{g}^{\mathsf{T}}_{\boldsymbol{\hat{\epsilon}}}(\boldsymbol{x_2})\right]\mathbb{E}_{\nu}\left[\nabla f_{\nu}\left(\mathbb{E}_{\boldsymbol{\hat{\epsilon}}}\left[\boldsymbol{g}_{\boldsymbol{\hat{\epsilon}}}(\boldsymbol{\theta}_2)\right]\right)\right]||_2\le \\\nonumber
    &\mathbb{E}_{\boldsymbol{\hat{\epsilon}}}\left[||\nabla \boldsymbol{g}^{\mathsf{T}}_{\boldsymbol{\hat{\epsilon}}}(\boldsymbol{\theta}_1)||_2\right]||\mathbb{E}_{\nu}\left[\nabla f_{\nu}\left(\mathbb{E}_{\boldsymbol{\hat{\epsilon}}}\left[\boldsymbol{g}_{\boldsymbol{\hat{\epsilon}}}(\boldsymbol{\theta}_1)\right]\right)\right] - \mathbb{E}_{\nu}\left[\nabla f_{\nu}\left(\mathbb{E}_{\boldsymbol{\hat{\epsilon}}}\left[\boldsymbol{g}_{\boldsymbol{\hat{\epsilon}}}(\boldsymbol{\theta}_2)\right]\right)\right]||_2 + \\\nonumber
    &||\mathbb{E}_{\boldsymbol{\hat{\epsilon}}}\left[\nabla \boldsymbol{g}^{\mathsf{T}}_{\boldsymbol{\hat{\epsilon}}}(\boldsymbol{\theta}_1)\right] - \mathbb{E}_{\boldsymbol{\hat{\epsilon}}}\left[\nabla \boldsymbol{g}^{\mathsf{T}}_{\boldsymbol{\hat{\epsilon}}}(\boldsymbol{x_2})\right]||_2\mathbb{E}_{\nu}\left[||\nabla f_{\nu}\left(\mathbb{E}_{\boldsymbol{\hat{\epsilon}}}\left[\boldsymbol{g}_{\boldsymbol{\hat{\epsilon}}}(\boldsymbol{\theta}_2)\right]\right)||_2\right] \le\\\nonumber
    &M_g\mathbb{E}_{\nu}\left[||\nabla f_{\nu}\left(\mathbb{E}_{\boldsymbol{\hat{\epsilon}}}\left[\boldsymbol{g}_{\boldsymbol{\hat{\epsilon}}}(\boldsymbol{\theta}_1)\right]\right) - \nabla f_{\nu}\left(\mathbb{E}_{\boldsymbol{\hat{\epsilon}}}\left[\boldsymbol{g}_{\boldsymbol{\hat{\epsilon}}}(\boldsymbol{\theta}_2)\right]\right)||_2\right] + \\\nonumber
    &M_f\mathbb{E}_{\boldsymbol{\hat{\epsilon}}}\left[||\nabla \boldsymbol{g}^{\mathsf{T}}_{\boldsymbol{\hat{\epsilon}}}(\boldsymbol{\theta}_1) - \nabla \boldsymbol{g}^{\mathsf{T}}_{\boldsymbol{\hat{\epsilon}}}(\boldsymbol{\theta}_2)||_2\right] \le M_gL_f||\mathbb{E}_{\boldsymbol{\hat{\epsilon}}}\left[\boldsymbol{g}_{\boldsymbol{\hat{\epsilon}}}(\boldsymbol{\theta}_1)\right] - \mathbb{E}_{\boldsymbol{\hat{\epsilon}}}\left[\boldsymbol{g}_{\boldsymbol{\hat{\epsilon}}}(\boldsymbol{\theta}_2)\right]||_2 + \\\nonumber
    &M_f\mathbb{E}_{\boldsymbol{\hat{\epsilon}}}\left[||\nabla \boldsymbol{g}^{\mathsf{T}}_{\boldsymbol{\hat{\epsilon}}}(\boldsymbol{\theta}_1) - \nabla \boldsymbol{g}^{\mathsf{T}}_{\boldsymbol{\hat{\epsilon}}}(\boldsymbol{\theta}_2)||_2\right] \le M_gL_f\mathbb{E}_{\boldsymbol{\hat{\epsilon}}}\left[||\boldsymbol{g}_{\boldsymbol{\hat{\epsilon}}}(\boldsymbol{\theta}_1) - \boldsymbol{g}_{\boldsymbol{\hat{\epsilon}}}(\boldsymbol{\theta}_2)||_2\right] + \\\nonumber
    &M_f\mathbb{E}_{\boldsymbol{\hat{\epsilon}}}\left[||\nabla \boldsymbol{g}^{\mathsf{T}}_{\boldsymbol{\hat{\epsilon}}}(\boldsymbol{\theta}_1) - \nabla \boldsymbol{g}^{\mathsf{T}}_{\boldsymbol{\hat{\epsilon}}}(\boldsymbol{\theta}_2)||_2\right] \le M_gL_f\mathbb{E}_{\boldsymbol{\hat{\epsilon}}}\left[||\boldsymbol{g}_{\boldsymbol{\hat{\epsilon}}}(\boldsymbol{\theta}_1) - \boldsymbol{g}_{\boldsymbol{\hat{\epsilon}}}(\boldsymbol{\theta}_2)||_2\right] + M_fL_g||\boldsymbol{\theta}_1 - \boldsymbol{\theta}_2||_2 \le \\\nonumber
    &M^2_gL_f||\boldsymbol{\theta}_1 - \boldsymbol{\theta}_2||_2 + M_fL_g||\boldsymbol{\theta}_1 - \boldsymbol{\theta}_2||_2 = \left(M^2_gL_f + M_fL_g\right)||\boldsymbol{\theta}_1 - \boldsymbol{\theta}_2||_2 = L||\boldsymbol{\theta}_1 - \boldsymbol{\theta}_2||_2.
\end{align*}
\end{proof}


\subsection{Upper Bound for \texorpdfstring{$\mathbb{E}\left[\left|\left|\boldsymbol{g}(\boldsymbol{\theta}_{t+1}) -\boldsymbol{y}_{t+1}\right|\right|^2_{2}\right]$}{TEXT}.}
This lemma was adjusted for ADAM type algorithm based on techniques proposed in \cite{Mendi_2017}, and it shows the effect of an extrapolation smoothing technique (auxiliary variables $\boldsymbol{z}_t, \boldsymbol{y}_t$) used in \text{CI-VI} Algorithm. The result of this lemma is crucial for evaluating the asymptotic behavior of the approximation error between  $\boldsymbol{g}(\boldsymbol{\theta}_t)$ and  $\boldsymbol{y}_t$. \\

\begin{lemma}\label{Lemma_bound}
Consider auxiliary variable updates given by \text{CI-VI} Algorithm: 
\begin{align*}
    \boldsymbol{z}_{t+1} &= \left(1 - \frac{1}{\beta_{t}}\right)\boldsymbol{\theta}_{t} + \frac{1}{\beta_{t}}\boldsymbol{\theta}_{t+1}, \\ \boldsymbol{y}_{t+1} &= (1 - \beta_{t})\boldsymbol{y}_{t} + \beta_{t} \overline{\boldsymbol{g}_{t}(\boldsymbol{z}_{t+1})}. 
\end{align*}
Let $\mathbb{E}[\cdot]$ denote the expectation with respect to \emph{all} randomness incurred in iterations 1,\ldots,T in Algorithm 1.  For any $t$, the following holds: 
\begin{align}\label{norm_difference_bound}
    &\mathbb{E}\left[\left|\left|\boldsymbol{g}(\boldsymbol{\theta}_{t+1}) -\boldsymbol{y}_{t+1}\right|\right|^2_{2}\right] \leq \frac{L_{g}^{2}}{2}\mathbb{E}\left[\mathcal{D}_{t+1}^{2}\right]\\\nonumber
    & \hspace{15em}+ 2 \mathbb{E}\left[\left|\left|\boldsymbol{\mathcal{E}}_{t+1}\right|\right|_{2}^{2}\right],
\end{align}
with $\mathcal{D}_{t+1}$, $||\boldsymbol{\mathcal{E}}_{t+1}||^2_2$ satisfy the following recurrent inequalities:
\begin{align*}
    &\mathcal{D}_{t+1}  \leq  (1 - \beta_{t}) \mathcal{D}_{t} +\frac{2q^2M_{g}^{2}M_{f}^{2}}{\xi^{2}}\frac{\alpha_{t}^{2}}{\beta_{t}} + \beta_{t}\mathcal{F}_{t}^{2},\\\nonumber
    &\mathbb{E}\left[\left|\left|\boldsymbol{\mathcal{E}}_{t+1}\right|\right|_{2}^{2}\right]  \leq \left(1 - \beta_{t}\right)^{2}\mathbb{E}\left[\left|\left|\boldsymbol{\mathcal{E}}_{t}\right|\right|_{2}^{2}\right] + \frac{\beta_{t}^{2}}{K^{(2)}_t} \sigma_{3}^{2},\\\nonumber
    &\mathcal{F}_{t}^{2} \leq (1 - \beta_{t-1})\mathcal{F}_{t-1}^{2} + \frac{4 q^2M_{g}^{2}M_{f}^{2}}{\xi^{2}}\frac{\alpha_{t-1}^{2}}{\beta_{t-1}},
 \end{align*}
 and $\mathcal{D}_1 = 0,\ \  \mathbb{E}_{\text{total}}\left[||\boldsymbol{\mathcal{E}}_1||^2_2\right] = ||\boldsymbol{g}(\boldsymbol{\theta}_1)||^2_2, \ \ \mathcal{F}_1 = 0$.
\end{lemma}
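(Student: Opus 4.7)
The plan is to split $\boldsymbol{g}(\boldsymbol{\theta}_{t+1})-\boldsymbol{y}_{t+1}$ into a deterministic bias piece that drives the $\mathcal{D}_{t+1}$ recursion and a stochastic martingale piece that drives the $\boldsymbol{\mathcal{E}}_{t+1}$ recursion, then combine via $||a+b||_2^2\le 2||a||_2^2+2||b||_2^2$. Starting from the auxiliary update, add and subtract $(1-\beta_t)\boldsymbol{g}(\boldsymbol{\theta}_t)$ together with $\beta_t\boldsymbol{g}(\boldsymbol{z}_{t+1})$ to obtain
\[ \boldsymbol{g}(\boldsymbol{\theta}_{t+1})-\boldsymbol{y}_{t+1} = (1-\beta_t)[\boldsymbol{g}(\boldsymbol{\theta}_t)-\boldsymbol{y}_t] + \boldsymbol{R}_{t+1} - \beta_t\boldsymbol{N}_t, \]
where $\boldsymbol{R}_{t+1}:=\boldsymbol{g}(\boldsymbol{\theta}_{t+1})-\beta_t\boldsymbol{g}(\boldsymbol{z}_{t+1})-(1-\beta_t)\boldsymbol{g}(\boldsymbol{\theta}_t)$ is a smoothness residual and $\boldsymbol{N}_t:=\overline{\boldsymbol{g}_t(\boldsymbol{z}_{t+1})}-\boldsymbol{g}(\boldsymbol{z}_{t+1})$ is the fresh, zero-mean Monte-Carlo noise at iteration $t$. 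The extrapolation rule $\boldsymbol{z}_{t+1}=(1-1/\beta_t)\boldsymbol{\theta}_t+(1/\beta_t)\boldsymbol{\theta}_{t+1}$ is engineered precisely so that $\boldsymbol{\theta}_{t+1}=\beta_t\boldsymbol{z}_{t+1}+(1-\beta_t)\boldsymbol{\theta}_t$; hence $\boldsymbol{R}_{t+1}$ is the second-order Taylor remainder of the $L_g$-smooth mapping $\boldsymbol{g}$ at a convex combination, yielding the quadratic bound $||\boldsymbol{R}_{t+1}||_2\le(L_g/2)(1-\beta_t)\beta_t^{-1}||\boldsymbol{\theta}_{t+1}-\boldsymbol{\theta}_t||_2^2$.

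Unrolling this one-step recursion splits the error into a deterministic accumulator $\boldsymbol{A}_{t+1}$ satisfying $\boldsymbol{A}_{t+1}=(1-\beta_t)\boldsymbol{A}_t+\boldsymbol{R}_{t+1}$ with $\boldsymbol{A}_1=\boldsymbol{0}$, and a noise component $\boldsymbol{\mathcal{E}}_{t+1}$ satisfying $\boldsymbol{\mathcal{E}}_{t+1}=(1-\beta_t)\boldsymbol{\mathcal{E}}_t-\beta_t\boldsymbol{N}_t$ with $\boldsymbol{\mathcal{E}}_1=-\boldsymbol{g}(\boldsymbol{\theta}_1)$ (so that the initial condition $\mathbb{E}||\boldsymbol{\mathcal{E}}_1||_2^2=||\boldsymbol{g}(\boldsymbol{\theta}_1)||_2^2$ matches). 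Applying $||a+b||_2^2\le 2||a||_2^2+2||b||_2^2$ to the identity $\boldsymbol{g}(\boldsymbol{\theta}_{t+1})-\boldsymbol{y}_{t+1}=\boldsymbol{A}_{t+1}+\boldsymbol{\mathcal{E}}_{t+1}$ and identifying $\mathcal{D}_{t+1}$ with a constant multiple of $||\boldsymbol{A}_{t+1}||_2$ that absorbs the $L_g/2$ factor delivers the headline inequality. For the noise recursion, Assumption~\textbf{2} parts (1) and (3) make the cross-terms $\mathbb{E}\langle\boldsymbol{\mathcal{E}}_t,\boldsymbol{N}_t\rangle$ vanish, and the variance bound $\mathbb{E}||\boldsymbol{N}_t||_2^2\le\sigma_3^2/K_t^{(2)}$ yields the stated $\mathbb{E}||\boldsymbol{\mathcal{E}}_{t+1}||_2^2\le(1-\beta_t)^2\mathbb{E}||\boldsymbol{\mathcal{E}}_t||_2^2+\beta_t^2\sigma_3^2/K_t^{(2)}$.

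For the bias recursion the linear bound $||\boldsymbol{A}_{t+1}||_2\le(1-\beta_t)||\boldsymbol{A}_t||_2+||\boldsymbol{R}_{t+1}||_2$ must be turned into a usable forcing via a uniform bound on $||\boldsymbol{\theta}_{t+1}-\boldsymbol{\theta}_t||_2^2$. The ADAM step gives $||\boldsymbol{\theta}_{t+1}-\boldsymbol{\theta}_t||_2\le\alpha_t||\boldsymbol{m}_t||_2/\xi$, and the sketching estimator in Algorithm~\ref{Algo:ADAM2} combined with $||\nabla f_\nu||_2\le M_f$ and $||\nabla\boldsymbol{g}_{\boldsymbol{\hat{\epsilon}}}||_2\le M_g$ from Assumption~\textbf{1} bounds the sketched product entering $\boldsymbol{m}_t$ by $qM_gM_f$, so $||\boldsymbol{\theta}_{t+1}-\boldsymbol{\theta}_t||_2^2\le q^2M_g^2M_f^2\alpha_t^2/\xi^2$. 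Substituting into $||\boldsymbol{R}_{t+1}||_2$ produces the primary forcing $2q^2M_g^2M_f^2\alpha_t^2/(\xi^2\beta_t)$ in the $\mathcal{D}_{t+1}$ recursion. The remaining $\beta_t\mathcal{F}_t^2$ term collects a second-order fluctuation from the fact that $\boldsymbol{m}_t$ uses $\nabla f_\nu(\boldsymbol{y}_t)$ rather than $\nabla f_\nu(\boldsymbol{g}(\boldsymbol{\theta}_t))$; a bookkeeping variable $\mathcal{F}_t^2$ tracks this sub-increment, inheriting the $(1-\beta_{t-1})$ contraction and a doubled forcing constant from the Cauchy--Schwarz split that isolates it from the mean part.

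The main obstacle is the clean separation of randomness across the decomposition. The same Oracle$_g$ supplies both $\overline{\boldsymbol{g}_t(\boldsymbol{z}_{t+1})}$ (entering $\boldsymbol{N}_t$) and the sketched gradient contributing to $\boldsymbol{\theta}_{t+1}$ at the next iteration, so one must invoke the across-iteration independence of Assumption~\textbf{2} part (1) at each conditional-expectation step to ensure $\boldsymbol{N}_t$ is annihilated before it can couple with $\boldsymbol{A}_{t+1}$ or past $\boldsymbol{\mathcal{E}}_s$. A secondary subtlety is verifying that $\mathcal{F}_t^2$ retains the $(1-\beta_{t-1})$ contraction of $\mathcal{D}_t$ so both accumulators stay of order $\alpha_t^2/\beta_t$ when plugged back into the main inequality; this careful bookkeeping, together with the Taylor-remainder constant and the sketch-scaling $q$, fills the bulk of the appendix calculation.
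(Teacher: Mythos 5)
Your argument is correct, but it takes a genuinely different route from the paper's. The paper proves the lemma globally: using $\beta_1=1$ it writes both $\boldsymbol{\theta}_{t+1}=\sum_{j=1}^{t}\theta^{(t)}_{j}\boldsymbol{z}_{j+1}$ and $\boldsymbol{y}_{t+1}=\sum_{j=1}^{t}\theta^{(t)}_{j}\overline{\boldsymbol{g}_{j}(\boldsymbol{z}_{j+1})}$ as the \emph{same} convex combination over all past extrapolation points, Taylor-expands every $\boldsymbol{g}(\boldsymbol{z}_{j+1})$ around $\boldsymbol{\theta}_{t+1}$ so the first-order terms cancel, and is then forced to define $\mathcal{D}_{t+1}=\sum_{j}\theta^{(t)}_{j}\|\boldsymbol{z}_{j+1}-\boldsymbol{\theta}_{t+1}\|_2^2$ and $\mathcal{F}_{t}=\sum_{j}\theta^{(t-1)}_{j}\|\boldsymbol{\theta}_{t}-\boldsymbol{z}_{j+1}\|_2$ as weighted-distance accumulators whose recursions must track the \emph{moving reference point} $\boldsymbol{\theta}_t\to\boldsymbol{\theta}_{t+1}$; that reference shift is the sole origin of the $\beta_t\mathcal{F}_t^2$ term. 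You instead do a one-step decomposition, exploiting $\boldsymbol{\theta}_{t+1}=\beta_t\boldsymbol{z}_{t+1}+(1-\beta_t)\boldsymbol{\theta}_t$ to bound the Jensen-gap residual $\boldsymbol{R}_{t+1}$ by $\tfrac{L_g(1-\beta_t)}{2\beta_t}\|\boldsymbol{\theta}_{t+1}-\boldsymbol{\theta}_t\|_2^2$, and your bias accumulator $\boldsymbol{A}_{t+1}$ then satisfies a recursion with \emph{no} $\mathcal{F}_t$ term and a smaller forcing constant. Since the lemma only asserts upper-bound recurrences, your tighter recursion (with $\mathcal{F}_t\equiv 0$ trivially satisfying its inequality) implies the stated one, so the lemma is established; arguably your route is cleaner. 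Your noise recursion for $\boldsymbol{\mathcal{E}}_{t+1}$ and its treatment via Assumption 2 is identical to the paper's.

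One point to correct: your explanation of $\mathcal{F}_t^2$ as "a second-order fluctuation from the fact that $\boldsymbol{m}_t$ uses $\nabla f_{\nu}(\boldsymbol{y}_t)$ rather than $\nabla f_{\nu}(\boldsymbol{g}(\boldsymbol{\theta}_t))$" is wrong. That mismatch is handled in the proof of the main theorem (the term the paper calls $\mathcal{A}1$), not in this lemma; in the paper $\mathcal{F}_t$ is a purely geometric bookkeeping quantity created by re-centering $\mathcal{D}$ at each step, and in your one-step scheme it simply does not arise. The confabulated story does not damage your proof, but it would mislead a reader about where the constants come from.
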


\begin{proof}
Let us introduce the sequence of coefficients $\{\theta\}^{t}_{j=1}$ such that
\begin{equation*}
    \theta^{(t)}_{j} = \begin{cases}
        \beta_{j}\prod_{i=j+1}^{t}(1 - \beta_{i}) &  \text{if  } 1 \le j < t.\\
        \beta_{t} & \text{if  } j = t.
  \end{cases}
\end{equation*}
and we assume $\beta_1 = 1$ for simplicity. Denote $S_{t} = \sum_{j=1}^t\theta^{(t)}_{j}$, then:
\begin{align*}
    &S_{t} = \sum_{j=1}^t\theta^{(t)}_{j} = \\\nonumber
    &\beta_{t} + (1-\beta_{t})\beta_{t-1} + (1-\beta_{t})(1- \beta_{t-1})\beta_{t-2} + \cdots + (1-\beta_{t})(1- \beta_{t-1})\ldots (1 - \beta_2)\beta_{1} = \\\nonumber
    &\beta_{t} + (1 - \beta_{t})\left[\beta_{t-1} + (1- \beta_{t-1})\beta_{t-2} + \cdots + (1- \beta_{t-1})\ldots (1 - \beta_2)\beta_{1}\right] = \beta_{t} + (1 - \beta_{t})S_{t-1}.
\end{align*}
and $S_1 = \beta_1 = 1$. Since $\beta_1 = 1$ it implies $S_1 = S_2 = \ldots = S_t =  1$. By assuming $\Tilde{g}_{0}(\boldsymbol{z}_1) = \boldsymbol{0}_{q}$, one can represent $\boldsymbol{\theta}_{t+1}$ and $\boldsymbol{y}_{t+1}$ as a convex combinations of $\{\boldsymbol{z}_j\}^{t+1}_{j=1}$ and $\{\overline{\boldsymbol{g}_{j}(\boldsymbol{z}_{j+1})}\}^{t}_{j=1}$ respectively:
\begin{align}
    &\boldsymbol{\theta}_{t+1} = \sum_{j=1}^{t}\theta^{(t)}_{j}\boldsymbol{z}_{j+1},\ \ \ \text{ and } \ \ \ \boldsymbol{y}_{t+1} = \sum_{j=1}^{t}\theta^{(t)}_{j}\overline{\boldsymbol{g}_{j}(\boldsymbol{z}_{j+1})}.
\end{align}
Hence, using Taylor expansion for mapping $\boldsymbol{g}(\boldsymbol{z}_{j+1})$ around $\boldsymbol{\theta}_{t+1}$ we have:
\begin{align*}
    &\boldsymbol{y}_{t+1} = \sum_{j=1}^t\theta^{(t)}_{j}\overline{\boldsymbol{g}_{j}(\boldsymbol{z}_{j+1})} = \sum_{j=1}^t\theta^{(t)}_{j}\left[\boldsymbol{g}(\boldsymbol{z}_{j+1}) - \boldsymbol{g}(\boldsymbol{z}_{j+1}) + \overline{\boldsymbol{g}_{j}(\boldsymbol{z}_{j+1})}\right] = \sum_{j=1}^t\theta^{(t)}_{j}\boldsymbol{g}(\boldsymbol{z}_{j+1}) + \\\nonumber
    &\sum_{j=1}^t\theta^{(t)}_{j}\left[\overline{\boldsymbol{g}_{j}(\boldsymbol{z}_{j+1})} - \boldsymbol{g}(\boldsymbol{z}_{j+1})\right] = \sum_{j=1}^t\theta^{(t)}_{j}\left(\boldsymbol{g}(\boldsymbol{\theta}_{t+1}) + \nabla \boldsymbol{g}(\boldsymbol{\theta}_{t+1})[\boldsymbol{z}_{j+1} - \boldsymbol{\theta}_{t+1}] + o\left(||\boldsymbol{z}_{j+1} - \boldsymbol{\theta}_{t+1}||^2_2\right)\right)+ \\\nonumber
    &\sum_{j=1}^t\theta^{(t)}_{j}\left[\overline{\boldsymbol{g}_{j}(\boldsymbol{z}_{j+1})} - \boldsymbol{g}(\boldsymbol{z}_{j+1})\right] = \boldsymbol{g}(\boldsymbol{\theta}_{t+1}) + \nabla \boldsymbol{g}(\boldsymbol{\theta}_{t+1})\left[\sum_{j=1}^t\theta^{(t)}_{j}\boldsymbol{z}_{j+1} - \sum_{j=1}^t\theta^{(t)}_{j}\boldsymbol{\theta}_{k+1}\right] + \\\nonumber
    &\sum_{j=1}^t\theta^{(t)}_{j}o\left(||\boldsymbol{z}_{j+1} - \boldsymbol{\theta}_{t+1}||^2_2\right) + \sum_{j=1}^t\theta^{(t)}_{j}\left[\overline{\boldsymbol{g}_{j}(\boldsymbol{z}_{j+1})} - \boldsymbol{g}(\boldsymbol{z}_{j+1})\right] =  \sum_{j=1}^t\theta^{(t)}_{j}\left[\overline{\boldsymbol{g}_{j}(\boldsymbol{z}_{j+1})} - \boldsymbol{g}(\boldsymbol{z}_{j+1})\right] +\\\nonumber
    &\sum_{j=1}^t\theta^{(t)}_{j}o\left(||\boldsymbol{z}_{j+1} - \boldsymbol{\theta}_{t+1}||^2_2\right) + \boldsymbol{g}(\boldsymbol{\theta}_{t+1}).
\end{align*}
Therefore, using that $\boldsymbol{g}(\cdot)$ is $L_g-$smooth function:
\begin{align*}
    ||\boldsymbol{y}_{t+1} - \boldsymbol{g}(\boldsymbol{\theta}_{t+1})||_2 \le \frac{L_g}{2}\sum_{j=1}^{t}\theta^{(t)}_j||\boldsymbol{z}_{j+1} - \boldsymbol{\theta}_{t+1}||^2_2 + \left|\left|\sum_{j=1}^t\theta^{(t)}_{j}\left[\overline{\boldsymbol{g}_{j}(\boldsymbol{z}_{j+1})} - \boldsymbol{g}(\boldsymbol{z}_{j+1})\right]\right|\right|_2. 
\end{align*}
and, applying $(a+b)^2\le 2a^2 + 2b^2$:
\begin{align*}
    ||\boldsymbol{y}_{t+1} - \boldsymbol{g}(\boldsymbol{\theta}_{t+1})||^2_2 \le \frac{L^2_g}{2}\left(\underbrace{\sum_{j=1}^{t}\theta^{(t)}_j||\boldsymbol{z}_{j+1} - \boldsymbol{\theta}_{t+1}||^2_2}_{\mathcal{D}_{t+1}}\right)^2 + 2\left|\left|\underbrace{\sum_{j=1}^t\theta^{(t)}_{j}\left[\overline{\boldsymbol{g}_{j}(\boldsymbol{z}_{j+1})} - \boldsymbol{g}(\boldsymbol{z}_{j+1})\right]}_{\boldsymbol{\mathcal{E}}_{t+1}}\right|\right|^2_2. \ \ \
\end{align*}
Taking expectation $\mathbb{E}$ from both sides gives:
\begin{align}\label{expression_bound_1}
    \mathbb{E}\left[||\boldsymbol{y}_{t+1} - \boldsymbol{g}(\boldsymbol{\theta}_{t+1})||^2_2\right] \le \frac{L^2_g}{2}\mathbb{E}\left[\mathcal{D}^2_{t+1}\right] + 2\mathbb{E}\left[\left|\left|\boldsymbol{\mathcal{E}}_{t+1}\right|\right|^2_2\right].
\end{align}
where 
\begin{align*}
    \mathcal{D}_{t+1} = \sum_{j=1}^{t} \theta_{j}^{(t)}\left|\left|\boldsymbol{z}_{j+1} - \boldsymbol{\theta}_{t+1}\right|\right|_{2}^{2}, \ \ \  \boldsymbol{\mathcal{E}}_{t+1} = \sum_{j=1}^{t} \theta_{j}^{(t)}\left[\overline{\boldsymbol{g}_{j}(\boldsymbol{z}_{j+1})} - \boldsymbol{g}(\boldsymbol{z}_{j+1})\right].
\end{align*}
Let us bound both terms in expression (\ref{expression_bound_1}). Due to $\theta^{t}_{j} = (1-\beta_t)\theta^{(t-1)}_{j}$ for $j<t$ for the expression $\mathcal{D}_{t+1}$ we have:
\begin{align*}
    &\mathcal{D}_{t+1} = \sum_{j=0}^{t}\theta^{(t)}_j||\boldsymbol{z}_{j+1} - \boldsymbol{\theta}_{t+1}||^2_2 = \\\nonumber
    &\sum_{j=1}^{t-1}\theta^{(t)}_j||\boldsymbol{z}_{j+1} - \boldsymbol{\theta}_{t+1}||^2_2 + \beta_t||\boldsymbol{z}_{t+1} - \boldsymbol{\theta}_{t+1}||^2_2 = (1 - \beta_t)\sum_{j=1}^{t-1}\theta^{(t-1)}_j||\boldsymbol{z}_{j+1} - \boldsymbol{\theta}_{t+1}||^2_2 + \beta_t||\boldsymbol{z}_{t+1} - \boldsymbol{\theta}_{t+1}||^2_2\\\nonumber
    &=(1 - \beta_t)\sum_{j=1}^{t-1}\theta^{(t-1)}_j||\boldsymbol{z}_{j+1} - \boldsymbol{\theta}_{t+1}||^2_2 + \frac{(1 - \beta_t)^2}{\beta_t}||\boldsymbol{\theta}_{t+1} - \boldsymbol{\theta}_{t}||^2_2 = (1 - \beta_t)\sum_{j=1}^{t-1}\theta^{(t-1)}_j||\boldsymbol{z}_{j+1} - \boldsymbol{\theta}_{t}||^2_2 + \\\nonumber
    &\frac{(1 - \beta_t)^2}{\beta_t}||\boldsymbol{\theta}_{t+1} - \boldsymbol{\theta}_{t}||^2_2 + (1 - \beta_t)\sum_{j=1}^{t-1}\theta^{(t-1)}_j\left[||\boldsymbol{z}_{j+1} - \boldsymbol{\theta}_{t+1}||^2_2 - ||\boldsymbol{z}_{j+1} - \boldsymbol{\theta}_{t}||^2_2 \right] = (1 - \beta_t)\mathcal{D}_{t} + \\\nonumber
    &\frac{(1 - \beta_t)^2}{\beta_t}||\boldsymbol{\theta}_{t+1} - \boldsymbol{\theta}_{t}||^2_2 + (1 - \beta_t)\sum_{j=1}^{t-1}\theta^{(t-1)}_j\left[||\boldsymbol{z}_{j+1} - \boldsymbol{\theta}_{t+1}||_2 - ||\boldsymbol{z}_{j+1} - \boldsymbol{\theta}_{t}||_2\right]\times\\\nonumber
    &\left[||\boldsymbol{z}_{j+1} - \boldsymbol{\theta}_{t+1}||_2 + ||\boldsymbol{z}_{j+1} - \boldsymbol{\theta}_{t}||_2 \right] \le (1 - \beta_t)\mathcal{D}_{t} + \frac{(1 - \beta_t)^2}{\beta_t}||\boldsymbol{\theta}_{t+1} - \boldsymbol{\theta}_{t}||^2_2 + \\\nonumber
    &(1 - \beta_t)\sum_{j=1}^{t-1}\theta^{(t-1)}_j||\boldsymbol{\theta}_{t+1} - \boldsymbol{\theta}_t||_2\left[||\boldsymbol{\theta}_{t+1} - \boldsymbol{\theta}_t||_2 + 2||\boldsymbol{\theta}_{t} - \boldsymbol{z}_{j+1}||_2\right] = (1 - \beta_t)\mathcal{D}_{t} + \frac{(1 - \beta_t)^2}{\beta_t}||\boldsymbol{\theta}_{t+1} - \boldsymbol{\theta}_{t}||^2_2 \\\nonumber
    &+(1 - \beta_t)||\boldsymbol{\theta}_{t+1} - \boldsymbol{\theta}_t||^2_2 + 2(1 - \beta_t)||\boldsymbol{\theta}_{t+1} - \boldsymbol{\theta}_t||_2\sum_{j=1}^{t-1}\theta^{(t-1)}_j||\boldsymbol{\theta}_{t} - \boldsymbol{z}_{j+1}||_2 = (1 - \beta_t)\mathcal{D}_t + \\\nonumber
    &\frac{1-\beta_t}{\beta_t}||\boldsymbol{\theta}_{t+1} - \boldsymbol{\theta}_{t}||^2_2 + 2(1 - \beta_t)||\boldsymbol{\theta}_{t+1} - \boldsymbol{\theta}_t||_2\sum_{j=1}^{t-1}\theta^{(t-1)}_j||\boldsymbol{\theta}_{t} - \boldsymbol{z}_{j+1}||_2. 
    \end{align*}
Applying $2ab \le \frac{1}{\beta_t}a^2 + \beta_tb^2$:
    \begin{align*}
    &\mathcal{D}_{t+1} \le \\\nonumber
    &(1 - \beta_t)\mathcal{D}_t + \frac{1-\beta_t}{\beta_t}||\boldsymbol{\theta}_{t+1} - \boldsymbol{\theta}_{t}||^2_2 + (1 - \beta_t)\left[\frac{||\boldsymbol{\theta}_{t+1} - \boldsymbol{\theta}_t||^2_2}{\beta_t} + \beta_t\left(\sum_{j=1}^{t-1}\theta^{(t-1)}_j||\boldsymbol{\theta}_{t} - \boldsymbol{z}_{j+1}||_2\right)^2\right] = \\\nonumber
    &(1 - \beta_t)\mathcal{D}_t + 2\frac{1-\beta_t}{\beta_t}||\boldsymbol{\theta}_{t+1} - \boldsymbol{\theta}_{t}||^2_2 + (1 - \beta_t)\beta_t\left(\sum_{j=1}^{t-1}\theta^{(t-1)}_j||\boldsymbol{\theta}_{t} - \boldsymbol{z}_{j+1}||_2\right)^2 \le \\\nonumber
    &(1 - \beta_t)\mathcal{D}_t + \frac{2}{\beta_t}||\boldsymbol{\theta}_{t+1} - \boldsymbol{\theta}_{t}||^2_2 + \beta_t\left(\underbrace{\sum_{j=1}^{t-1}\theta^{(t-1)}_j||\boldsymbol{\theta}_{t} - \boldsymbol{z}_{j+1}||_2}_{\mathcal{F}_{t}}\right)^2.
\end{align*}
Applying the primal variable update with (\ref{bound_expressions}) gives:
\begin{align}\label{d_t_expression}
    &\mathcal{D}_{t+1} \le \\\nonumber
    &(1 - \beta_t)\mathcal{D}_t + \frac{2\alpha^2_t}{\beta_t}\left|\left|\frac{\boldsymbol{m}_t}{\sqrt{\boldsymbol{v}_t} + \xi}\right|\right|^2_2 + \beta_t\mathcal{F}^2_t \le (1 - \beta_t)\mathcal{D}_t + \frac{2q^2M^2_gM^2_f}{\xi^2}\frac{\alpha^2_t}{\beta_t} +  \beta_t\mathcal{F}^2_t.
\end{align}
Next, for expression $\mathcal{F}_t$ we have (using $\theta^{t-1}_{j} = (1-\beta_{t-1})\theta^{(t-2)}_{j}$ for $j<t-1$):
\begin{align*}
    &\mathcal{F}_t = 
    \sum_{j=1}^{t-1}\theta^{(t-1)}_j||\boldsymbol{\theta}_{t} - \boldsymbol{z}_{j+1}||_2 = \\\nonumber
    &\sum_{j=1}^{t-2}\theta^{(t-1)}_j||\boldsymbol{\theta}_{t} - \boldsymbol{z}_{j+1}||_2 + \theta^{(t-1)}_{t-1}||\boldsymbol{\theta}_{t} - \boldsymbol{z}_{t}||_2 = \sum_{j=1}^{t-2}\theta^{(t-1)}_j||\boldsymbol{\theta}_{t} - \boldsymbol{z}_{j+1}||_2 + \beta_{t-1}||\boldsymbol{\theta}_{t} - \boldsymbol{z}_{t}||_2 = \\\nonumber
    &(1 - \beta_{t-1})\sum_{j=1}^{t-2}\theta^{(t-2)}_j||\boldsymbol{\theta}_{t} - \boldsymbol{z}_{j+1}||_2 + \beta_{t-1}||\boldsymbol{\theta}_{t} - \boldsymbol{z}_{t}||_2 \le (1 - \beta_{t-1})||\boldsymbol{\theta}_t - \boldsymbol{\theta}_{t-1}||_2 + \\\nonumber 
    &(1 - \beta_{t-1})\sum_{j=1}^{t-2}\theta^{(t-2)}_j\left[||\boldsymbol{\theta}_{t-1} - \boldsymbol{z}_{j+1}||_2 + ||\boldsymbol{\theta}_{t} - \boldsymbol{\theta}_{t-1}||_2\right] = (1 - \beta_{t-1})\left(\mathcal{F}_{t-1} + 2||\boldsymbol{\theta}_t - \boldsymbol{\theta}_{t-1}||_2\right). 
\end{align*}
and $\mathcal{F}_1 = 0$. Hence, applying $(a+b)^2 \le (1 + \alpha)a^2 + (1 + \frac{1}{\alpha})b^2$ for $\alpha = \beta_{t-1} > 0$ and using primal variable update with (\ref{bound_expressions}).
\begin{align}\label{f_t_expression}
    &\mathcal{F}^2_{t} \le \\\nonumber
    &(1 + \beta_{t-1})(1 -  \beta_{t-1})^2\mathcal{F}^2_{t-1} + 4\left(1 + \frac{1}{\beta_{t-1}}\right)(1 - \beta_{t-1})^2||\boldsymbol{\theta}_t - \boldsymbol{\theta}_{t-1}||^2_2 \le \\\nonumber
    &(1 - \beta_{t-1})\mathcal{F}^2_{t-1} + \frac{4}{\beta_{t-1}}||\boldsymbol{\theta}_t - \boldsymbol{\theta}_{t-1}||^2_2 = (1 - \beta_{t-1})\mathcal{F}^2_{t-1} + \frac{4\alpha^2_{t-1}}{\beta_{t-1}}\left|\left|\frac{\boldsymbol{m}_{t-1}}{\sqrt{\boldsymbol{v}_{t-1}}+\xi}\right|\right|^2_2 \le \\\nonumber
    &(1 - \beta_{t-1})\mathcal{F}^2_{t-1} + \frac{4q^2M^2_gM^2_f}{\xi^2}\frac{\alpha^2_{t-1}}{\beta_{t-1}}.
\end{align}
Finally, for  $\boldsymbol{\mathcal{E}}_{t+1}$ we have (using $\theta^{t}_{j} = (1-\beta_t)\theta^{(t-1)}_{j}$ for $j<t$):
\begin{align*}
    &\boldsymbol{\mathcal{E}}_{t+1} = \\\nonumber &\sum_{j=1}^t\theta^{(t)}_{j}\left[\overline{\boldsymbol{g}_{j}(\boldsymbol{z}_{j+1})} - \boldsymbol{g}(\boldsymbol{z}_{j+1})\right]  = \sum_{j=1}^{t-1}\theta^{(t)}_{j}\left[\overline{\boldsymbol{g}_{j}(\boldsymbol{z}_{j+1})} - \boldsymbol{g}(\boldsymbol{z}_{j+1})\right] + \beta_t[\overline{\boldsymbol{g}_{t}(\boldsymbol{z}_{t+1})} - \boldsymbol{g}(\boldsymbol{z}_{t+1})] = \\\nonumber
    &\sum_{j=1}^{t-1}(1 - \beta_t)\theta^{(t-1)}_{j}\left[\overline{\boldsymbol{g}_{j}(\boldsymbol{z}_{j+1})} - \boldsymbol{g}(\boldsymbol{z}_{j+1})\right] + \beta_t[\overline{\boldsymbol{g}_{t}(\boldsymbol{z}_{t+1})} - \boldsymbol{g}(\boldsymbol{z}_{t+1})] = \\\nonumber
    &(1 - \beta_t)\sum_{j=1}^{t-1}\theta^{(t-1)}_{j}\left[\overline{\boldsymbol{g}_{j}(\boldsymbol{z}_{j+1})} - \boldsymbol{g}(\boldsymbol{z}_{j+1})\right] + \beta_t\left[\overline{\boldsymbol{g}_{t}(\boldsymbol{z}_{t+1})} - \boldsymbol{g}(\boldsymbol{z}_{t+1})\right] = \\\nonumber 
    &(1 - \beta_t)\boldsymbol{\mathcal{E}}_{t} + \beta_t\left[\overline{\boldsymbol{g}_{t}(\boldsymbol{z}_{t+1})} - \boldsymbol{g}(\boldsymbol{z}_{t+1})\right].
\end{align*}
Due to the fact, that all samplings done at iteration $t_1$ are independent from samplings done at iteration $t_2\neq t_1$, then consider expectation with all randomness induced at iteration $t$ (with fixed iterative value $\boldsymbol{\theta}_t$):
\begin{equation}\label{expec_prop_new}
    \mathbb{E}_{t}\left[\cdot\right] = \mathbb{E}_{K^{(1)}_{t},d_t,K^{(2)}_t}\left[\cdot |\boldsymbol{\theta}_{t}\right]. 
\end{equation}
for any $t$. Using that $\boldsymbol{\mathcal{E}}_t$ is independent from the randomness induced at iteration $t$ we have: 
\begin{align*}
    &\mathbb{E}_t\left[\left|\left|\boldsymbol{\mathcal{E}}_{t+1}\right|\right|^2_2\right] = \\\nonumber
    &\mathbb{E}_t\left[\left((1 - \beta_t)\boldsymbol{\mathcal{E}}^{\mathsf{T}}_{t} + \beta_t\left[\overline{\boldsymbol{g}_{t}(\boldsymbol{z}_{t+1})} - \boldsymbol{g}(\boldsymbol{z}_{t+1})\right]^{\mathsf{T}}\right)\left((1 - \beta_t)\boldsymbol{\mathcal{E}}_{t} + \beta_t\left[\overline{\boldsymbol{g}_{t}(\boldsymbol{z}_{t+1})} - \boldsymbol{g}(\boldsymbol{z}_{t+1})\right]\right) \right] = \\\nonumber
    &(1- \beta_t)^2\left|\left|\boldsymbol{\mathcal{E}}_{t}\right|\right|^2_2 + 2\beta_t(1 - \beta_t)\boldsymbol{\mathcal{E}}^{\mathsf{T}}_{t}\mathbb{E}_t\left[\overline{\boldsymbol{g}_{t}(\boldsymbol{z}_{t+1})} - \boldsymbol{g}(\boldsymbol{z}_{t+1})\right] + \beta^2_t\mathbb{E}_t\left[\left|\left|\overline{\boldsymbol{g}_{t}(\boldsymbol{z}_{t+1})} - \boldsymbol{g}(\boldsymbol{z}_{t+1})\right|\right|^2_2\right] = \\\nonumber
    &(1- \beta_t)^2\left|\left|\boldsymbol{\mathcal{E}}_{t}\right|\right|^2_2 + \beta^2_t\mathbb{E}_t\left[\left|\left|\overline{\boldsymbol{g}_{t}(\boldsymbol{z}_{t+1})} - \boldsymbol{g}(\boldsymbol{z}_{t+1})\right|\right|^2_2\right].
\end{align*}
where $\mathbb{E}_t\left[\overline{\boldsymbol{g}_{t}(\boldsymbol{z}_{t+1})} - \boldsymbol{g}(\boldsymbol{z}_{t+1})\right] = \boldsymbol{0}$ due to Assumption 2.2. Assumption 2.3 implies $\mathbb{E}_{t}\left[\left|\left|\overline{\boldsymbol{g}_{t}(\boldsymbol{z}_{t+1})} - \boldsymbol{g}(\boldsymbol{z}_{t+1})\right|\right|^2_2\right] \le \frac{1}{K^{(2)}_t}\sigma^2_3$, therefore,
\begin{align*}
    &\mathbb{E}_t\left[\left|\left|\boldsymbol{\mathcal{E}}_{t+1}\right|\right|^2_2\right] \le (1- \beta_t)^2\left|\left|\boldsymbol{\mathcal{E}}_{t}\right|\right|^2_2 + \frac{\beta^2_t}{K^{(2)}_t}\sigma^2_3.
\end{align*}
Taking expectation $\mathbb{E}$ from both sides of the above inequality and using (\ref{expec_prop_new}) and the law of total expectation, we have:
\begin{align}\label{bound_on_e_term}
    &\mathbb{E}\left[\left|\left|\boldsymbol{\mathcal{E}}_{t+1}\right|\right|^2_2\right] \le (1- \beta_t)^2\mathbb{E}\left[\left|\left|\boldsymbol{\mathcal{E}}_{t}\right|\right|^2_2\right] + \frac{\beta^2_t}{K^{(2)}_t}\sigma^2_3.
\end{align}
Combining (\ref{expression_bound_1}), (\ref{d_t_expression}), (\ref{f_t_expression}), and (\ref{bound_on_e_term}) gives the statement of the Lemma.
\end{proof}


\subsection{Asymptotic rate for recurrent inequalities.}
Next lemma provides a tool to upper-bound the recurrent expressions and it was established in  \cite{Mendi_2017}. We present it here for completeness. \\
\begin{lemma}\label{rec_lem}
Let $\eta_t = \frac{C_{\eta}}{t^a}$, $\zeta_t = \frac{C_{\zeta}}{t^b}$, where $C_{\eta} > 1 + b - a$, $C_{\zeta} > 0$, $(b - a)\notin(-1,0)$ and $0 < a\le 1$. Consider the following recurrent inequality:
\begin{equation*}
    A_{t+1} \le (1 - \eta_t + C_1\eta^2_t)A_t + C_2\zeta_t.
\end{equation*}
where $C_1, C_2 \ge 0$. Then, there is a constant $C_{A} > 0 $ such that  $A_t \le  \frac{C_{A}}{t^{b-a}}$.
\end{lemma}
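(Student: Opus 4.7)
The plan is to prove the bound $A_t \le C_A/t^{b-a}$ by strong induction on $t$, selecting a starting index $t_0$ and a constant $C_A$ both sufficiently large. The base case is free: since the recurrence concerns a single real sequence, I can always enlarge $C_A$ so that $A_{t_0} \le C_A/t_0^{b-a}$ holds at some chosen starting index. The whole content of the lemma therefore lies in the inductive step, which amounts to checking that the recurrence is consistent with the decay rate $t^{-(b-a)}$.

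For the inductive step, assume $A_t \le C_A/t^{b-a}$ and plug into the recurrence to get
\begin{equation*}
A_{t+1} \;\le\; \Bigl(1 - \tfrac{C_\eta}{t^a} + \tfrac{C_1 C_\eta^2}{t^{2a}}\Bigr)\tfrac{C_A}{t^{b-a}} + \tfrac{C_2 C_\zeta}{t^b}.
\end{equation*}
To compare this with $C_A/(t+1)^{b-a}$, I would use the Bernoulli-type inequality $(1+1/t)^{-(b-a)} \ge 1 - (b-a)/t$, which holds precisely when the exponent $-(b-a)$ lies in $(-\infty, -1] \cup [0, \infty)$, i.e.\ when $b-a \notin (-1,0)$ — exactly the hypothesis of the lemma. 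This yields the lower bound $C_A/(t+1)^{b-a} \ge C_A/t^{b-a} - C_A(b-a)/t^{b-a+1}$, so the inductive step reduces (after dividing through by $C_A/t^{b-a}$ and multiplying by $t^a$) to the one-line condition
\begin{equation*}
(b-a)\,t^{a-1} \;\le\; C_\eta - \tfrac{C_1 C_\eta^2}{t^a} - \tfrac{C_2 C_\zeta}{C_A}.
\end{equation*}

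To close the induction I would argue this holds for all $t \ge t_0$ with $t_0$ and $C_A$ chosen appropriately. When $a < 1$ the term $(b-a)t^{a-1}$ vanishes as $t \to \infty$, and the $t^{-a}$ term does too, so for large $t_0$ and large $C_A$ the right-hand side exceeds the left. The delicate case is $a=1$, where $t^{a-1} \equiv 1$ and the needed bound becomes $(b-1) + C_1 C_\eta^2/t + C_2 C_\zeta/C_A \le C_\eta$; here the hypothesis $C_\eta > 1 + b - a = b$ provides a strict slack of size $C_\eta - b > 0$, into which the two vanishing corrections can be absorbed by enlarging $t_0$ and $C_A$.

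The main obstacle is therefore not the induction itself but identifying the exact role of each hypothesis: the excluded interval $b-a \in (-1,0)$ is precisely where the Bernoulli inequality would flip sign and the comparison with $(t+1)^{-(b-a)}$ would fail; the condition $C_\eta > 1 + b - a$ provides a positive margin in the dominant $t^{-1}$ (when $a=1$) or $t^{a-1}$ (when $a<1$) term after contraction by $\eta_t$; and $0 < a \le 1$ ensures the quadratic correction $C_1\eta_t^2$ is of strictly lower order than the linear contraction $\eta_t$, so it cannot overwhelm the useful negative term. Given the care with these ingredients, the argument collapses to a routine comparison; the subtlety is simply in lining up the exponents $b$, $b-a$, $a-1$, $2a$ and verifying that each hypothesis plays its appointed role.
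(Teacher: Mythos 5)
Your proof is correct and follows essentially the same route as the paper's: induction on $t$, the Bernoulli/convexity inequality $(t+1)^{-(b-a)} \ge t^{-(b-a)}\left(1 - (b-a)/t\right)$ (which is exactly where $b-a\notin(-1,0)$ enters), and the same reduction of the inductive step to the scalar condition $(b-a)t^{a-1} + C_1C_{\eta}^2 t^{-a} + C_2C_{\zeta}/C_A \le C_{\eta}$. The paper merely makes your ``sufficiently large'' choices explicit, bounding $(b-a)t^{a-1}\le b-a$ and $C_1C_{\eta}^2t^{-a}\le 1$ past the threshold $t_0 = (C_1C_{\eta}^2)^{1/a}$ and taking $C_A = \max_{t\le t_0+1}A_t t^{b-a} + C_2C_{\zeta}/(C_{\eta}-1-b+a)$.
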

\begin{proof}
Let us introduce constant $C_{A}$ such that
\begin{equation*}
    C_{A} = \max_{t\le (C_1C^2_{\eta})^{\frac{1}{a}}+1}A_{t}t^{b - a} + \frac{C_2C_{\zeta}}{C_{\eta} - 1 - b + a}.
\end{equation*}
The claim will be proved by induction. Consider two cases here:
\begin{enumerate}
    \item \textbf{If $t \le (C_1C^2_{\eta})^{\frac{1}{a}}$: } Then, by from the definition of constant $C_{A}$ it follows immediately:
    \begin{align*}
        A_{t} \le C_{A}t^{a-b} = \frac{C_{A}}{t^{b-a}}.
    \end{align*}
    \item \textbf{If $t > (C_1C^2_{\eta})^{\frac{1}{a}}$: } Assume that $A_t \le \frac{C_{A}}{t^{b-a}}$ for some  $t > (C_1C^2_{\eta})^{\frac{1}{a}}$. Hence:
    \begin{align}\label{A_t_expression_1}
        &A_{t+1} \le \\\nonumber
        &(1 - \eta_t + C_1\eta^2_t)A_t + C_2\zeta_t = \left(1 - \frac{C_{\eta}}{t^a} + C_1\frac{C^2_{\eta}}{t^{2a}}\right)A_t + C_2\frac{C_{\zeta}}{t^b} \le \\\nonumber
        &\left(1 - \frac{C_{\eta}}{t^a} + C_1\frac{C^2_{\eta}}{t^{2a}}\right)\frac{C_{A}}{t^{b-a}} + C_2\frac{C_{\zeta}}{t^b} = \frac{C_{A}}{t^{b-a}} - \frac{C_AC_{\eta}}{t^{b}} + \frac{C_1C_AC^2_{\eta}}{t^{a + b}} + \frac{C_2C_{\zeta}}{t^{b}} = \\\nonumber
        &\frac{C_A}{(t+1)^{b-a}} - C_A\left[\underbrace{\frac{1}{(t+1)^{b-a}} - \frac{1}{t^{b-a}} + \frac{C_{\eta}}{t^{b}} - \frac{C_1C^2_{\eta}}{t^{a+b}}}_{\Delta_{t+1}}\right] + \frac{C_2C_{\zeta}}{t^b} = \\\nonumber
        &\frac{C_A}{(t+1)^{b-a}} - C_A\Delta_{t+1} + \frac{C_2C_{\zeta}}{t^b} = \frac{C_A}{(t+1)^{b-a}} - \Delta_{t+1}\left(C_A - \frac{C_2C_{\zeta}}{\Delta_{t+1}t^b}\right).
    \end{align}
    Since function $f(t) = \frac{1}{t^{c}}$ is convex for $c\notin (-1,0)$ and $t>0$ one can apply the first order condition of convexity:
    \begin{align*}
        f(t+1) \ge f(t) + f^{'}(t) \ \ \Longrightarrow \ \ \ \frac{1}{(t+1)^{c}} \ge \frac{1}{t^c} - c\frac{1}{t^{c+1}}.
    \end{align*}
    Hence, using $a\le 1$ and $t > (C_1C^2_{\eta})^{\frac{1}{a}}$ for $\Delta_{t+1}$ we have:
    \begin{align*}
        &\Delta_{t+1} = \frac{1}{(t+1)^{b-a}} - \frac{1}{t^{b-a}} + \frac{C_{\eta}}{t^{b}} - \frac{C_1C^2_{\eta}}{t^{a+b}} \ge -(b-a)\frac{1}{t^{b-a+1}} + \frac{C_{\eta}}{t^{b}} - \frac{C_1C^2_{\eta}}{t^{a+b}} \ge \\\nonumber
        &-\frac{b-a}{t^{b-a+1}} + \frac{C_{\eta}}{t^{b}} - \frac{1}{t^{b}} \ge -\frac{b-a}{t^{b}} + \frac{C_{\eta}}{t^{b}} - \frac{1}{t^{b}} = \left(C_{\eta} - 1 - b + a\right)\frac{1}{t^b} > 0.
    \end{align*}
    Moreover,
    \begin{align*}
        \frac{C_2C_{\zeta}}{\Delta_{t+1}t^b} \le \frac{C_2C_{\zeta}}{C_{\eta} - 1 - b + a} \le C_A
    \end{align*}
    Combining these two result in (\ref{A_t_expression_1}) gives:
    \begin{equation*}
        A_{t+1} \le \frac{C_A}{(t+1)^{b-a}} - \Delta_{t+1}\left(C_A - \frac{C_2C_{\zeta}}{\Delta_{t+1}t^b}\right) \le \frac{C_A}{(t+1)^{b-a}}. 
    \end{equation*}
    which proves the induction step.
\end{enumerate}
\end{proof}


\subsection{Asymptotic upper bound on \texorpdfstring{$\mathbb{E}\left[||\boldsymbol{g}(\boldsymbol{\theta}_t) - \boldsymbol{y}_t||^2_2\right]$}{TEXT}.}
Combining the results of Lemmas \ref{Lemma_bound} and \ref{rec_lem} immediately gives the following we establish the asymptotic bound on the accuracy of approximation of $\boldsymbol{g}(\boldsymbol{\theta}_t)$ by $\boldsymbol{y}_t$:

\begin{corollary}\label{cor_1}
Consider Algorithm 1 with step sizes $\alpha_t = \frac{C_{\alpha}}{t^a}$,$\beta_t = \frac{C_{\beta}}{t^b}$ and $K^{(2)}_t = C_{2}t^{e}$ for some constants $C_{\alpha},C_{\beta}, C_{2}, a,b,e > 0$ such that $(2a-2b)\notin (-1,0)$, $0 < b \le 1$. Let $\mathbb{E}\left[\cdot\right]$ be an expectation with respect to all randomness induced in Algorithm 1. Then
\begin{align}
    \mathbb{E}\left[||\boldsymbol{g}(\boldsymbol{\theta}_t) - \boldsymbol{y}_t||^2_2\right] \le  \frac{L^2_gC^2_{\mathcal{D}}}{2}\frac{1}{t^{4a-4b}} + 2C^2_{\mathcal{E}}\frac{1}{t^{b+e}}.
\end{align}
for some constants $C_{\mathcal{D}},C_{\mathcal{E}}> 0$.
\end{corollary}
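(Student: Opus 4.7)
The plan is to unwind the three nested recurrences provided by Lemma 2 through three successive applications of Lemma 3, and then substitute the resulting rates into the Lemma 2 bound $\mathbb{E}[\|\boldsymbol{g}(\boldsymbol{\theta}_t) - \boldsymbol{y}_t\|_2^2] \leq (L_g^2/2)\,\mathbb{E}[\mathcal{D}_t^2] + 2\,\mathbb{E}[\|\boldsymbol{\mathcal{E}}_t\|_2^2]$. The key technical observation is that under the stated schedule $\alpha_t = C_\alpha/t^a$, $\beta_t = C_\beta/t^b$, $K_t^{(2)} = C_2 t^e$, every recurrence in Lemma 2 fits Lemma 3's template $A_{t+1} \leq (1 - \eta_t + C_1\eta_t^2)A_t + C_2\zeta_t$ with $\eta_t = \beta_t$, so Lemma~3's exponent $a_L = b$ is the same throughout; only the forcing exponent $b_L$ changes.

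First I would handle the $\mathcal{F}_t^2$ recursion (which has $C_1 = 0$), whose forcing term $(4q^2 M_g^2 M_f^2/\xi^2)\cdot(\alpha_{t-1}^2/\beta_{t-1})$ scales as $t^{b-2a}$, hence $b_L = 2a - b$. Lemma 3's hypothesis $(b_L - a_L) = 2a - 2b \notin (-1,0)$ is exactly the corollary's stated condition, and $0 < a_L \leq 1$ coincides with $0 < b \leq 1$. The primary-update step inside the proof of Lemma 2 yields the deterministic (sure) bound $\|\boldsymbol{\theta}_{t+1}-\boldsymbol{\theta}_t\|_2 \leq \alpha_t q M_g M_f/\xi$ via $\|\boldsymbol{m}_t/(\sqrt{\boldsymbol{v}_t}+\xi)\|_2 \leq q M_g M_f/\xi$, so the whole $\mathcal{F}_t^2$ recursion holds pointwise and Lemma 3 returns a deterministic bound $\mathcal{F}_t^2 \leq C_{\mathcal{F}}/t^{2a-2b}$.

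Plugging this into the $\mathcal{D}_t$ recursion, both forcing terms decay at rate $t^{b-2a}$ (the explicit $\alpha_t^2/\beta_t$ term, and $\beta_t \mathcal{F}_t^2 \lesssim t^{-b}\cdot t^{-(2a-2b)} = t^{b-2a}$), so a second application of Lemma 3 with the same $(a_L, b_L) = (b,\, 2a - b)$ gives $\mathcal{D}_t \leq C_{\mathcal{D}}/t^{2a-2b}$ pointwise; squaring and taking expectation yields $\mathbb{E}[\mathcal{D}_t^2] \leq C_{\mathcal{D}}^2/t^{4a-4b}$, which is the first summand of the target bound. For the stochastic error term, I would rewrite $(1-\beta_t)^2 = 1 - \beta_t - \beta_t(1-\beta_t) \leq 1 - \beta_t + \beta_t^2$, putting the $\boldsymbol{\mathcal{E}}_t$ recursion into Lemma 3's template with $C_1 = 1$; the forcing $\beta_t^2/K_t^{(2)} = O(t^{-(2b+e)})$ gives $b_L = 2b + e$ and $b_L - a_L = b + e$, so a third invocation of Lemma 3 produces $\mathbb{E}[\|\boldsymbol{\mathcal{E}}_t\|_2^2] \leq C_{\mathcal{E}}^2/t^{b+e}$. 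Substituting both rates into Lemma 2 delivers the claimed inequality.

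The only real obstacle is bookkeeping this cascade correctly: one must (i) recognise that $\beta_t \mathcal{F}_t^2$ contributes at the same order $t^{b-2a}$ as $\alpha_t^2/\beta_t$ (otherwise the second call to Lemma 3 would return a strictly weaker rate and propagate suboptimality into the final bound), (ii) verify that each invocation satisfies $(b_L - a_L)\notin(-1,0)$ and $0 < a_L \leq 1$, both of which reduce cleanly to the hypotheses already in the corollary's statement, and (iii) distinguish the first two (pointwise/deterministic) invocations from the third (expectation-level) one, since the decorrelation of the centred noise sequence $\overline{\boldsymbol{g}_t(\boldsymbol{z}_{t+1})} - \boldsymbol{g}(\boldsymbol{z}_{t+1})$ is what suppresses the cross term in the $\boldsymbol{\mathcal{E}}_t$ recursion and is used only there. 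Once these three points are in order, the result follows by direct substitution with no further computation.
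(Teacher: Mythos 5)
Your proposal is correct and follows essentially the same route as the paper: it cascades Lemma \ref{rec_lem} through the three recurrences of Lemma \ref{Lemma_bound} (first $\mathcal{F}_t^2$, then $\mathcal{D}_t$, then $\mathbb{E}[\|\boldsymbol{\mathcal{E}}_t\|_2^2]$), with the same exponent bookkeeping $(b_L-a_L)=2a-2b$ for the two deterministic recursions and $b+e$ for the stochastic one, before substituting into the Lemma \ref{Lemma_bound} bound. The only cosmetic difference is how you fit $(1-\beta_t)^2$ into the template ($1-\beta_t+\beta_t^2$ with $C_1=1$ versus the paper's exact expansion with $\tilde C_\beta = 2C_\beta$), which changes nothing.
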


\begin{proof}
Using $\alpha_t = \frac{C_{\alpha}}{t^a}$, $\beta_t = \frac{C_{\beta}}{t^b}$ in the recurrent inequalities for  $\mathcal{F}^2_t, \mathcal{D}_{t}$ and $\mathbb{E}\left[\left|\left|\boldsymbol{\mathcal{E}}_{t+1}\right|\right|^2_2\right]$ gives:

\begin{enumerate}
    \item For  $\mathcal{F}^2_{t+1}$:
          \begin{align*}
          &\mathcal{F}^2_{t+1} \le \\\nonumber
          &(1- \beta_{t})\mathcal{F}^2_{t} + \frac{4q^2M^2_gM^2_f}{\xi^2}\frac{\alpha^2_{t}}{\beta_{t}} = \left(1 - \frac{C_{\beta}}{t^b}\right)\mathcal{F}^2_{t} + \frac{4q^2M^2_gM^2_fC^2_{\alpha}}{\xi^2C_{\beta}}\frac{1}{t^{2a-b}}.
          \end{align*}
          and applying Lemma \ref{rec_lem} gives 
          \begin{equation}\label{F_t_expression_assymp}
          \mathcal{F}^2_{t} \le \frac{C_{\mathcal{F}}}{t^{2a-2b}} \ \ \ \ \ \text{ where } \ \ \ \ \ C_{\mathcal{F}} = \frac{4q^2M^2_gM^2_fC^2_{\alpha}}{\xi^2C_{\beta}(C_{\beta} - 1 - 2a + 2b)}.
          \end{equation}
    
    \item For $\mathcal{D}_{t+1}$:
        \begin{align*}
        &\mathcal{D}_{t+1} \le \\\nonumber
        &\left(1 - \frac{C_{\beta}}{t^{b}}\right)\mathcal{D}_{t} + \frac{2q^2M^2_gM^2_fC^2_{\alpha}}{\xi^2C_{\beta}}\frac{1}{t^{2a-b}} +  C_{\beta}C_{\mathcal{F}}\frac{1}{t^{2a-b}} = \\\nonumber
        &\left(1 - \frac{C_{\beta}}{t^{b}}\right)\mathcal{D}_{t} + \left[\frac{2q^2M^2_gM^2_fC^2_{\alpha}}{\xi^2C_{\beta}} + C_{\beta}C_{\mathcal{F}}\right] \frac{1}{t^{2a-b}}.
        \end{align*}
        and applying Lemma \ref{rec_lem} gives 
        \begin{equation}\label{D_t_expression_assymp}
        \mathcal{D}_{t} \le \frac{C_{\mathcal{D}}}{t^{2a-2b}} \ \ \ \ \ \text{ where } \ \ \ \ \ C_{\mathcal{D}} = \frac{2q^2M^2_gM^2_fC^2_{\alpha} + \xi^2C^2_{\beta}C_{\mathcal{F}}}{\xi^2C_{\beta}(C_{\beta} - 1 - 2a + 2b)}.
        \end{equation}
    
    \item For $\mathbb{E}\left[\left|\left|\boldsymbol{\mathcal{E}}_{t+1}\right|\right|^2_2\right]$:
    \begin{align*}
        &\mathbb{E}\left[\left|\left|\boldsymbol{\mathcal{E}}_{t+1}\right|\right|^2_2\right] \le (1- \beta_t)^2\mathbb{E}\left[\left|\left|\boldsymbol{\mathcal{E}}_{t}\right|\right|^2_2\right] + \frac{\beta^2_t}{K^{(2)}_t}\sigma^2_3 = \\\nonumber
        &\left(1 - \frac{2C_{\beta}}{t^{b}} + \frac{C^2_{\beta}}{t^{2b}}\right)\mathbb{E}\left[\left|\left|\boldsymbol{\mathcal{E}}_{t}\right|\right|^2_2\right] + \frac{C^2_{\beta}\sigma^2_3}{C_2}\frac{1}{t^{2b + e}} = \\\nonumber
        &\left(1 - \frac{\tilde{C}_{\beta}}{t^{b}} + \frac{\tilde{C}^2_{\beta}}{4t^{2b}}\right)\mathbb{E}\left[\left|\left|\boldsymbol{\mathcal{E}}_{t}\right|\right|^2_2\right] + \frac{C^2_{\beta}\sigma^2_3}{C_2}\frac{1}{t^{2b + e}}.
    \end{align*}
     and applying Lemma \ref{rec_lem} gives:
     \begin{equation}\label{mathbb_e_expres_assymp}
         \mathbb{E}\left[\left|\left|\boldsymbol{\mathcal{E}}_{t}\right|\right|^2_2\right] \le \frac{C_{\mathcal{E}}}{t^{b + e}} \ \ \text{ where } \ \ C_{\mathcal{E}} = \max_{t \le (C^2_{\beta})^{\frac{1}{b}} + 1}\mathbb{E}\left[\left|\left|\boldsymbol{\mathcal{E}}_{t}\right|\right|^2_2\right]t^{b+e} + \frac{C^2_{\beta}\sigma^2_3}{C_2(2C_{\beta} - 1 - b - e)}.
     \end{equation}
\end{enumerate}
Next, combining results (\ref{D_t_expression_assymp}) and (\ref{mathbb_e_expres_assymp}) in (\ref{norm_difference_bound}) gives:
\begin{align*}
    &\mathbb{E}\left[||\boldsymbol{g}(\boldsymbol{\theta}_t) - \boldsymbol{y}_t||^2_2\right] \le \\\nonumber
    &\frac{L^2_g}{2}\mathbb{E}\left[\mathcal{D}^2_{t}\right] + 2\mathbb{E}\left[\left|\left|\boldsymbol{\mathcal{E}}_{t}\right|\right|^2_2\right] \le \frac{L^2_gC^2_{\mathcal{D}}}{2}\frac{1}{t^{4a-4b}} + 2C^2_{\mathcal{E}}\frac{1}{t^{b+e}}.
\end{align*}
\end{proof}


\subsection{Proof of the Main Theorem}
Finally, combing the result of Lemma \ref{lip_smoothnesss_claim} and Corollary \ref{cor_1} we establish the convergence properties of \text{CI-VI} Algorithm:

\begin{app_theorem}\label{main_theorem}
Consider a parameter setup given by: $\alpha_{t} = \sfrac{C_{\alpha}}{t^{\frac{1}{5}}}, \beta_{t} = C_{\beta}, K_{t}^{(1)} = C_{1}t^{\frac{4}{5}}, K_{t}^{(2)} = C_{2}t^{\frac{4}{5}},  K_{t}^{(3)}  = C_{3} t^{\frac{4}{5}}, \gamma_{t}^{(1)}= C_{\gamma}\mu^{t}, \ \gamma_{2}^{(t)}  = 1 - \sfrac{C_{\alpha}}{t^{\frac{2}{5}}}(1 - C_{\gamma}\mu^{t})^{2}$, for some positive constants $C_{\alpha}, C_{\beta}, C_{1}, C_{2}, C_{3}, C_{\gamma}, \mu$ such that $C_{\beta} < 1$ and $\mu \in (0,1)$. For any $\delta \in (0,1)$, Algorithm 1 running gradient-sketching (i.e., using Algorithm 2 to compute gradient products) with a sample-size $d_{t} = \mathcal{O}(1)$ outputs, in expectation, a $\delta$-approximate first-order stationary point $\tilde{\bm{\bm{\theta}}}$ of $\mathcal{L}(\bm{\bm{\theta}})$. That is: $\mathbb{E}_{\textrm{total}}[||\nabla\mathcal{L}(\tilde{\bm{\theta}})||_{2}^{2}] \leq \delta$, with ``total'' representing \emph{all} incurred randomness. Moreover, Algorithm 1 acquires $\tilde{\bm{\theta}}$ with an overall oracle complexity of the order $\mathcal{O}\left(\delta^{-\sfrac{9}{4}}\right)$. 
\end{app_theorem}

\begin{proof}
Let us study the change of the function between two consecutive iterations. Using, that function $\mathcal{L}(\cdot)$ is Lipschitz continuous (see Lemma  \ref{lip_smoothnesss_claim}):
\begin{align}\label{First_diff_equation}
    &\mathcal{L}(\boldsymbol{\theta}_{t+1}) \le \mathcal{L}(\boldsymbol{\theta}_{t}) + \nabla \mathcal{L}(\boldsymbol{\theta}_t)^{\mathsf{T}}(\boldsymbol{\theta}_{t+1} - \boldsymbol{\theta}_t) + \frac{L}{2}||\boldsymbol{\theta}_{t+1} - \boldsymbol{\theta}_t||^{2}_2 = \\\nonumber
    &\mathcal{L}(\boldsymbol{\theta}_t) - \alpha_t\sum_{i=1}^p\left[\nabla\mathcal{L}(\boldsymbol{\theta}_t)\right]_i\frac{[\boldsymbol{m}_t]_i}{\sqrt{[\boldsymbol{v}_t]_i} + \xi} + \frac{L\alpha^2_t}{2}\sum_{i=1}^p\frac{[\boldsymbol{m}_t]^2_i}{\left(\sqrt{[\boldsymbol{v}_t]_i} + \xi\right)^2}.
\end{align}
Now, let us introduce mathematical expectation with respect to all randomness at iteration $t$ given a fixed iterative value $\boldsymbol{\theta}_t$ as $\mathbb{E}_{t}\left[\cdot\right]= \mathbb{E}_{K^{(1)}_t,d_t,K^{(2)}_t}\left[ \cdot \Big| \boldsymbol{\theta}_t\right]$. This expectation taking into account all samplings which is done on iteration $t$ of Algorithm 1. Then, it is easy to see that the following variables will be $t-\text{ measurable}$: $\title{\nabla}\mathcal{L}(\boldsymbol{\theta}_t), \boldsymbol{m}_t, \boldsymbol{v}_t, \boldsymbol{\theta}_{t+1}, \boldsymbol{z}_{t+1}, \boldsymbol{y}_{t+1}$. On the other hand, the following variables will be independent from randomness introduced at iteration $t$: $\nabla\mathcal{L}(\boldsymbol{\theta}_{t-1}), \boldsymbol{m}_{t-1}, \boldsymbol{v}_{t-1}, \boldsymbol{\theta}_{t}, \boldsymbol{z}_{t}, \boldsymbol{y}_{t}$. Hence, taking expectation $\mathbb{E}_{t}\left[\right]$ from the both sides of equation (\ref{First_diff_equation}) gives:
\begin{align}\label{Secon_expectation}
    &\mathbb{E}_t\left[\mathcal{L}(\boldsymbol{\theta}_{t+1})\right] \le \\\nonumber
    &\mathcal{L}(\boldsymbol{\theta}_t) - \alpha_t\sum_{i=1}^p\left[\nabla\mathcal{L}(\boldsymbol{\theta}_t)\right]_i\mathbb{E}_t\left[\frac{[\boldsymbol{m}_t]_i}{\sqrt{[\boldsymbol{v}_t]_i} + \xi}\right] + \frac{L\alpha^2_t}{2}\sum_{i=1}^p\mathbb{E}_t\left[\frac{[\boldsymbol{m}_t]^2_i}{\left(\sqrt{[\boldsymbol{v}_t]_i} + \xi\right)^2}\right].
\end{align}
Now, let us focus on the second term in the above expression:
\begin{align*}
    &\sum_{i=1}^p\left[\nabla\mathcal{L}(\boldsymbol{\theta}_t)\right]_i\mathbb{E}_t\left[\frac{[\boldsymbol{m}_t]_i}{\sqrt{[\boldsymbol{v}_t]_i} + \xi}\right] = \\\nonumber
    &\sum_{i=1}^p\left[\nabla\mathcal{L}(\boldsymbol{\theta}_t)\right]_i\mathbb{E}_t\left[\frac{[\boldsymbol{m}_t]_i}{\sqrt{[\boldsymbol{v}_t]_i} + \xi} - \frac{[\boldsymbol{m}_t]_i}{\sqrt{\gamma^{(2)}_t[\boldsymbol{v}_{t-1}]_i} + \xi} + \frac{[\boldsymbol{m}_t]_i}{\sqrt{\gamma^{(2)}_t[\boldsymbol{v}_{t-1}]_i} + \xi}\right] = \\\nonumber
    &\underbrace{\sum_{i=1}^p\left[\nabla\mathcal{L}(\boldsymbol{\theta}_t)\right]_i\mathbb{E}_t\left[\frac{[\boldsymbol{m}_t]_i}{\sqrt{\gamma^{(2)}_t[\boldsymbol{v}_{t-1}]_i} + \xi}\right]}_{\mathcal{A}} + \underbrace{\sum_{i=1}^p\left[\nabla\mathcal{L}(\boldsymbol{\theta}_t)\right]_i\mathbb{E}_t\left[\frac{[\boldsymbol{m}_t]_i}{\sqrt{[\boldsymbol{v}_{t}]_i} + \xi} - \frac{[\boldsymbol{m}_t]_i}{\sqrt{\gamma^{(2)}_t[\boldsymbol{v}_{t-1}]_i} + \xi}\right]}_{\mathcal{B}}.
\end{align*}
Please notice, from Assumptions 1.3 and 1.4 it follows immediately:
\begin{align*}
    &||\nabla\mathcal{L}(\boldsymbol{\theta}_t)||_2 \le \mathbb{E}_{\boldsymbol{\hat{\epsilon}}}\left[\left|\left|\nabla \boldsymbol{g}_{\boldsymbol{\hat{\epsilon}}}(\boldsymbol{\theta}_t)^{\mathsf{T}} \right|\right|_2\right]\mathbb{E}_{\nu}\left[\left|\left|\nabla f_{\nu}(\mathbb{E}_{\boldsymbol{\hat{\epsilon}}}[\boldsymbol{g}_{\boldsymbol{\hat{\epsilon}}}(\boldsymbol{\theta}_t)]) \right|\right|_2\right]\le M_gM_f\\\nonumber
    &||\overline{\nabla \mathcal{L}(\boldsymbol{\theta}_{t})}||_2 = \left|\left|\frac{n}{d_tK^{(1)}_t}\sum_{a=1}^{K^{(1)}_t}\sum_{j=1}^{d_t}\nabla\boldsymbol{g}_{{\boldsymbol{\hat{\epsilon}}_{t_a}}}^{\mathsf{T}}(\boldsymbol{\theta}_t)(:,i_j)\nabla f_{\nu_{t_a}}(\boldsymbol{y}_t)(i_j)\right|\right|_2 \le \\\nonumber
    &\frac{n}{d_tK^{(1)}_t}\sum_{a=1}^{K^{(1)}_t}\sum_{j=1}^{d_t}\left|\left|\nabla\boldsymbol{g}_{{\boldsymbol{\hat{\epsilon}}_{t_a}}}^{\mathsf{T}}(\boldsymbol{\theta}_t)(:,i_j)\right|\right|_2\left|\left|\nabla f_{\nu_{t_a}}(\boldsymbol{y}_t)(i_j)\right|\right|_2\le nM_gM_f.
\end{align*}
and applying induction we have:
\begin{align}\label{bound_expressions}
    &||\boldsymbol{m}_t||_2 \le \gamma^{(1)}_tnM_gM_f + \left(1 - \gamma^{(1)}_t\right)nM_gM_f = nM_gM_f,\ \ \ \ \ \ \forall t\\\nonumber
    &||\boldsymbol{v}_t||_2 \le \gamma^{(2)}_tn^2M^2_gM^2_f + \left(1 - \gamma^{(2)}_t\right)n^2M^2_gM^2_f = n^2M^2_gM^2_f,\ \ \ \ \ \ \forall t
\end{align}
Now, let us apply (\ref{bound_expressions}) for the  expression $\mathcal{A}$:
\begin{align*}
    &\mathcal{A} = \sum_{i=1}^p\left[\nabla\mathcal{L}(\boldsymbol{\theta}_t)\right]_i\frac{\mathbb{E}_t[\boldsymbol{m}_t]_i}{\sqrt{\gamma^{(2)}_t[\boldsymbol{v}_{t-1}]_i} + \xi} = \sum_{i=1}^p\left[\nabla\mathcal{L}(\boldsymbol{\theta}_t)\right]_i\frac{\mathbb{E}_t\left[\gamma^{(1)}_t[\boldsymbol{m}_{t-1}]_i + \left(1 - \gamma^{(1)}_t\right)[\overline{\nabla \mathcal{L}(\boldsymbol{\theta}_{t})}]_i\right]}{\sqrt{\gamma^{(2)}_t[\boldsymbol{v}_{t-1}]_i} + \xi} =\\\nonumber
    &\sum_{i=1}^p\left[\nabla\mathcal{L}(\boldsymbol{\theta}_t)\right]_i\frac{\gamma^{(1)}_t[\boldsymbol{m}_{t-1}]_i + \left(1 - \gamma^{(1)}_t\right)\mathbb{E}_t\left[[\overline{\nabla \mathcal{L}(\boldsymbol{\theta}_{t})}]_i\right]}{\sqrt{\gamma^{(2)}_t[\boldsymbol{v}_{t-1}]_i} + \xi} = \gamma^{(1)}_t\nabla\mathcal{L}(\boldsymbol{\theta}_t)^{\mathsf{T}}\frac{\boldsymbol{m}_{t-1}}{\sqrt{\gamma^{(2)}_t\boldsymbol{v}_{t-1}}+ \xi} + \\\nonumber
    &\left(1 - \gamma^{(1)}_t\right)\nabla\mathcal{L}(\boldsymbol{\theta}_t)^{\mathsf{T}}\frac{\mathbb{E}_t\left[\overline{\nabla \mathcal{L}(\boldsymbol{\theta}_{t})}\right]}{\sqrt{\gamma^{(2)}_t\boldsymbol{v}_{t-1}}+ \xi}.
\end{align*}
Adding and subtracting the term $\nabla\mathcal{L}(\boldsymbol{\theta}_t)$ gives:
\begin{align*}
    &\mathcal{A} = \gamma^{(1)}_t\nabla\mathcal{L}(\boldsymbol{\theta}_t)^{\mathsf{T}}\frac{\boldsymbol{m}_{t-1}}{\sqrt{\gamma^{(2)}_t\boldsymbol{v}_{t-1}}+ \xi} + \left(1 - \gamma^{(1)}_t\right)\nabla\mathcal{L}(\boldsymbol{\theta}_t)^{\mathsf{T}}\frac{\mathbb{E}_t\left[\nabla\mathcal{L}(\boldsymbol{\theta}_t) - \nabla\mathcal{L}(\boldsymbol{\theta}_t) + \overline{\nabla \mathcal{L}(\boldsymbol{\theta}_{t})}\right]}{\sqrt{\gamma^{(2)}_t\boldsymbol{v}_{t-1}}+ \xi} = \\\nonumber
    &\gamma^{(1)}_t\nabla\mathcal{L}(\boldsymbol{\theta}_t)^{\mathsf{T}}\frac{\boldsymbol{m}_{t-1}}{\sqrt{\gamma^{(2)}_t\boldsymbol{v}_{t-1}}+ \xi} + \left(1 - \gamma^{(1)}_t\right)\sum_{i=1}^{p}\frac{[\nabla\mathcal{L}(\boldsymbol{\theta}_{t})]^2_i}{\sqrt{\gamma^{(2)}_t[\boldsymbol{v}_{t-1}]_i} + \xi} - \left(1 - \gamma^{(1)}_t\right)\times\\\nonumber
    &\nabla\mathcal{L}(\boldsymbol{\theta}_t)^{\mathsf{T}}\frac{\mathbb{E}_t\left[\nabla\mathcal{L}(\boldsymbol{\theta}_t) - \overline{\nabla \mathcal{L}(\boldsymbol{\theta}_{t})} \right]}{\sqrt{\gamma^{(2)}_t\boldsymbol{v}_{t-1}}+ \xi} = \gamma^{(1)}_t\nabla\mathcal{L}(\boldsymbol{\theta}_t)^{\mathsf{T}}\frac{\boldsymbol{m}_{t-1}}{\sqrt{\gamma^{(2)}_t\boldsymbol{v}_{t-1}}+ \xi} + \left(1 - \gamma^{(1)}_t\right)\sum_{i=1}^{p}\frac{[\nabla\mathcal{L}(\boldsymbol{\theta}_{t})]^2_i}{\sqrt{\gamma^{(2)}_t[\boldsymbol{v}_{t-1}]_i} + \xi} - \\\nonumber
    &\left(1 - \gamma^{(1)}_t\right)\underbrace{\frac{\mathbb{E}_t\left[\nabla\mathcal{L}(\boldsymbol{\theta}_t)^{\mathsf{T}}\left(\nabla\mathcal{L}(\boldsymbol{\theta}_t) - \overline{\nabla \mathcal{L}(\boldsymbol{\theta}_{t})} \right)\right]}{\sqrt{\gamma^{(2)}_t\boldsymbol{v}_{t-1}}+ \xi}}_{\mathcal{A}1}.
\end{align*}
Let us study the expression $\mathcal{A}1$ more carefully. Let us denote $\Hat{\nabla}\mathcal{L}(\boldsymbol{\theta}_t) = \frac{n}{K^{(1)}_td_t}\sum_{a=1}^{K^{(1)}_t}\sum_{j=1}^{d_t}\nabla\boldsymbol{g}_{{\boldsymbol{\hat{\epsilon}}_{t_a}}}^{\mathsf{T}}(\boldsymbol{\theta}_t)(:,i_j)\nabla f_{\nu_{t_a}}(\boldsymbol{g}(\boldsymbol{\theta}_t))(i_j)$, then:
\begin{align*}
    &\mathcal{A}1 = \frac{\mathbb{E}_t\left[\nabla\mathcal{L}(\boldsymbol{\theta}_t)^{\mathsf{T}}\left(\nabla\mathcal{L}(\boldsymbol{\theta}_t) - \Hat{\nabla}\mathcal{L}(\boldsymbol{\theta}_t) \right)\right]}{\sqrt{\gamma^{(2)}_t\boldsymbol{v}_{t-1}}+ \xi} + \frac{\mathbb{E}_t\left[\nabla\mathcal{L}(\boldsymbol{\theta}_t)^{\mathsf{T}}\left(\Hat{\nabla}\mathcal{L}(\boldsymbol{\theta}_t) - \overline{\nabla \mathcal{L}(\boldsymbol{\theta}_{t})} \right)\right]}{\sqrt{\gamma^{(2)}_t\boldsymbol{v}_{t-1}}+ \xi}.
\end{align*}
Let us denote $\boldsymbol{D}^{(j)}_t\in\mathbb{R}^{n\times n}$ be a diagonal matrix defined as follows:
\begin{equation*}
    \left[\boldsymbol{D}^{(j)}_t\right]_{kk} = \left\{
        \begin{array}{ll}
            1 & \text{ if } k = i_j,  \\
            0 & \text{ otherwise }
        \end{array}
    \right.
\end{equation*}
then notice that $\boldsymbol{D}^{(j)}_t$ is random matrix such that $\mathbb{E}_t\left[\boldsymbol{D}^{(j)}_t\right] = \frac{1}{n}\boldsymbol{I}_{n\times n}$. Using Assumption \textbf{\text{2}} and independentness of sampling in Gradient Sketching Algorithm from samplings in \text{CI-VI} Algorithm  we have
\begin{align*}
    &\mathbb{E}_t\left[\Hat{\nabla}\mathcal{L}(\boldsymbol{\theta}_t)\right] = \mathbb{E}_t\left[\frac{n}{K^{(1)}_td_t}\sum_{a=1}^{K^{(1)}_t}\sum_{j=1}^{d_t}\nabla\boldsymbol{g}_{{\boldsymbol{\hat{\epsilon}}_{t_a}}}^{\mathsf{T}}(\boldsymbol{\theta}_t)(:,i_j)\nabla f_{\nu_{t_a}}(\boldsymbol{g}(\boldsymbol{\theta}_t))(i_j)\right] = \\\nonumber
    &\frac{n}{K^{(1)}_td_t}\sum_{a=1}^{K^{(1)}_t}\sum_{j=1}^{d_t}\mathbb{E}_t\left[\nabla\boldsymbol{g}_{{\boldsymbol{\hat{\epsilon}}_{t_a}}}^{\mathsf{T}}(\boldsymbol{\theta}_t)(:,i_j)\nabla f_{\nu_{t_a}}(\boldsymbol{g}(\boldsymbol{\theta}_t))(i_j)\right] = \frac{n}{K^{(1)}_td_t}\sum_{a=1}^{K^{(1)}_t}\sum_{j=1}^{d_t}\mathbb{E}_t\left[\nabla\boldsymbol{g}_{{\boldsymbol{\hat{\epsilon}}_{t_a}}}^{\mathsf{T}}(\boldsymbol{\theta}_t)\boldsymbol{D}^{(j)}_t\nabla f_{\nu_{t_a}}(\boldsymbol{g}(\boldsymbol{\theta}_t))\right] \\\nonumber
    &=\frac{n}{K^{(1)}_td_t}\sum_{a=1}^{K^{(1)}_t}\sum_{j=1}^{d_t}\mathbb{E}_t\left[\nabla\boldsymbol{g}_{{\boldsymbol{\hat{\epsilon}}_{t_a}}}^{\mathsf{T}}(\boldsymbol{\theta}_t)\sum_{k=1}^q[\boldsymbol{D}^{(j)}_t]_{kk}\boldsymbol{e}_k\left[\nabla f_{\nu_{t_a}}(\boldsymbol{g}(\boldsymbol{\theta}_t))\right]_k\right] = \\\nonumber
    &\frac{n}{K^{(1)}_td_t}\sum_{a=1}^{K^{(1)}_t}\sum_{j=1}^{d_t}\sum_{k=1}^n\mathbb{E}_t\left[[\boldsymbol{D}^{(j)}_t]_{kk}\right]\mathbb{E}_t\left[\nabla\boldsymbol{g}_{{\boldsymbol{\hat{\epsilon}}_{t_a}}}^{\mathsf{T}}(\boldsymbol{\theta}_t)\boldsymbol{e}_k\left[\nabla f_{\nu_{t_a}}(\boldsymbol{g}(\boldsymbol{\theta}_t))\right]_k\right] = \\\nonumber
    &\frac{1}{K^{(1)}_td_t}\sum_{a=1}^{K^{(1)}_t}\sum_{j=1}^{d_t}\mathbb{E}_t\left[\nabla\boldsymbol{g}_{{\boldsymbol{\hat{\epsilon}}_{t_a}}}^{\mathsf{T}}(\boldsymbol{\theta}_t)\nabla f_{\nu_{t_a}}(\boldsymbol{g}(\boldsymbol{\theta}_t))\right] = \frac{1}{d_t}\sum_{j=1}^{d_t}\nabla\mathcal{L}(\boldsymbol{\theta}_t) =  \nabla\mathcal{L}(\boldsymbol{\theta}_t).
\end{align*}
where we used and  $\mathbb{E}_t\left[[\boldsymbol{D}^{(j)}_t]_{kk}\right] = \frac{1}{n}$. Hence, for the term $\mathcal{A}1$ we have:
\begin{align*}
    &\mathcal{A}1 =  \frac{\mathbb{E}_t\left[\nabla\mathcal{L}(\boldsymbol{\theta}_t)^{\mathsf{T}}\left(\Hat{\nabla}\mathcal{L}(\boldsymbol{\theta}_t) - \overline{\nabla \mathcal{L}(\boldsymbol{\theta}_{t})} \right)\right]}{\sqrt{\gamma^{(2)}_t\boldsymbol{v}_{t-1}}+ \xi} = \\\nonumber
    &\frac{\mathbb{E}_t\left[\nabla\mathcal{L}(\boldsymbol{\theta}_t)^{\mathsf{T}}\left(\frac{n}{d_tK^{(1)}_t}\sum_{a=1}^{K^{(1)}_t}\sum_{j=1}^{d_t}\left[\nabla\boldsymbol{g}_{{\boldsymbol{\hat{\epsilon}}_{t_a}}}^{\mathsf{T}}(\boldsymbol{\theta}_t)\boldsymbol{D}^{(j)}_t\nabla f_{\nu_{t_a}}(\boldsymbol{g}(\boldsymbol{\theta}_t)) - \nabla\boldsymbol{g}_{{\boldsymbol{\hat{\epsilon}}_{t_a}}}^{\mathsf{T}}(\boldsymbol{\theta}_t)\boldsymbol{D}^{(j)}_t\nabla f_{\nu_{t_a}}(\boldsymbol{y}_t)\right] \right)\right]}{\sqrt{\gamma^{(2)}_t\boldsymbol{v}_{t-1}}+ \xi}\\\nonumber
    &=\frac{\mathbb{E}_t\left[\nabla\mathcal{L}(\boldsymbol{\theta}_t)^{\mathsf{T}}\left(\frac{n}{d_tK^{(1)}_t}\sum_{a=1}^{K^{(1)}_t}\sum_{j=1}^{d_t}\nabla\boldsymbol{g}_{{\boldsymbol{\hat{\epsilon}}_{t_a}}}^{\mathsf{T}}(\boldsymbol{\theta}_t)\boldsymbol{D}^{(j)}_t\left[\nabla f_{\nu_{t_a}}(\boldsymbol{g}(\boldsymbol{\theta}_t)) - \nabla f_{\nu_{t_a}}(\boldsymbol{y}_t)\right] \right)\right]}{\sqrt{\gamma^{(2)}_t\boldsymbol{v}_{t-1}}+ \xi}.
\end{align*}
Hence, using $ab\le \frac{1}{2}a^2 + \frac{1}{2}b^2$ and $||\boldsymbol{a}_1 + \ldots + \boldsymbol{a}_r||^2_2 \le r(||\boldsymbol{a}_1||^2_2 + \ldots + ||\boldsymbol{a}_r||^2_2)$:
\begin{align*}
    &\mathcal{A}1 = \sum_{i=1}^p\mathbb{E}_t\left[\frac{\left[\nabla\mathcal{L}(\boldsymbol{\theta}_t)\right]_i }{\sqrt{\sqrt{\gamma^{(2)}_t\left[\boldsymbol{v}_{t-1}\right]_i}+ \xi}}\frac{\frac{n}{d_tK^{(1)}_t}\sum_{a=1}^{K^{(1)}_t}\sum_{j=1}^{d_t}\left[\nabla\boldsymbol{g}_{{\boldsymbol{\hat{\epsilon}}_{t_a}}}^{\mathsf{T}}(\boldsymbol{\theta}_t)\boldsymbol{D}^{(j)}_t\left[\nabla f_{\nu_{t_a}}(\boldsymbol{g}(\boldsymbol{\theta}_t)) - \nabla f_{\nu_{t_a}}(\boldsymbol{y}_t)\right]\right]_i}{\sqrt{\sqrt{\gamma^{(2)}_t\left[\boldsymbol{v}_{t-1}\right]_i}+ \xi}}\right] \le \\\nonumber
    &\mathbb{E}_t\left[\frac{1}{2}\sum_{i=1}^p\frac{\left[\nabla\mathcal{L}(\boldsymbol{\theta}_t)\right]^2_i }{\sqrt{\gamma^{(2)}_t\left[\boldsymbol{v}_{t-1}\right]_i}+ \xi}\right]  + \frac{1}{2\xi}\mathbb{E}_t\left[\left|\left|\frac{n}{d_tK^{(1)}_t}\sum_{a=1}^{K^{(1)}_t}\sum_{j=1}^{d_t}\nabla\boldsymbol{g}_{{\boldsymbol{\hat{\epsilon}}_{t_a}}}^{\mathsf{T}}(\boldsymbol{\theta}_t)\boldsymbol{D}^{(j)}_t\left[\nabla f_{\nu_{t_a}}(\boldsymbol{g}(\boldsymbol{\theta}_t)) - \nabla f_{\nu_{t_a}}(\boldsymbol{y}_t)\right]\right|\right|^2_2\right] \le \\\nonumber
    &\mathbb{E}_t\left[\frac{1}{2}\sum_{i=1}^p\frac{\left[\nabla\mathcal{L}(\boldsymbol{\theta}_t)\right]^2_i }{\sqrt{\gamma^{(2)}_t\left[\boldsymbol{v}_{t-1}\right]_i}+ \xi}\right]  + \frac{n^2}{2\xi d_tK^{(1)}_t}\mathbb{E}_t\left[\sum_{a=1}^{K^{(1)}_t}\sum_{j=1}^{d_t}\left|\left|\nabla\boldsymbol{g}_{{\boldsymbol{\hat{\epsilon}}_{t_a}}}^{\mathsf{T}}(\boldsymbol{\theta}_t)\boldsymbol{D}^{(j)}_t\left[\nabla f_{\nu_{t_a}}(\boldsymbol{g}(\boldsymbol{\theta}_t)) - \nabla f_{\nu_{t_a}}(\boldsymbol{y}_t)\right]\right|\right|^2_2\right] \le \\\nonumber
    &\mathbb{E}_t\left[\frac{1}{2}\sum_{i=1}^p\frac{\left[\nabla\mathcal{L}(\boldsymbol{\theta}_t)\right]^2_i }{\sqrt{\gamma^{(2)}_t\left[\boldsymbol{v}_{t-1}\right]_i}+ \xi}\right]  + \frac{n^2}{2\xi d_tK^{(1)}_t}\mathbb{E}_t\left[\sum_{a=1}^{K^{(1)}_t}\sum_{j=1}^{d_t}\left|\left|\nabla\boldsymbol{g}_{{\boldsymbol{\hat{\epsilon}}_{t_a}}}^{\mathsf{T}}(\boldsymbol{\theta}_t)\right|\right|^2_2\left|\left|\nabla f_{\nu_{t_a}}(\boldsymbol{g}(\boldsymbol{\theta}_t)) - \nabla f_{\nu_{t_a}}(\boldsymbol{y}_t)\right|\right|^2_2\right] \le \\\nonumber
    &\mathbb{E}_t\left[\frac{1}{2}\sum_{i=1}^p\frac{\left[\nabla\mathcal{L}(\boldsymbol{\theta}_t)\right]^2_i }{\sqrt{\gamma^{(2)}_t\left[\boldsymbol{v}_{t-1}\right]_i}+ \xi}\right]  + \frac{n^2M^2_g}{2\xi d_tK^{(1)}_t}\mathbb{E}_t\left[\sum_{a=1}^{K^{(1)}_t}\sum_{j=1}^{d_t}\left|\left|\nabla f_{\nu_{t_a}}(\boldsymbol{g}(\boldsymbol{\theta}_t)) - \nabla f_{\nu_{t_a}}(\boldsymbol{y}_t)\right|\right|^2_2\right] \le \\\nonumber
    &\frac{1}{2}\sum_{i=1}^p\frac{\left[\nabla\mathcal{L}(\boldsymbol{\theta}_t)\right]^2_i }{\sqrt{\gamma^{(2)}_t\left[\boldsymbol{v}_{t-1}\right]_i}+ \xi}  + \frac{n^2M^2_gL^2_f}{2\xi }\mathbb{E}_t\left[\left|\left|\boldsymbol{g}(\boldsymbol{\theta}_t) - \boldsymbol{y}_t\right|\right|^2_2\right].
\end{align*}
where we used $||\boldsymbol{D}^{(j)}_t||^2_2 = 1$ and Assumption \textbf{\text{1}} to bound $||\nabla\boldsymbol{g}_{{\boldsymbol{\hat{\epsilon}}_{t_a}}}^{\mathsf{T}}(\boldsymbol{\theta}_t)||^2_2\le M^2_g$. Hence, we arrive at the following expression for $\mathcal{A}$:
\begin{align}\label{mathcalAexpression}
    &-\mathcal{A} = \\\nonumber
    &-\gamma^{(1)}_t\nabla\mathcal{L}(\boldsymbol{\theta}_t)^{\mathsf{T}}\frac{\boldsymbol{m}_{t-1}}{\sqrt{\gamma^{(2)}_t\boldsymbol{v}_{t-1}}+ \xi} - \left(1 - \gamma^{(1)}_t\right)\sum_{i=1}^{p}\frac{[\nabla\mathcal{L}(\boldsymbol{\theta}_{t})]^2_i}{\sqrt{\gamma^{(2)}_t[\boldsymbol{v}_{t-1}]_i} + \xi} + \left(1 - \gamma^{(1)}_t\right)\mathcal{A}1\\\nonumber
    &-\gamma^{(1)}_t\nabla\mathcal{L}(\boldsymbol{\theta}_t)^{\mathsf{T}}\frac{\boldsymbol{m}_{t-1}}{\sqrt{\gamma^{(2)}_t\boldsymbol{v}_{t-1}}+ \xi} - \left(1 - \gamma^{(1)}_t\right)\sum_{i=1}^{p}\frac{[\nabla\mathcal{L}(\boldsymbol{\theta}_{t})]^2_i}{\sqrt{\gamma^{(2)}_t[\boldsymbol{v}_{t-1}]_i} + \xi} + \left(1 - \gamma^{(1)}_t\right)\times\\\nonumber
    &\left[\frac{1}{2}\sum_{i=1}^p\frac{\left[\nabla\mathcal{L}(\boldsymbol{\theta}_t)\right]^2_i }{\sqrt{\gamma^{(2)}_t\left[\boldsymbol{v}_{t-1}\right]_i}+ \xi}  + \frac{n^2M^2_gL^2_f}{2\xi }\mathbb{E}_t\left[\left|\left|\boldsymbol{g}(\boldsymbol{\theta}_t) - \boldsymbol{y}_t\right|\right|^2_2\right]\right] =\\\nonumber
    &-\gamma^{(1)}_t\nabla\mathcal{L}(\boldsymbol{\theta}_t)^{\mathsf{T}}\frac{\boldsymbol{m}_{t-1}}{\sqrt{\gamma^{(2)}_t\boldsymbol{v}_{t-1}}+ \xi} - \frac{\left(1 - \gamma^{(1)}_t\right)}{2}\sum_{i=1}^{p}\frac{[\nabla\mathcal{L}(\boldsymbol{\theta}_{t})]^2_i}{\sqrt{\gamma^{(2)}_t[\boldsymbol{v}_{t-1}]_i} + \xi}  
    + \frac{\left(1 - \gamma^{(1)}_t\right)n^2}{2\xi}M^2_gL^2_f\left|\left|\boldsymbol{g}(\boldsymbol{\theta}_t) - \boldsymbol{y}_t \right|\right|^2_2.
\end{align}
Now, let us consider more carefully the second term:
\begin{align*}
    &-\mathcal{B} = -\sum_{i=1}^p\left[\nabla\mathcal{L}(\boldsymbol{\theta}_t)\right]_i\mathbb{E}_t\left[\frac{[\boldsymbol{m}_t]_i}{\sqrt{[\boldsymbol{v}_{t}]_i} + \xi} - \frac{[\boldsymbol{m}_t]_i}{\sqrt{\gamma^{(2)}_t[\boldsymbol{v}_{t-1}]_i} + \xi}\right] \le \\\nonumber &\sum_{i=1}^p\left|\left[\nabla\mathcal{L}(\boldsymbol{\theta}_t)\right]_i\right|\mathbb{E}_t\left[\underbrace{\left|\frac{[\boldsymbol{m}_t]_i}{\sqrt{[\boldsymbol{v}_{t}]_i} + \xi} - \frac{[\boldsymbol{m}_t]_i}{\sqrt{\gamma^{(2)}_t[\boldsymbol{v}_{t-1}]_i} + \xi}\right|}_{\mathcal{B}1}\right]
\end{align*}
For the expression $\mathcal{B}1$ we have:
\begin{align*}
    &\mathcal{B}1 = \left|\frac{[\boldsymbol{m}_t]_i}{\sqrt{[\boldsymbol{v}_{t}]_i} + \xi} - \frac{[\boldsymbol{m}_t]_i}{\sqrt{\gamma^{(2)}_t[\boldsymbol{v}_{t-1}]_i} + \xi}\right| = |[\boldsymbol{m}_t]_i|\times\left|\frac{1}{\sqrt{[\boldsymbol{v}_t]_i} + \xi} - \frac{1}{\sqrt{\gamma^{(2)}_t[\boldsymbol{v}_{t-1}]_i} + \xi}\right| =\\\nonumber
    &\frac{|[\boldsymbol{m}_t]_i|}{(\sqrt{[\boldsymbol{v}_t]_i}+ \xi)(\sqrt{\gamma^{(2)}_{t}[\boldsymbol{v}_{t-1}]_i} + \xi)}\times\left|\frac{(1-\gamma^{(2)}_t)[\overline{\nabla \mathcal{L}(\boldsymbol{\theta}_{t})}]^2_i}{\sqrt{[\boldsymbol{v}_{t}]_i} + \sqrt{\gamma^{(2)}_t[\boldsymbol{v}_{t-1}]_i}}\right| = \frac{(1-\gamma^{(2)}_t)|[\boldsymbol{m}_t]_i|}{(\sqrt{[\boldsymbol{v}_t]_i}+ \xi)(\sqrt{\gamma^{(2)}_{t}[\boldsymbol{v}_{t-1}]_i} + \xi)}\times\\\nonumber
    &\frac{[\overline{\nabla \mathcal{L}(\boldsymbol{\theta}_{t})}]^2_i}{\sqrt{\gamma^{(2)}_{t}[\boldsymbol{v}_{t-1}]_i + (1 - \gamma^{(2)}_{t})[\overline{\nabla \mathcal{L}(\boldsymbol{\theta}_{t})}]^2_i} + \sqrt{\gamma^{(2)}_t[\boldsymbol{v}_{t-1}]_i}}.
\end{align*}
Using update for $\boldsymbol{m}_t$ and $|ab|\le \frac{1-\gamma^{(1)}_t}{\gamma^{(1)}_t}a^2 + \frac{\gamma^{(1)}_t}{1 - \gamma^{(1)}_t}b^2$ we have:
\begin{align*}
    &\mathcal{B}1 \le \frac{\sqrt{1 - \gamma^{(2)}_t}|[\boldsymbol{m}_t]_i||[\overline{\nabla \mathcal{L}(\boldsymbol{\theta}_{t})}]_i|}{\xi\left(\sqrt{\gamma^{(2)}_t[\boldsymbol{v}_{t-1}]_i} +\xi\right)}\le \sqrt{1 - \gamma^{(2)}_t}\frac{|\gamma^{(1)}_t[\boldsymbol{m}_{t-1}]_i| + (1 - \gamma^{(1)}_t)|[\overline{\nabla \mathcal{L}(\boldsymbol{\theta}_{t})}]_i|}{\xi\left(\sqrt{\gamma^{(2)}_t[\boldsymbol{v}_{t-1}]_i} +\xi\right)}|[\overline{\nabla \mathcal{L}(\boldsymbol{\theta}_{t})}]_i|=\\\nonumber
    &\frac{(1 - \gamma^{(1)}_t)\sqrt{1 - \gamma^{(2)}_t}}{\xi\left(\sqrt{\gamma^{(2)}_t[\boldsymbol{v}_{t-1}]_i} +\xi\right)}[\overline{\nabla \mathcal{L}(\boldsymbol{\theta}_{t})}]^2_i + \frac{\gamma^{(1)}_t\sqrt{1 - \gamma^{(2)}_t}}{\xi\left(\sqrt{\gamma^{(2)}_t[\boldsymbol{v}_{t-1}]_i} +\xi\right)}|[\overline{\nabla \mathcal{L}(\boldsymbol{\theta}_{t})}]_i||[\boldsymbol{m}_{t-1}]_i| \le 2\frac{(1 - \gamma^{(1)}_t)\sqrt{1 - \gamma^{(2)}_t}}{\xi\left(\sqrt{\gamma^{(2)}_t[\boldsymbol{v}_{t-1}]_i} +\xi\right)}\times\\\nonumber&[\overline{\nabla \mathcal{L}(\boldsymbol{\theta}_{t})}]^2_i +\frac{\left(\gamma^{(1)}_t\right)^2\sqrt{1 - \gamma^{(2)}_t}}{\xi(1 - \gamma^{(1)}_t)\left(\sqrt{\gamma^{(2)}_t[\boldsymbol{v}_{t-1}]_i} +\xi\right)}[\boldsymbol{m}_{t-1}]^2_i
\end{align*}
 Therefore, for the term $\mathcal{B}$ we have:
\begin{align}
    &-\mathcal{B} \le \\\nonumber
    &\sum_{i=1}^p|[\nabla \mathcal{L}(\boldsymbol{\theta}_t)]_i|\mathbb{E}_t\left[2\frac{(1 - \gamma^{(1)}_t)\sqrt{1 - \gamma^{(2)}_t}}{\xi\left(\sqrt{\gamma^{(2)}_t[\boldsymbol{v}_{t-1}]_i} +\xi\right)}[\overline{\nabla \mathcal{L}(\boldsymbol{\theta}_{t})}]^2_i + \frac{\left(\gamma^{(1)}_t\right)^2\sqrt{1 - \gamma^{(2)}_t}}{\xi(1 - \gamma^{(1)}_t)\left(\sqrt{\gamma^{(2)}_t[\boldsymbol{v}_{t-1}]_i} +\xi\right)}[\boldsymbol{m}_{t-1}]^2_i\right] = \\\nonumber
    &\frac{\left(\gamma^{(1)}_t\right)^2\sqrt{1 - \gamma^{(2)}_t}}{\xi(1 - \gamma^{(1)}_t)}\sum_{i=1}^{p}\frac{|[\nabla\mathcal{L}(\boldsymbol{\theta}_t)]_i|[\boldsymbol{m}_{t-1}]^2_i}{\left(\sqrt{\gamma^{(2)}_t[\boldsymbol{v}_{t-1}]_i} + +\xi\right)} + \frac{2(1 - \gamma^{(1)}_t)\sqrt{1 - \gamma^{(2)}_t}}{\xi}\sum_{i=1}^p\frac{|[\nabla\mathcal{L}(\boldsymbol{\theta}_t)]_i|\mathbb{E}_t\left[[\overline{\nabla \mathcal{L}(\boldsymbol{\theta}_{t})}]^2_i\right]}{\left(\sqrt{\gamma^{(2)}_t[\boldsymbol{v}_{t-1}]_i} +\xi\right)}. 
\end{align}
Using that $[\boldsymbol{v}_{t}]_i \ge 0$ for any $t$ and applying (\ref{bound_expressions}) we get immediately:
\begin{align*}
    &-\mathcal{B}\le\\\nonumber
    &\frac{\left(\gamma^{(1)}_t\right)^2\sqrt{1 - \gamma^{(2)}_t}}{\xi(1 - \gamma^{(1)}_t)}\sum_{i=1}^{p}\frac{|[\nabla\mathcal{L}(\boldsymbol{\theta}_t)]_i|[\boldsymbol{m}_{t-1}]^2_i}{\left(\sqrt{\gamma^{(2)}_t[\boldsymbol{v}_{t-1}]_i} + \xi\right)} + \frac{2(1 - \gamma^{(1)}_t)\sqrt{1 - \gamma^{(2)}_t}}{\xi}\sum_{i=1}^p\frac{|[\nabla\mathcal{L}(\boldsymbol{\theta}_t)]_i|\mathbb{E}_t\left[[\overline{\nabla \mathcal{L}(\boldsymbol{\theta}_{t})}]^2_i\right]}{\left(\sqrt{\gamma^{(2)}_t[\boldsymbol{v}_{t-1}]_i} +\xi\right)} \le \\\nonumber
    &\frac{nM_gM_f\left(\gamma^{(1)}_t\right)^2\sqrt{1 - \gamma^{(2)}_t}}{\xi (1 - \gamma^{(1)}_t)}\sum_{i=1}^{p}\frac{|[\nabla\mathcal{L}(\boldsymbol{\theta}_t)]_i[\boldsymbol{m}_{t-1}]_i|}{\sqrt{\gamma^{(2)}_{t}[\boldsymbol{v}_{t-1}]_i} + \xi} + \frac{2M_gM_f(1 - \gamma^{(1)}_t)\sqrt{1 - \gamma^{(2)}_t}}{\xi}\sum_{i=1}^p\frac{\mathbb{E}_t\left[[\overline{\nabla \mathcal{L}(\boldsymbol{\theta}_{t})}]^2_i\right]}{\sqrt{\gamma^{(2)}_{t}[\boldsymbol{v}_{t-1}]_i} + \xi}.
\end{align*}
Finally, we have the bound for the second term in the expression (\ref{Secon_expectation}):
\begin{align}\label{second_term_expression}
    &-\sum_{i=1}^p\left[\nabla\mathcal{L}(\boldsymbol{\theta}_t)\right]_i\mathbb{E}_t\left[\frac{[\boldsymbol{m}_t]_i}{\sqrt{[\boldsymbol{v}_t]_i} + \xi}\right] =  -\mathcal{A} - \mathcal{B}\le\\\nonumber
    &-\gamma^{(1)}_t\nabla\mathcal{L}(\boldsymbol{\theta}_t)^{\mathsf{T}}\frac{\boldsymbol{m}_{t-1}}{\sqrt{\gamma^{(2)}_t\boldsymbol{v}_{t-1}}+ \xi} - \frac{\left(1 - \gamma^{(1)}_t\right)}{2}\sum_{i=1}^{p}\frac{[\nabla\mathcal{L}(\boldsymbol{\theta}_{t})]^2_i}{\sqrt{\gamma^{(2)}_t[\boldsymbol{v}_{t-1}]_i} + \xi}  
    + \frac{\left(1 - \gamma^{(1)}_t\right)n^2M^2_gL^2_f}{2\xi}\left|\left|\boldsymbol{g}(\boldsymbol{\theta}_t) - \boldsymbol{y}_t \right|\right|^2_2\\\nonumber
    &+\frac{nM_gM_f\left(\gamma^{(1)}_t\right)^2\sqrt{1 - \gamma^{(2)}_t}}{\xi (1 - \gamma^{(1)}_t)}\sum_{i=1}^{p}\frac{|[\nabla\mathcal{L}(\boldsymbol{\theta}_t)]_i[\boldsymbol{m}_{t-1}]_i|}{\sqrt{\gamma^{(2)}_{t}[\boldsymbol{v}_{t-1}]_i} + \xi} + \frac{2M_gM_f(1 - \gamma^{(1)}_t)\sqrt{1 - \gamma^{(2)}_t}}{\xi}\sum_{i=1}^p\frac{\mathbb{E}_t\left[[\overline{\nabla \mathcal{L}(\boldsymbol{\theta}_{t})}]^2_i\right]}{\sqrt{\gamma^{(2)}_{t}[\boldsymbol{v}_{t-1}]_i} + \xi}.
\end{align}
Now, we can focus on the third term in the expression (\ref{Secon_expectation}). Applying that $[\boldsymbol{v}_t]_i \ge \gamma^{(2)}_{t}[\boldsymbol{v}_{t-1}]_i$ we have:
\begin{align*}
    &\sum_{i=1}^p\mathbb{E}_t\left[\frac{[\boldsymbol{m}_t]^2_i}{\left(\sqrt{[\boldsymbol{v}_t]_i} + \xi\right)^2}\right] \le \sum_{i=1}^p\mathbb{E}_t\left[\frac{[\boldsymbol{m}_t]^2_i}{\left(\sqrt{\gamma^{(2)}_t[\boldsymbol{v}_{t-1}]_i} + \xi\right)^2}\right] = \sum_{i=1}^p\frac{\mathbb{E}_t\left[[\boldsymbol{m}_t]^2_i\right]}{\left(\sqrt{\gamma^{(2)}_t[\boldsymbol{v}_{t-1}]_i} + \xi\right)^2} = \\\nonumber
    &\sum_{i=1}^p\frac{\mathbb{E}_t\left[[\gamma^{(1)}_t\boldsymbol{m}_{t-1} + \left(1 - \gamma^{(1)}_t\right)\overline{\nabla \mathcal{L}(\boldsymbol{\theta}_{t})}]^2_i\right]}{\left(\sqrt{\gamma^{(2)}_t[\boldsymbol{v}_{t-1}]_i} + \xi\right)^2} \le \sum_{i=1}^p\frac{2\left(\gamma^{(1)}_t\right)^2\mathbb{E}_t\left[[\boldsymbol{m}_{t-1}]^2_i\right] + 2\left(1 - \gamma^{(1)}_t\right)^2\mathbb{E}_t\left[[\overline{\nabla \mathcal{L}(\boldsymbol{\theta}_{t})}]^2_i\right]}{\left(\sqrt{\gamma^{(2)}_t[\boldsymbol{v}_{t-1}]_i} + \xi\right)^2}\\\nonumber
    &=2\left(\gamma^{(1)}_t\right)^2\sum_{i=1}^{p}\frac{[\boldsymbol{m}_{t-1}]^2_i}{\left(\sqrt{\gamma^{(2)}_t[\boldsymbol{v}_{t-1}]_i} + \xi\right)^2} + 2\left(1 - \gamma^{(1)}_t\right)^2\sum_{i=1}^p\frac{\mathbb{E}_t\left[[\overline{\nabla \mathcal{L}(\boldsymbol{\theta}_{t})}]^2_i\right]}{\left(\sqrt{\gamma^{(2)}_t[\boldsymbol{v}_{t-1}]_i} + \xi\right)^2} \le \\\nonumber
    &2\left(\gamma^{(1)}_t\right)^2\sum_{i=1}^{p}\frac{[\boldsymbol{m}_{t-1}]^2_i}{\left(\sqrt{\gamma^{(2)}_t[\boldsymbol{v}_{t-1}]_i} + \xi\right)^2} + 2\frac{\left(1 - \gamma^{(1)}_t\right)^2}{\xi}\sum_{i=1}^p\frac{\mathbb{E}_t\left[[\overline{\nabla \mathcal{L}(\boldsymbol{\theta}_{t})}]^2_i\right]}{\sqrt{\gamma^{(2)}_t[\boldsymbol{v}_{t-1}]_i} + \xi}\le\\\nonumber
    &\frac{2\left(\gamma^{(1)}_t\right)^2}{\xi}\sum_{i=1}^{p}\frac{[\boldsymbol{m}_{t-1}]^2_i}{\sqrt{\gamma^{(2)}_t[\boldsymbol{v}_{t-1}]_i} + \xi} + 2\frac{\left(1 - \gamma^{(1)}_t\right)^2}{\xi}\sum_{i=1}^p\frac{\mathbb{E}_t\left[[\overline{\nabla \mathcal{L}(\boldsymbol{\theta}_{t})}]^2_i\right]}{\sqrt{\gamma^{(2)}_t[\boldsymbol{v}_{t-1}]_i} + \xi}.
\end{align*}
Hence, denoting $\hat{M} = M_gM_f$ we arrive at the following expression for change in the function between two consecutive iterations:
\begin{align*}
    &\mathbb{E}_t\left[\mathcal{L}(\boldsymbol{\theta}_{t+1})\right] - \mathcal{L}(\boldsymbol{\theta}_t) \le -\alpha_t\gamma^{(1)}_t\nabla\mathcal{L}(\boldsymbol{\theta}_t)^{\mathsf{T}}\frac{\boldsymbol{m}_{t-1}}{\sqrt{\gamma^{(2)}_t\boldsymbol{v}_{t-1}}+ \xi} -\\\nonumber
    &\frac{\alpha_t\left(1 - \gamma^{(1)}_t\right)}{2}\sum_{i=1}^{p}\frac{[\nabla\mathcal{L}(\boldsymbol{\theta}_{t})]^2_i}{\sqrt{\gamma^{(2)}_t[\boldsymbol{v}_{t-1}]_i} + \xi}  
    + \frac{\alpha_t\left(1 - \gamma^{(1)}_t\right)n^2M^2_gL^2_f}{2\xi}\left|\left|\boldsymbol{g}(\boldsymbol{\theta}_t) - \boldsymbol{y}_t \right|\right|^2_2 + \\\nonumber
    &\frac{\alpha_tn\hat{M}\left(\gamma^{(1)}_t\right)^2\sqrt{1 - \gamma^{(2)}_t}}{\xi (1 - \gamma^{(1)}_t)}\sum_{i=1}^{p}\frac{|[\nabla\mathcal{L}(\boldsymbol{\theta}_t)]_i[\boldsymbol{m}_{t-1}]_i|}{\sqrt{\gamma^{(2)}_{t}[\boldsymbol{v}_{t-1}]_i} + \xi} + \frac{2\alpha_t\hat{M}(1 - \gamma^{(1)}_t)\sqrt{1 - \gamma^{(2)}_t}}{\xi}\sum_{i=1}^p\mathbb{E}_t\left[\frac{[\overline{\nabla \mathcal{L}(\boldsymbol{\theta}_{t})}]^2_i}{\sqrt{\gamma^{(2)}_{t}[\boldsymbol{v}_{t-1}]_i} + \xi}\right]+ \\\nonumber
    &\frac{L\alpha_t^2\left(\gamma^{(1)}_t\right)^2}{\xi}\sum_{i=1}^{p}\frac{[\boldsymbol{m}_{t-1}]^2_i}{\sqrt{\gamma^{(2)}_t[\boldsymbol{v}_{t-1}]_i} + \xi} + \frac{L\alpha_t^2\left(1 - \gamma^{(1)}_t\right)^2}{\xi}\sum_{i=1}^p\frac{\mathbb{E}_t\left[[\overline{\nabla \mathcal{L}(\boldsymbol{\theta}_{t})}]^2_i\right]}{\sqrt{\gamma^{(2)}_t[\boldsymbol{v}_{t-1}]_i} + \xi}.
\end{align*}
After re-grouping terms in the above expression we have:
\begin{align}\label{new_change_function}
    &\mathbb{E}_t\left[\mathcal{L}(\boldsymbol{\theta}_{t+1})\right] - \mathcal{L}(\boldsymbol{\theta}_t) \le \\\nonumber
    &-\frac{\alpha_t\left(1 - \gamma^{(1)}_t\right)}{2}\sum_{i=1}^{p}\frac{[\nabla\mathcal{L}(\boldsymbol{\theta}_{t})]^2_i}{\sqrt{\gamma^{(2)}_t[\boldsymbol{v}_{t-1}]_i} + \xi} + \frac{\alpha_t\left(1 - \gamma^{(1)}_t\right)n^2M^2_gL^2_f}{2\xi}\left|\left|\boldsymbol{g}(\boldsymbol{\theta}_t) - \boldsymbol{y}_t \right|\right|^2_2 + \sum_{i=1}^p\frac{\mathbb{E}_t\left[[\overline{\nabla \mathcal{L}(\boldsymbol{\theta}_{t})}]^2_i\right]}{\sqrt{\gamma^{(2)}_t[\boldsymbol{v}_{t-1}]_i} + \xi}\times\\\nonumber
    &\left[\frac{L\alpha^2_t\left(1 - \gamma^{(1)}_t\right)^2}{\xi} + \frac{2\alpha_t\hat{M}(1 - \gamma^{(1)}_t)\sqrt{1 - \gamma^{(2)}_t}}{\xi}\right]  -\alpha_t\gamma^{(1)}_t\nabla\mathcal{L}(\boldsymbol{\theta}_t)^{\mathsf{T}}\frac{\boldsymbol{m}_{t-1}}{\sqrt{\gamma^{(2)}_t\boldsymbol{v}_{t-1}}+ \xi} + \\\nonumber &\frac{L\alpha^2_t\left(\gamma^{(1)}_t\right)^2}{\xi}\sum_{i=1}^{p}\frac{[\boldsymbol{m}_{t-1}]^2_i}{\sqrt{\gamma^{(2)}_t[\boldsymbol{v}_{t-1}]_i} + \xi} + \frac{\alpha_tn\hat{M}\left(\gamma^{(1)}_t\right)^2\sqrt{1 - \gamma^{(2)}_t}}{\xi (1 - \gamma^{(1)}_t)}\sum_{i=1}^{p}\frac{|[\nabla\mathcal{L}(\boldsymbol{\theta}_t)]_i[\boldsymbol{m}_{t-1}]_i|}{\sqrt{\gamma^{(2)}_{t}[\boldsymbol{v}_{t-1}]_i} + \xi}.
\end{align}
Let us focus on the term $\mathbb{E}_t\left[[\overline{\nabla \mathcal{L}(\boldsymbol{\theta}_{t})}]^2_i\right]$. Using Assumption \textbf{\text{2}} and independentness of sampling in Gradient Sketching Algorithm from samplings of $\mathcal{FOO}_f(\boldsymbol{y}_t, K^{(1)}_t), \mathcal{FOO}_{\boldsymbol{g}}(\boldsymbol{\theta}_t, K^{(1)}_t)$ we have:
\begin{align*}
    &\mathbb{E}_t\left[\overline{\nabla \mathcal{L}(\boldsymbol{\theta}_{t})}\right] = \mathbb{E}_t\left[\frac{n}{K^{(1)}_td_t}\sum_{a=1}^{K^{(1)}_t}\sum_{j=1}^{d_t}\nabla\boldsymbol{g}_{{\boldsymbol{\hat{\epsilon}}_{t_a}}}^{\mathsf{T}}(\boldsymbol{\theta}_t)(:,i_j)\nabla f_{\nu_{t_a}}(\boldsymbol{y}_t)(i_j)\right] = \\\nonumber
    &\frac{n}{d_tK^{(1)}_t}\sum_{a=1}^{K^{(1)}_t}\sum_{j=1}^{d_t}\mathbb{E}_t\left[\nabla\boldsymbol{g}_{{\boldsymbol{\hat{\epsilon}}_{t_a}}}^{\mathsf{T}}(\boldsymbol{\theta}_t)\boldsymbol{D}^{(j)}_t\nabla f_{\nu_{t_a}}(\boldsymbol{y}_t)\right]= \frac{n}{d_tK^{(1)}_t}\sum_{a=1}^{K^{(1)}_t}\sum_{j=1}^{d_t}\sum_{k=1}^n\mathbb{E}_t\left[[\boldsymbol{D}^{(j)}_t]_{kk}\nabla\boldsymbol{g}_{{\boldsymbol{\hat{\epsilon}}_{t_a}}}^{\mathsf{T}}(\boldsymbol{\theta}_t)\boldsymbol{e}_k\left[\nabla f_{\nu_{t_a}}(\boldsymbol{y}_t)\right]_k\right]\\\nonumber
    &= \frac{1}{d_tK^{(1)}_t}\sum_{a=1}^{K^{(1)}_t}\sum_{j=1}^{d_t}\mathbb{E}_t\left[\nabla\boldsymbol{g}_{{\boldsymbol{\hat{\epsilon}}_{t_a}}}^{\mathsf{T}}(\boldsymbol{\theta}_t)\nabla f_{\nu_{t_a}}(\boldsymbol{y}_t)\right] = \nabla\boldsymbol{g}^{\mathsf{T}}(\boldsymbol{\theta}_t)\nabla f(\boldsymbol{y}_t).
\end{align*}
Hence, for the variance of the above expression we have:
\begin{align}\label{variance_est_1}
    &\mathbb{E}_t\left[\left([\overline{\nabla \mathcal{L}(\boldsymbol{\theta}_{t})}]_i - \left[\nabla \boldsymbol{g}(\boldsymbol{\theta}_t)^{\mathsf{T}}\nabla f(\boldsymbol{y}_t)\right]_i\right)^2\right] = \mathbb{E}_t\left[[\overline{\nabla \mathcal{L}(\boldsymbol{\theta}_{t})}]^2_i\right] - \left[\nabla \boldsymbol{g}(\boldsymbol{\theta}_t)^{\mathsf{T}}\nabla f(\boldsymbol{y}_t)\right]^2_i.
\end{align}
On the other hand, using $\mathbb{E}_t\left[q\nabla\boldsymbol{g}_{{\boldsymbol{\hat{\epsilon}}_{t_a}}}^{\mathsf{T}}(\boldsymbol{\theta}_t)\boldsymbol{D}^{(j)}_t\nabla f_{\nu_{t_a}}(\boldsymbol{y}_t)\right] = \nabla\boldsymbol{g}^{\mathsf{T}}(\boldsymbol{\theta}_t)\nabla f(\boldsymbol{y}_t) $:  
\begin{align*}
    &\mathbb{E}_t\left[\left([\overline{\nabla \mathcal{L}(\boldsymbol{\theta}_{t})}]_i - [\nabla\boldsymbol{g}^{\mathsf{T}}(\boldsymbol{\theta}_t)\nabla f(\boldsymbol{y}_t)]_i \right)^2\right] = \\\nonumber
    &\mathbb{E}_t\left[\left[\frac{n}{d_tK^{(1)}_t}\sum_{a=1}^{K^{(1)}_t}\sum_{j=1}^{d_t}\left[\nabla\boldsymbol{g}_{{\boldsymbol{\hat{\epsilon}}_{t_a}}}^{\mathsf{T}}(\boldsymbol{\theta}_t)\boldsymbol{D}^{(j)}_t\nabla f_{\nu_{t_a}}(\boldsymbol{y}_t)\right]_i - [\nabla\boldsymbol{g}^{\mathsf{T}}(\boldsymbol{\theta}_t)\nabla f(\boldsymbol{y}_t)]_i \right]^2\right]=\\\nonumber
    &\mathbb{E}_t\left[\left[\frac{1}{K^{(1)}_td_t}\sum_{a=1}^{K^{(1)}_t}\sum_{j=1}^{d_t}\left[n\nabla\boldsymbol{g}_{{\boldsymbol{\hat{\epsilon}}_{t_a}}}^{\mathsf{T}}(\boldsymbol{\theta}_t)\boldsymbol{D}^{(j)}_t\nabla f_{\nu_{t_a}}(\boldsymbol{y}_t) -\nabla\boldsymbol{g}^{\mathsf{T}}(\boldsymbol{\theta}_t)\nabla f(\boldsymbol{y}_t) \right]_i \right]^2\right] = \\\nonumber
    &\frac{1}{\left(K^{(1)}_td_t\right)^2}\sum_{a=1}^{K^{(1)}_t}\sum_{j=1}^{d_t}\mathbb{E}_t\left[\left[n\nabla\boldsymbol{g}_{{\boldsymbol{\hat{\epsilon}}_{t_a}}}^{\mathsf{T}}(\boldsymbol{\theta}_t)\boldsymbol{D}^{(j)}_t\nabla f_{\nu_{t_a}}(\boldsymbol{y}_t) -\nabla\boldsymbol{g}^{\mathsf{T}}(\boldsymbol{\theta}_t)\nabla f(\boldsymbol{y}_t) \right]^2_i \right] + \frac{1}{\left(d_tK^{(1)}_t\right)^2}\sum_{a=1}^{K^{(1)}_t}\sum_{j=1}^{d_t}\sum_{\mathcal{L}=1, \mathcal{L}\neq j}^{d_t}\\\nonumber
    &\mathbb{E}_t\left[\left[n\nabla\boldsymbol{g}_{{\boldsymbol{\hat{\epsilon}}_{t_a}}}^{\mathsf{T}}(\boldsymbol{\theta}_t)\boldsymbol{D}^{(j)}_t\nabla f_{\nu_{t_a}}(\boldsymbol{y}_t) -\nabla\boldsymbol{g}^{\mathsf{T}}(\boldsymbol{\theta}_t)\nabla f(\boldsymbol{y}_t) \right]_i\left[n\nabla\boldsymbol{g}_{{\boldsymbol{\hat{\epsilon}}_{t_a}}}^{\mathsf{T}}(\boldsymbol{\theta}_t)\boldsymbol{D}^{(j)}_t\nabla f_{\nu_{t_a}}(\boldsymbol{y}_t) -\nabla\boldsymbol{g}^{\mathsf{T}}(\boldsymbol{\theta}_t)\nabla f(\boldsymbol{y}_t) \right]_i\right] + \\\nonumber
    &\frac{\sum_{i=1}^{K^{(1)}_t}\sum_{b=1,b\neq a}^{K^{(1)}_t}}{\left(d_tK^{(1)}_t\right)^2}\sum_{j=1}^{d_t}\mathbb{E}_t\left[\left[n\nabla\boldsymbol{g}_{{\boldsymbol{\hat{\epsilon}}_{t_a}}}^{\mathsf{T}}(\boldsymbol{\theta}_t)\boldsymbol{D}^{(j)}_t\nabla f_{\nu_{t_a}}(\boldsymbol{y}_t) -\nabla\boldsymbol{g}^{\mathsf{T}}(\boldsymbol{\theta}_t)\nabla f(\boldsymbol{y}_t) \right]_i\right]\times\\\nonumber&\sum_{\mathcal{L}=1}^{d_t}\mathbb{E}_t\left[\left[n\nabla\boldsymbol{g}_{{\boldsymbol{\hat{\epsilon}}_{t_a}}}^{\mathsf{T}}(\boldsymbol{\theta}_t)\boldsymbol{D}^{(j)}_t\nabla f_{\nu_{t_a}}(\boldsymbol{y}_t) -\nabla\boldsymbol{g}^{\mathsf{T}}(\boldsymbol{\theta}_t)\nabla f(\boldsymbol{y}_t) \right]_i\right] = \\\nonumber
    &\frac{1}{\left(K^{(1)}_td_t\right)^2}\sum_{a=1}^{K^{(1)}_t}\sum_{j=1}^{d_t}\mathbb{E}_t\left[\left[n\nabla\boldsymbol{g}_{{\boldsymbol{\hat{\epsilon}}_{t_a}}}^{\mathsf{T}}(\boldsymbol{\theta}_t)\boldsymbol{D}^{(j)}_t\nabla f_{\nu_{t_a}}(\boldsymbol{y}_t) -\nabla\boldsymbol{g}^{\mathsf{T}}(\boldsymbol{\theta}_t)\nabla f(\boldsymbol{y}_t) \right]^2_i \right] + \frac{1}{\left(d_tK^{(1)}_t\right)^2}\sum_{a=1}^{K^{(1)}_t}\sum_{j=1}^{d_t}\sum_{\mathcal{L}=1, \mathcal{L}\neq j}^{d_t}\\\nonumber
    &\mathbb{E}_t\left[\left[n\nabla\boldsymbol{g}_{{\boldsymbol{\hat{\epsilon}}_{t_a}}}^{\mathsf{T}}(\boldsymbol{\theta}_t)\boldsymbol{D}^{(j)}_t\nabla f_{\nu_{t_a}}(\boldsymbol{y}_t) -\nabla\boldsymbol{g}^{\mathsf{T}}(\boldsymbol{\theta}_t)\nabla f(\boldsymbol{y}_t) \right]_i\left[n\nabla\boldsymbol{g}_{{\boldsymbol{\hat{\epsilon}}_{t_a}}}^{\mathsf{T}}(\boldsymbol{\theta}_t)\boldsymbol{D}^{(j)}_t\nabla f_{\nu_{t_a}}(\boldsymbol{y}_t) -\nabla\boldsymbol{g}^{\mathsf{T}}(\boldsymbol{\theta}_t)\nabla f(\boldsymbol{y}_t) \right]_i\right].
\end{align*}
Using Cauchy–Bunyakovsky–Schwarz inequality, we have:
\begin{align*}
    &\mathbb{E}_t\left[\left[n\nabla\boldsymbol{g}_{{\boldsymbol{\hat{\epsilon}}_{t_a}}}^{\mathsf{T}}(\boldsymbol{\theta}_t)\boldsymbol{D}^{(j)}_t\nabla f_{\nu_{t_a}}(\boldsymbol{y}_t) -\nabla\boldsymbol{g}^{\mathsf{T}}(\boldsymbol{\theta}_t)\nabla f(\boldsymbol{y}_t) \right]_i\left[n\nabla\boldsymbol{g}_{{\boldsymbol{\hat{\epsilon}}_{t_a}}}^{\mathsf{T}}(\boldsymbol{\theta}_t)\boldsymbol{D}^{(j)}_t\nabla f_{\nu_{t_a}}(\boldsymbol{y}_t) -\nabla\boldsymbol{g}^{\mathsf{T}}(\boldsymbol{\theta}_t)\nabla f(\boldsymbol{y}_t) \right]_i\right] \le\\\nonumber
    &\sqrt{\mathbb{E}_t\left[\left[n\nabla\boldsymbol{g}_{{\boldsymbol{\hat{\epsilon}}_{t_a}}}^{\mathsf{T}}(\boldsymbol{\theta}_t)\boldsymbol{D}^{(j)}_t\nabla f_{\nu_{t_a}}(\boldsymbol{y}_t) -\nabla\boldsymbol{g}^{\mathsf{T}}(\boldsymbol{\theta}_t)\nabla f(\boldsymbol{y}_t) \right]^2_i\right]}\times\\\nonumber&\sqrt{\mathbb{E}_t\left[\left[n\nabla\boldsymbol{g}_{{\boldsymbol{\hat{\epsilon}}_{t_a}}}^{\mathsf{T}}(\boldsymbol{\theta}_t)\boldsymbol{D}^{(j)}_t\nabla f_{\nu_{t_a}}(\boldsymbol{y}_t) -\nabla\boldsymbol{g}^{\mathsf{T}}(\boldsymbol{\theta}_t)\nabla f(\boldsymbol{y}_t) \right]^2_i\right]}.
\end{align*}
Let us study the term $\mathbb{E}_t\left[\left[n\nabla\boldsymbol{g}_{{\boldsymbol{\hat{\epsilon}}_{t_a}}}^{\mathsf{T}}(\boldsymbol{\theta}_t)\boldsymbol{D}^{(j)}_t\nabla f_{\nu_{t_a}}(\boldsymbol{y}_t) -\nabla\boldsymbol{g}^{\mathsf{T}}(\boldsymbol{\theta}_t)\nabla f(\boldsymbol{y}_t) \right]^2_i\right]$: 
\begin{align*}
    &\mathbb{E}_t\left[\left[n\nabla\boldsymbol{g}_{{\boldsymbol{\hat{\epsilon}}_{t_a}}}^{\mathsf{T}}(\boldsymbol{\theta}_t)\boldsymbol{D}^{(j)}_t\nabla f_{\nu_{t_a}}(\boldsymbol{y}_t) - \nabla\boldsymbol{g}^{\mathsf{T}}(\boldsymbol{\theta}_t)\nabla f(\boldsymbol{y}_t) \right]^2_i\right] = \\\nonumber
    &\mathbb{E}_t\left[\left[n\nabla\boldsymbol{g}_{{\boldsymbol{\hat{\epsilon}}_{t_a}}}^{\mathsf{T}}(\boldsymbol{\theta}_t)\boldsymbol{D}^{(j)}_t\nabla f_{\nu_{t_a}}(\boldsymbol{y}_t) - \nabla\boldsymbol{g}^{\mathsf{T}}(\boldsymbol{\theta}_t)\nabla f_{\nu_{t_a}}(\boldsymbol{y}_t) + \nabla\boldsymbol{g}^{\mathsf{T}}(\boldsymbol{\theta}_t)\nabla f_{\nu_{t_a}}(\boldsymbol{y}_t) -\nabla\boldsymbol{g}^{\mathsf{T}}(\boldsymbol{\theta}_t)\nabla f(\boldsymbol{y}_t) \right]^2_i \right]\\\nonumber
    &2\mathbb{E}_t\left[\left[n\nabla\boldsymbol{g}_{{\boldsymbol{\hat{\epsilon}}_{t_a}}}^{\mathsf{T}}(\boldsymbol{\theta}_t)\boldsymbol{D}^{(j)}_t\nabla f_{\nu_{t_a}}(\boldsymbol{y}_t) - \nabla\boldsymbol{g}^{\mathsf{T}}(\boldsymbol{\theta}_t)\nabla f_{\nu_{t_a}}(\boldsymbol{y}_t)\right]^2_i\right] + 2\mathbb{E}_t\left[\left[\nabla\boldsymbol{g}^{\mathsf{T}}(\boldsymbol{\theta}_t)\nabla f_{\nu_{t_a}}(\boldsymbol{y}_t) -\nabla\boldsymbol{g}^{\mathsf{T}}(\boldsymbol{\theta}_t)\nabla f(\boldsymbol{y}_t)\right]^2_i\right].
\end{align*}
The first summation in the above expression can be bounded using Assumption \textbf{\text{1}} in the following way (using $[\boldsymbol{a}]^2_i \le ||\boldsymbol{a}||^2_2$,  $||\boldsymbol{A}+\boldsymbol{B}||^2_2 \le 2||\boldsymbol{A}||^2_2 + 2||\boldsymbol{B}||^2_2$ and $||\boldsymbol{A}||^2_2 \le ||\boldsymbol{A}||^2_F = \text{Tr}(\boldsymbol{A}^{\mathsf{T}}\boldsymbol{A})$):
\begin{align*}
    &\mathbb{E}_t\left[\left[n\nabla\boldsymbol{g}_{{\boldsymbol{\hat{\epsilon}}_{t_a}}}^{\mathsf{T}}(\boldsymbol{\theta}_t)\boldsymbol{D}^{(j)}_t\nabla f_{\nu_{t_a}}(\boldsymbol{y}_t) - \nabla\boldsymbol{g}^{\mathsf{T}}(\boldsymbol{\theta}_t)\nabla f_{\nu_{t_a}}(\boldsymbol{y}_t)\right]^2_i\right] \le \\\nonumber
    &\mathbb{E}_t\left[\left|\left|n\nabla\boldsymbol{g}_{{\boldsymbol{\hat{\epsilon}}_{t_a}}}^{\mathsf{T}}(\boldsymbol{\theta}_t)\boldsymbol{D}^{(j)}_t\nabla f_{\nu_{t_a}}(\boldsymbol{y}_t) - \nabla\boldsymbol{g}^{\mathsf{T}}(\boldsymbol{\theta}_t)\nabla f_{\nu_{t_a}}(\boldsymbol{y}_t)\right|\right|^2_2\right]\le \\\nonumber
    &\mathbb{E}_t\left[\left|\left|n\nabla\boldsymbol{g}_{{\boldsymbol{\hat{\epsilon}}_{t_a}}}^{\mathsf{T}}(\boldsymbol{\theta}_t)\boldsymbol{D}^{(j)}_t - \nabla\boldsymbol{g}^{\mathsf{T}}(\boldsymbol{\theta}_t)\right|\right|^2_2\left|\left|\nabla f_{\nu_{t_a}}(\boldsymbol{y}_t)\right|\right|^2_2\right] \le M^2_f\mathbb{E}_t\left[\left|\left|n\nabla\boldsymbol{g}_{{\boldsymbol{\hat{\epsilon}}_{t_a}}}^{\mathsf{T}}(\boldsymbol{\theta}_t)\boldsymbol{D}^{(j)}_t - \nabla\boldsymbol{g}^{\mathsf{T}}(\boldsymbol{\theta}_t)\right|\right|^2_2\right] = \\\nonumber
    &M^2_f\mathbb{E}_t\left[\left|\left|n\nabla\boldsymbol{g}_{{\boldsymbol{\hat{\epsilon}}_{t_a}}}^{\mathsf{T}}(\boldsymbol{\theta}_t)\boldsymbol{D}^{(j)}_t - n\nabla\boldsymbol{g}^{\mathsf{T}}(\boldsymbol{\theta}_t)\boldsymbol{D}^{(j)}_t + n\nabla\boldsymbol{g}^{\mathsf{T}}(\boldsymbol{\theta}_t)\boldsymbol{D}_t^{(j)} -  \nabla\boldsymbol{g}^{\mathsf{T}}(\boldsymbol{\theta}_t)\right|\right|^2_2\right] \le \\\nonumber
    &2M^2_f\mathbb{E}_t\left[\left|\left|n\nabla\boldsymbol{g}_{{\boldsymbol{\hat{\epsilon}}_{t_a}}}^{\mathsf{T}}(\boldsymbol{\theta}_t)\boldsymbol{D}^{(j)}_t - n\nabla\boldsymbol{g}^{\mathsf{T}}(\boldsymbol{\theta}_t)\boldsymbol{D}_t^{(t)} \right|\right|^2_2\right] + 2M^2_f\mathbb{E}_t\left[\left|\left| n\nabla\boldsymbol{g}^{\mathsf{T}}(\boldsymbol{\theta}_t)\boldsymbol{D}^{(j)}_t -  \nabla\boldsymbol{g}^{\mathsf{T}}(\boldsymbol{\theta}_t)\right|\right|^2_2\right]\le \\\nonumber
    &2n^2M^2_f\mathbb{E}_t\left[\left|\left|\nabla\boldsymbol{g}_{{\boldsymbol{\hat{\epsilon}}_{t_a}}}^{\mathsf{T}}(\boldsymbol{\theta}_t) - \nabla\boldsymbol{g}^{\mathsf{T}}(\boldsymbol{\theta}_t) \right|\right|^2_2\right] + 2M^2_fM^2_g\mathbb{E}_t\left[\left|\left| n\boldsymbol{D}^{(j)}_t -  \boldsymbol{I}\right|\right|^2_2\right]\le \\\nonumber
    &2n^2M^2_f\sigma^2_2 + 2M^2_fM^2_g\mathbb{E}_t\left[\text{Tr}\left[ n\boldsymbol{D}^{(j)}_t -  \boldsymbol{I}\right]^2\right] = 2n^2M^2_f\sigma^2_2 + 2M^2_fM^2_gn(n-1).  
\end{align*}
where in the last step we used $\mathbb{E}_t[\boldsymbol{D}^{(j)}_t] = \frac{1}{n}\boldsymbol{I}_{n\times n}$. For the second term, we similarly can write:
\begin{align*}
    &\mathbb{E}_t\left[\left[\nabla\boldsymbol{g}^{\mathsf{T}}(\boldsymbol{\theta}_t)\nabla f_{\nu_{t_a}}(\boldsymbol{y}_t) -\nabla\boldsymbol{g}^{\mathsf{T}}(\boldsymbol{\theta}_t)\nabla f(\boldsymbol{y}_t)\right]^2_i\right] \le \mathbb{E}_t\left[\left|\left|\nabla\boldsymbol{g}^{\mathsf{T}}(\boldsymbol{\theta}_t)\nabla f_{\nu_{t_a}}(\boldsymbol{y}_t) -\nabla\boldsymbol{g}^{\mathsf{T}}(\boldsymbol{\theta}_t)\nabla f(\boldsymbol{y}_t)\right|\right|^2_2\right] \le \\\nonumber
    &M^2_g\mathbb{E}_t\left[\left|\left|\nabla f_{\nu_{t_a}}(\boldsymbol{y}_t) -\nabla f(\boldsymbol{y}_t)\right|\right|^2_2\right] \le  M^2_g\sigma^2_1.
\end{align*}
Hence, for the variance expression we immediately have:
\begin{align*}
    &\mathbb{E}_t\left[\left[n\nabla\boldsymbol{g}_{{\boldsymbol{\hat{\epsilon}}_{t_a}}}^{\mathsf{T}}(\boldsymbol{\theta}_t)\boldsymbol{D}^{(j)}_t\nabla f_{\nu_{t_a}}(\boldsymbol{y}_t) - \nabla\boldsymbol{g}^{\mathsf{T}}(\boldsymbol{\theta}_t)\nabla f(\boldsymbol{y}_t) \right]^2_i\right] \le 4n^2M^2_f\sigma^2_2 + 4M^2_fM^2_gn(n-1) + 2M^2_g\sigma^2_1.
\end{align*}
Hence,
\begin{align*}
    &\mathbb{E}_t\left[\left([\overline{\nabla \mathcal{L}(\boldsymbol{\theta}_{t})}]_i - [\nabla\boldsymbol{g}^{\mathsf{T}}(\boldsymbol{\theta}_t)\nabla f(\boldsymbol{y}_t)]_i \right)^2\right] \le \\\nonumber
    &\frac{1}{\left(K^{(1)}_td_t\right)^2}\sum_{a=1}^{K^{(1)}_t}\sum_{j=1}^{d_t}\mathbb{E}_t\left[\left[n\nabla\boldsymbol{g}_{{\boldsymbol{\hat{\epsilon}}_{t_a}}}^{\mathsf{T}}(\boldsymbol{\theta}_t)\boldsymbol{D}^{(j)}_t\nabla f_{\nu_{t_a}}(\boldsymbol{y}_t) -\nabla\boldsymbol{g}^{\mathsf{T}}(\boldsymbol{\theta}_t)\nabla f(\boldsymbol{y}_t) \right]^2_i \right] + \frac{1}{\left(d_tK^{(1)}_t\right)^2}\sum_{a=1}^{K^{(1)}_t}\sum_{j=1}^{d_t}\sum_{\mathcal{L}=1, \mathcal{L}\neq j}^{d_t}\\\nonumber
    &\sqrt{\mathbb{E}_t\left[\left[n\nabla\boldsymbol{g}_{{\boldsymbol{\hat{\epsilon}}_{t_a}}}^{\mathsf{T}}(\boldsymbol{\theta}_t)\boldsymbol{D}^{(j)}_t\nabla f_{\nu_{t_a}}(\boldsymbol{y}_t) -\nabla\boldsymbol{g}^{\mathsf{T}}(\boldsymbol{\theta}_t)\nabla f(\boldsymbol{y}_t) \right]^2_i\right]\mathbb{E}_t\left[\left[n\nabla\boldsymbol{g}_{{\boldsymbol{\hat{\epsilon}}_{t_a}}}^{\mathsf{T}}(\boldsymbol{\theta}_t)\boldsymbol{D}^{(j)}_t\nabla f_{\nu_{t_a}}(\boldsymbol{y}_t) -\nabla\boldsymbol{g}^{\mathsf{T}}(\boldsymbol{\theta}_t)\nabla f(\boldsymbol{y}_t) \right]^2_i\right]}\\\nonumber
    &\le \frac{4n^2M^2_f\sigma^2_2 + 4M^2_fM^2_gn(n-1) + 2M^2_g\sigma^2_1}{K^{(1)}_t}.
\end{align*}
Combining this result with (\ref{variance_est_1}) gives: 
\begin{equation*}
    \mathbb{E}_t\left[[\overline{\nabla \mathcal{L}(\boldsymbol{\theta}_{t})}]^2_i\right]  \le \frac{4n^2M^2_f\sigma^2_2 + 4M^2_fM^2_gn(n-1) + 2M^2_g\sigma^2_1}{K^{(1)}_t} + \left[\nabla \boldsymbol{g}(\boldsymbol{\theta}_t)^{\mathsf{T}}\nabla f(\boldsymbol{y}_t)\right]^2_i
\end{equation*}
Moreover,
\begin{align*}
    &\left[\nabla \boldsymbol{g}(\boldsymbol{\theta}_t)^{\mathsf{T}}\nabla f(\boldsymbol{y}_t)\right]^2_i = \left[\nabla \boldsymbol{g}(\boldsymbol{\theta}_t)^{\mathsf{T}}\nabla f(\boldsymbol{y}_t) - \nabla \boldsymbol{g}(\boldsymbol{\theta}_t)^{\mathsf{T}}\nabla f(\boldsymbol{g}(\boldsymbol{\theta}_t)) + \nabla \boldsymbol{g}(\boldsymbol{\theta}_t)^{\mathsf{T}}\nabla f(\boldsymbol{g}(\boldsymbol{\theta}_t))\right]^2_i = \\\nonumber
    &\left[\nabla \boldsymbol{g}(\boldsymbol{\theta}_t)^{\mathsf{T}}\left[\nabla f(\boldsymbol{y}_t) - \nabla f(\boldsymbol{g}(\boldsymbol{\theta}_t))\right] + \nabla \boldsymbol{g}(\boldsymbol{\theta}_t)^{\mathsf{T}}\nabla f(\boldsymbol{g}(\boldsymbol{\theta}_t))\right]^2_i \le 2\left[\nabla \boldsymbol{g}(\boldsymbol{\theta}_t)^{\mathsf{T}}\nabla f(\boldsymbol{g}(\boldsymbol{\theta}_t))\right]^2_i + \\\nonumber
    &2\left[\nabla \boldsymbol{g}(\boldsymbol{\theta}_t)^{\mathsf{T}}\left[\nabla f(\boldsymbol{y}_t) - \nabla f(\boldsymbol{g}(\boldsymbol{\theta}_t))\right]\right]^2_i \le 2\left[\nabla \mathcal{L}(\boldsymbol{\theta}_t)\right]^2_i + 2\left|\left|\nabla \boldsymbol{g}(\boldsymbol{\theta}_t)^{\mathsf{T}}\right|\right|^2_2\left|\left|\nabla f(\boldsymbol{y}_t) - \nabla f(\boldsymbol{g}(\boldsymbol{\theta}_t))\right|\right|^2_2 \le\\\nonumber
    &2\left[\nabla \mathcal{L}(\boldsymbol{\theta}_t)\right]^2_i + 2M^2_gL^2_f\left|\left|\boldsymbol{y}_t - \boldsymbol{g}(\boldsymbol{\theta}_t)\right|\right|^2_2.
\end{align*}
Hence, we have:
\begin{align}\label{addition_expression_1}
    &\sum_{i=1}^p\frac{\mathbb{E}_t\left[[\overline{\nabla \mathcal{L}(\boldsymbol{\theta}_{t})}]^2_i\right]}{\sqrt{\gamma^{(2)}_t[\boldsymbol{v}_{t-1}]_i} + \xi} \le \sum_{i=1}^p\frac{\frac{4n^2M^2_f\sigma^2_2 + 4M^2_fM^2_gn(n-1) + 2M^2_g\sigma^2_1}{K^{(1)}_t} + \left[\nabla \boldsymbol{g}(\boldsymbol{\theta}_t)^{\mathsf{T}}\nabla f(\boldsymbol{y}_t)\right]^2_i}{\sqrt{\gamma^{(2)}_t[\boldsymbol{v}_{t-1}]_i} + \xi} = \\\nonumber
    &\frac{4n^2M^2_f\sigma^2_2 + 4M^2_fM^2_gn(n-1) + 2M^2_g\sigma^2_1}{\xi K^{(1)}_t}p  + \sum_{i=1}^p\frac{2\left[\nabla \mathcal{L}(\boldsymbol{\theta}_t)\right]^2_i + 2M^2_gL^2_f\left|\left|\boldsymbol{y}_t - \boldsymbol{g}(\boldsymbol{\theta}_t)\right|\right|^2_2}{\sqrt{\gamma^{(2)}_t[\boldsymbol{v}_{t-1}]_i} + \xi} \le \\\nonumber
    &\frac{4n^2M^2_f\sigma^2_2 + 4M^2_fM^2_gn(n-1) + 2M^2_g\sigma^2_1}{\xi K^{(1)}_t}p + 
    \frac{2pM^2_gL^2_f}{\xi}\left|\left|\boldsymbol{y}_t - \boldsymbol{g}(\boldsymbol{\theta}_t)\right|\right|^2_2 + 
    2\sum_{i=1}^p\frac{\left[\nabla \mathcal{L}(\boldsymbol{\theta}_t)\right]^2_i}{\sqrt{\gamma^{(2)}_t[\boldsymbol{v}_{t-1}]_i} + \xi}.
\end{align}
Hence,  expression (\ref{new_change_function}) can be simplified as follows:
\begin{align}\label{newest_change_function}
    &\mathbb{E}_t\left[\mathcal{L}(\boldsymbol{\theta}_{t+1})\right] - \mathcal{L}(\boldsymbol{\theta}_t) \le \\\nonumber
    &-\frac{\alpha_t\left(1 - \gamma^{(1)}_t\right)}{2}\sum_{i=1}^{p}\frac{[\nabla\mathcal{L}(\boldsymbol{\theta}_{t})]^2_i}{\sqrt{\gamma^{(2)}_t[\boldsymbol{v}_{t-1}]_i} + \xi} + \frac{\alpha_t\left(1 - \gamma^{(1)}_t\right)n^2M^2_gL^2_f}{2\xi d^2_t}\left|\left|\boldsymbol{g}(\boldsymbol{\theta}_t) - \boldsymbol{y}_t \right|\right|^2_2 + \\\nonumber
    &\sum_{i=1}^p\frac{\mathbb{E}_t\left[[\overline{\nabla \mathcal{L}(\boldsymbol{\theta}_{t})}]^2_i\right]}{\sqrt{\gamma^{(2)}_t[\boldsymbol{v}_{t-1}]_i} + \xi}\left[\frac{L\alpha^2_t\left(1 - \gamma^{(1)}_t\right)^2}{\xi} + \frac{2\alpha_t\hat{M}(1 - \gamma^{(1)}_t)\sqrt{1 - \gamma^{(2)}_t}}{\xi}\right]   + \\\nonumber &\frac{\alpha_tn\gamma^{(1)}_t\hat{M}^2}{\xi} + \frac{L\alpha^2_t\left(\gamma^{(1)}_t\right)^2n^2\hat{M}^2}{\xi^2} + \frac{\alpha_tn^3\hat{M}^3\left(\gamma^{(1)}_t\right)^2\sqrt{1 - \gamma^{(2)}_t}}{\xi^2 (1 - \gamma^{(1)}_t)}.
\end{align}
where we used (\ref{bound_expressions}):
\begin{align*}
    &-\alpha_t\gamma^{(1)}_t\nabla\mathcal{L}(\boldsymbol{\theta}_t)^{\mathsf{T}}\frac{\boldsymbol{m}_{t-1}}{\sqrt{\gamma^{(2)}_t\boldsymbol{v}_{t-1}}+ \xi} \le \frac{\alpha_t\gamma^{(1)}_t}{\xi}||\nabla\mathcal{L}(\boldsymbol{\theta}_t)||_2||\boldsymbol{m}_{t-1}||_2 \le\frac{\alpha_tn\gamma^{(1)}_t\hat{M}^2}{\xi},\\\nonumber
    &\frac{L\alpha^2_t\left(\gamma^{(1)}_t\right)^2}{\xi}\sum_{i=1}^{p}\frac{[\boldsymbol{m}_{t-1}]^2_i}{\sqrt{\gamma^{(2)}_t[\boldsymbol{v}_{t-1}]_i} + \xi} \le \frac{L\alpha^2_t\left(\gamma^{(1)}_t\right)^2n^2\hat{M}^2}{\xi^2},\\\nonumber
    &\frac{\alpha_tn\hat{M}\left(\gamma^{(1)}_t\right)^2\sqrt{1 - \gamma^{(2)}_t}}{\xi (1 - \gamma^{(1)}_t)}\sum_{i=1}^{p}\frac{|[\nabla\mathcal{L}(\boldsymbol{\theta}_t)]_i[\boldsymbol{m}_{t-1}]_i|}{\sqrt{\gamma^{(2)}_{t}[\boldsymbol{v}_{t-1}]_i} + \xi} \le \frac{\alpha_tn\hat{M}\left(\gamma^{(1)}_t\right)^2\sqrt{1 - \gamma^{(2)}_t}}{\xi^2 (1 - \gamma^{(1)}_t)} \sum_{i=1}^p|[\nabla\mathcal{L}(\boldsymbol{\theta}_t)]_i[\boldsymbol{m}_{t-1}]_i| \le \\\nonumber
    &\frac{\alpha_tn\hat{M}\left(\gamma^{(1)}_t\right)^2\sqrt{1 - \gamma^{(2)}_t}}{2\xi^2 (1 - \gamma^{(1)}_t)}(||\nabla\mathcal{L}(\boldsymbol{\theta}_t)||^2_2 + ||\boldsymbol{m}_{t-1}||^2_2) \le \frac{\alpha_tn^3\hat{M}^3\left(\gamma^{(1)}_t\right)^2\sqrt{1 - \gamma^{(2)}_t}}{\xi^2 (1 - \gamma^{(1)}_t)}.
\end{align*}
Hence, applying (\ref{addition_expression_1}) in (\ref{newest_change_function}) gives:
\begin{align}\label{change_function_1}
    &\mathbb{E}_t\left[\mathcal{L}(\boldsymbol{\theta}_{t-1})\right] - \mathcal{L}(\boldsymbol{\theta}_t) \le \\\nonumber
    &-\frac{\alpha_t\left(1 - \gamma^{(1)}_t\right)}{2}\sum_{i=1}^{p}\frac{[\nabla\mathcal{L}(\boldsymbol{\theta}_{t})]^2_i}{\sqrt{\gamma^{(2)}_t[\boldsymbol{v}_{t-1}]_i} + \xi} + \frac{\alpha_t\left(1 - \gamma^{(1)}_t\right)n^2M^2_gL^2_f}{2\xi d^2_t}\left|\left|\boldsymbol{g}(\boldsymbol{\theta}_t) - \boldsymbol{y}_t \right|\right|^2_2 + \\\nonumber
    &\frac{\alpha_tn\gamma^{(1)}_t\hat{M}^2}{\xi}\left[1 + \frac{L\alpha_tn\gamma^{(1)}_t}{\xi} + \frac{n^2\hat{M}\gamma^{(1)}_t\sqrt{1 - \gamma^{(2)}_t}}{\xi(1 - \gamma^{(1)}_t)}\right] + \left[\frac{L\alpha^2_t\left(1 - \gamma^{(1)}_t\right)^2}{\xi} + \frac{2\alpha_t\hat{M}(1 - \gamma^{(1)}_t)\sqrt{1 - \gamma^{(2)}_t}}{\xi}\right]\times \\\nonumber
    &\left[\frac{4n^2M^2_f\sigma^2_2 + 4M^2_fM^2_gn(n-1) + 2M^2_g\sigma^2_1}{\xi K^{(1)}_t}p + 
    \frac{2pM^2_gL^2_f}{\xi}\left|\left|\boldsymbol{y}_t - \boldsymbol{g}(\boldsymbol{\theta}_t)\right|\right|^2_2 + 
    2\sum_{i=1}^p\frac{\left[\nabla \mathcal{L}(\boldsymbol{\theta}_t)\right]^2_i}{\sqrt{\gamma^{(2)}_t[\boldsymbol{v}_{t-1}]_i} + \xi}\right].
\end{align}

Grouping the terms in (\ref{change_function_1}) gives:

\begin{align}\label{change_function_1_1}
    &\mathbb{E}_t\left[\mathcal{L}(\boldsymbol{\theta}_{t+1})\right] - \mathcal{L}(\boldsymbol{\theta}_t) \le \\\nonumber
    &-\frac{\alpha_t\left(1 - \gamma^{(1)}_t\right)}{2}\left[1 - \frac{8\left[L\alpha_t\left(1 - \gamma^{(1)}_t\right) + \hat{M}\sqrt{1 - \gamma^{(2)}_t}\right]}{\xi} \right]\sum_{i=1}^{p}\frac{[\nabla\mathcal{L}(\boldsymbol{\theta}_{t})]^2_i}{\sqrt{\gamma^{(2)}_t[\boldsymbol{v}_{t-1}]_i} + \xi}  
    +  \\\nonumber
    &\frac{\alpha_t\left(1 - \gamma^{(1)}_t\right)}{2\xi}M^2_gL^2_f\left[\frac{n^2}{d^2_t} + \frac{8p\left[L\alpha_t\left(1 - \gamma^{(1)}_t\right) + \hat{M}\sqrt{1 - \gamma^{(2)}_t}\right]}{\xi} \right]\left|\left|\boldsymbol{g}(\boldsymbol{\theta}_t) - \boldsymbol{y}_t \right|\right|^2_2 +\\\nonumber
    &\frac{8\alpha_t\left(1 - \gamma^{(1)}_t\right)}{\xi^2K^{(1)}_t}\left[L\alpha_t\left(1 - \gamma^{(1)}_t\right) + \hat{M}\sqrt{1 - \gamma^{(2)}_t}\right]\left(n^2pM^2_f\sigma^2_2 + M^2_fM^2_gnp(n-1) + M^2_gp\sigma^2_1\right) + \\\nonumber
    &\frac{\alpha_tn\gamma^{(1)}_t\hat{M}^2}{\xi}\left[1 + \frac{\gamma^{(1)}_tn^2}{\xi(1 - \gamma^{(1)}_t)}\left[L\alpha_t(1 - \gamma^{(1)}_t) + \hat{M}\sqrt{1-\gamma^{(2)}_t}\right]\right] = \\\nonumber
    &-\frac{\alpha_t\left(1 - \gamma^{(1)}_t\right)}{2}\left[1 - 8\mathcal{C}_t \right]\sum_{i=1}^{p}\frac{[\nabla\mathcal{L}(\boldsymbol{\theta}_{t})]^2_i}{\sqrt{\gamma^{(2)}_t[\boldsymbol{v}_{t-1}]_i} + \xi}  
    +  \frac{\alpha_t\left(1 - \gamma^{(1)}_t\right)M^2_gL^2_f}{2\xi}\left[\frac{n^2}{d^2_t} + 8p\mathcal{C}_t\right]\left|\left|\boldsymbol{g}(\boldsymbol{\theta}_t) - \boldsymbol{y}_t \right|\right|^2_2 +\\\nonumber
    &\frac{8\alpha_t\mathcal{C}_t\left(1 - \gamma^{(1)}_t\right)}{\xi^2K^{(1)}_t}\left(n^2pM^2_f\sigma^2_2 + M^2_fM^2_gnp(n-1) + M^2_gp\sigma^2_1\right) + \frac{\alpha_tn\gamma^{(1)}_t\hat{M}^2}{\xi}\left[1 + \frac{\gamma^{(1)}_tq^2\mathcal{C}_t}{\left(1  - \gamma^{(1)}_{t}\right)}\right] \le \\\nonumber
    &-\frac{\alpha_t\left(1 - \gamma^{(1)}_t\right)}{2(n\hat{M} + \xi)}\left[1 - 8\mathcal{C}_t \right]||\nabla\mathcal{L}(\boldsymbol{\theta}_{t})||^2_2  
    +  \frac{\alpha_t\left(1 - \gamma^{(1)}_t\right)M^2_gL^2_f}{2\xi}\left[\frac{n^2}{d^2_t} + 8p\mathcal{C}_t\right]\left|\left|\boldsymbol{g}(\boldsymbol{\theta}_t) - \boldsymbol{y}_t \right|\right|^2_2 +\\\nonumber
    &\frac{8\alpha_t\mathcal{C}_t\left(1 - \gamma^{(1)}_t\right)}{\xi^2K^{(1)}_t}\left(n^2pM^2_f\sigma^2_2 + M^2_fM^2_gnp(n-1) + M^2_gp\sigma^2_1\right) + \frac{\alpha_tn\gamma^{(1)}_t\hat{M}^2}{\xi}\left[1 + \frac{\gamma^{(1)}_tn^2\mathcal{C}_t}{\left(1  - \gamma^{(1)}_{t}\right)}\right].
\end{align}
where we use notation $\mathcal{C}_{t} = \frac{L\alpha_t\left(1 - \gamma^{(1)}_t\right) + \hat{M}\sqrt{1 - \gamma^{(2)}_t}}{\xi}$ and $[\boldsymbol{v}_{t-1}]_i \le n^2\hat{M}^2$.\\
Next, let us denote $\mathbb{E}\left[\right]$ be the expectation with respect to all randomness in \text{CI-VI} Algorithm. By the low of total expectation:
\begin{equation*}
    \mathbb{E}\left[\mathbb{E}_t\left[\zeta_t\right]\right]  = \mathbb{E}\left[\mathbb{E}_{K^{(1)}_t,d_t,K^{(2)}_t }\left[\zeta_t\Big| \boldsymbol{\theta}_t\right]\right] = \mathbb{E}\left[\zeta_t\right]
\end{equation*}
for any $t-$measurable\footnote{Random variable is called  $t-$measurable if its affected by the randomness induced in  the first $t$ rounds.} random variable $\zeta_t$. Hence, taking expectation $\mathbb{E}$ from both sides of (\ref{change_function_1_1}) gives:
\begin{align}\label{change_function_1_2}
    &\mathbb{E}\left[\mathcal{L}(\boldsymbol{\theta}_{t+1}) - \mathcal{L}(\boldsymbol{\theta}_t)\right] \le \\\nonumber
    &-\frac{\alpha_t\left(1 - \gamma^{(1)}_t\right)}{2(n\hat{M} + \xi)}\left[1 - 8\mathcal{C}_t \right]\mathbb{E}\left[||\nabla\mathcal{L}(\boldsymbol{\theta}_{t})||^2_2\right]  
    +  \frac{\alpha_t\left(1 - \gamma^{(1)}_t\right)M^2_gL^2_f}{2\xi}\left[\frac{n^2}{d^2_t} + 8p\mathcal{C}_t\right]\mathbb{E}\left[\left|\left|\boldsymbol{g}(\boldsymbol{\theta}_t) - \boldsymbol{y}_t \right|\right|^2_2\right] +\\\nonumber
    &\frac{8\alpha_t\mathcal{C}_t\left(1 - \gamma^{(1)}_t\right)}{\xi^2K^{(1)}_t}\left(n^2pM^2_f\sigma^2_2 + M^2_fM^2_gnp(n-1) + M^2_gp\sigma^2_1\right) + \frac{\alpha_tq\gamma^{(1)}_t\hat{M}^2}{\xi}\left[1 + \frac{\gamma^{(1)}_tn^2\mathcal{C}_t}{\left(1  - \gamma^{(1)}_{t}\right)}\right].
\end{align}
Let $\alpha_t = \frac{C_{\alpha}}{t^a}$, $\beta_t = \frac{C_{\beta}}{t^b}$, $K^{(1)}_t = C_{1}t^{c}$, and  $K^{(2)}_t = C_{2}t^{e}$ for some constants $C_{\alpha},C_{\beta}, C_1, C_{3}, a,b,c,e > 0$ such that $(2a-2b)\notin (-1,0)$, $0 < b \le 1$. Following Corollary \ref{cor_1} we have:
\begin{equation*}
    \mathbb{E}\left[\left|\left|\boldsymbol{g}(\boldsymbol{\theta}_t) - \boldsymbol{y}_t \right|\right|^2_2\right] \le \frac{L^2_gC^2_{\mathcal{D}}}{2}\frac{1}{t^{4a-4b}} + 2C^2_{\mathcal{E}}\frac{1}{t^{b+e}}.
\end{equation*}
for some constants $C_{\mathcal{D}},C_{\mathcal{E}}> 0$, and, therefore,  (\ref{change_function_1_2}) can be written as:
\begin{align}\label{change_function_1_4}
    &\mathbb{E}\left[\mathcal{L}(\boldsymbol{\theta}_{t+1}) - \mathcal{L}(\boldsymbol{\theta}_t)\right] \le \\\nonumber
    &-\frac{C_{\alpha}\left(1 - \gamma^{(1)}_t\right)}{2t^a(n\hat{M} + \xi)}\left[1 - 8\mathcal{C}_t \right]\mathbb{E}\left[||\nabla\mathcal{L}(\boldsymbol{\theta}_{t})||^2_2\right]  
    +  \frac{C_{\alpha}\left(1 - \gamma^{(1)}_t\right)M^2_gL^2_f}{2t^a\xi}\left[\frac{n^2}{d^2_t} + 8p\mathcal{C}_t\right]\left[\frac{L^2_gC^2_{\mathcal{D}}}{2t^{4a-4b}} + \frac{2C^2_{\mathcal{E}}}{t^{b+e}}\right] +\\\nonumber
    &\frac{8C_{\alpha}\mathcal{C}_t\left(1 - \gamma^{(1)}_t\right)}{t^{a+c}\xi^2C_1}\left(n^2pM^2_f\sigma^2_2 + M^2_fM^2_gnp(n-1) + M^2_gp\sigma^2_1\right) + \frac{C_{\alpha}n\gamma^{(1)}_t\hat{M}^2}{t^a\xi}\left[1 + \frac{\gamma^{(1)}_tn^2\mathcal{C}_t}{\left(1  - \gamma^{(1)}_{t}\right)}\right].
\end{align}
with $\mathcal{C}_t = \frac{L\frac{C_{\alpha}}{t^a}\left(1 - \gamma^{(1)}_t\right) + \hat{M}\sqrt{1 - \gamma^{(2)}_t}}{\xi}$. By choosing $\gamma^{(2)}_t = 1 - \frac{C^2_{\alpha}}{t^{2a}}(1 - \gamma^{(1)}_t)^2$ such that $\sqrt{1 - \gamma^{(2)}_t} = \frac{C_{\alpha}}{t^a}(1 - \gamma^{(1)}_t)$ we have $\mathcal{C}_t = C_{\alpha}\frac{(L + \hat{M})}{\xi}\frac{1 - \gamma^{(1)}_{t}}{t^a} \le C_{\alpha}\frac{(L + \hat{M})}{\xi}$. By choosing $C_{\alpha} \le \frac{\xi}{16p(L + \hat{M})}$ we have $16p\mathcal{C}_t \le 1$, hence, (\ref{change_function_1_4}) can be simplified as:
\begin{align}\label{change_function_1_5}
    &\mathbb{E}\left[\mathcal{L}(\boldsymbol{\theta}_{t+1}) - \mathcal{L}(\boldsymbol{\theta}_t)\right] \le \\\nonumber
    &-\frac{C_{\alpha}\left(1 - \gamma^{(1)}_t\right)}{2t^a(n\hat{M} + \xi)}\left[1  - \frac{1}{2p} \right]\mathbb{E}\left[||\nabla\mathcal{L}(\boldsymbol{\theta}_{t})||^2_2\right]  
    +  \frac{C_{\alpha}\left(1 - \gamma^{(1)}_t\right)M^2_gL^2_f}{2t^a\xi}\left[\frac{n^2}{d^2_t}+\frac{1}{2}\right]\left[\frac{L^2_gC^2_{\mathcal{D}}}{2t^{4a-4b}} + \frac{2C^2_{\mathcal{E}}}{t^{b+e}}\right] +\\\nonumber
    &\frac{C_{\alpha}\left(1 - \gamma^{(1)}_t\right)}{2t^{a+c}\xi^2C_1}\left(n^2M^2_f\sigma^2_2 + \hat{M}^2n^2 + M_g^2\sigma^2_1\right) + \frac{C_{\alpha}n\hat{M}^2}{\xi}\left[1 + \frac{n^2\gamma^{(1)}_t}{16p\left(1  - \gamma^{(1)}_{t}\right)}\right]\frac{\gamma^{(1)}_t}{t^a} \le \\\nonumber
    &-\frac{C_{\alpha}\left(1 - \gamma^{(1)}_t\right)}{4t^a(n\hat{M} + \xi)}\mathbb{E}\left[||\nabla\mathcal{L}(\boldsymbol{\theta}_{t})||^2_2\right]  
    +  \frac{2n^2C_{\alpha}\left(1 - \gamma^{(1)}_t\right)M^2_gL^2_f}{t^a\xi}\left[\frac{L^2_gC^2_{\mathcal{D}}}{2t^{4a-4b}} + \frac{2C^2_{\mathcal{E}}}{t^{b+e}}\right] +\\\nonumber
    &\frac{C_{\alpha}\left(n^2M^2_f\sigma^2_2 + \hat{M}^2n^2 + M_g^2\sigma^2_1\right)}{2\xi^2C_1}\frac{\left(1 - \gamma^{(1)}_t\right)}{t^{a+c}} + \frac{C_{\alpha}n\hat{M}^2}{\xi(1 - \gamma^{(1)}_t)}\frac{\gamma^{(1)}_t}{t^a}.
\end{align}
Assuming that $\gamma^{(1)}_t = C_{\gamma}\mu^{t}$ for some $C_{\gamma}\in\left(0,\frac{1}{2}\right)$ and $\mu\in[0,1)$, then expression (\ref{change_function_1_5}) can be simplified as:
\begin{align*}
    &\mathbb{E}\left[\mathcal{L}(\boldsymbol{\theta}_{t+1}) - \mathcal{L}(\boldsymbol{\theta}_t)\right] \le \\\nonumber
    &-\frac{C_{\alpha}}{8t^a(n\hat{M} + \xi)}\mathbb{E}\left[||\nabla\mathcal{L}(\boldsymbol{\theta}_{t})||^2_2\right]  
    +  \frac{2n^2C_{\alpha}M^2_gL^2_f}{\xi}\left[\frac{L^2_gC^2_{\mathcal{D}}}{2t^{5a-4b}} + \frac{2C^2_{\mathcal{E}}}{t^{a +b+e}}\right] +\\\nonumber
    &\frac{C_{\alpha}\left(n^2M^2_f\sigma^2_2 + \hat{M}^2n^2 + M_g^2\sigma^2_1\right)}{2\xi^2C_1}\frac{1}{t^{a+c}} + \frac{C_{\alpha}n\hat{M}^2}{\xi}\frac{\mu^t}{t^a}.
\end{align*}
Finally, choosing 
\begin{equation*}
C_{\alpha} = \min\Big\{\frac{\xi}{16p(L + \hat{M})}, \frac{e}{n^2M^2_gL^2_fL^2_gC^2_{\mathcal{D}}},\frac{e}{4n^2M^2_gL^2_fC^2_{\mathcal{E}}},  \frac{2C_1\xi^2}{\left(n^2M^2_f\sigma^2_2 + \hat{M}^2n^2 + M_g^2\sigma^2_1\right)}, \frac{\xi}{n\hat{M}^2}\Big\}.
\end{equation*}
we have:
\begin{align*}
    &\mathbb{E}\left[\mathcal{L}(\boldsymbol{\theta}_{t+1}) - \mathcal{L}(\boldsymbol{\theta}_t)\right] \le \\\nonumber
    &-\frac{C_{0}}{t^a}\mathbb{E}\left[||\nabla\mathcal{L}(\boldsymbol{\theta}_{t})||^2_2\right]  
    +  \frac{1}{t^{5a-4b}} + \frac{1}{t^{a + b + e}} + \frac{1}{t^{a + c}} + \frac{\mu^{t}}{t^a}.
\end{align*}
where $C_0 = \frac{C_{\alpha}}{8(q\hat{M} + \xi)}$. Therefore,
\begin{align}\label{change_function_1_8}
    &\mathbb{E}\left[||\nabla\mathcal{L}(\boldsymbol{\theta}_{t})||^2_2\right] \le \\\nonumber &\frac{t^a}{C_0}\mathbb{E}\left[\mathcal{L}(\boldsymbol{\theta}_{t}) - \mathcal{L}(\boldsymbol{\theta}_{t+1})\right] + \frac{1}{C_0t^{4a-4b}} + \frac{1}{C_0t^{b + e}} + \frac{1}{C_0t^{c}} + \frac{\mu^{t}}{C_0}.
\end{align}
Taking summation in (\ref{change_function_1_8}) over $t=1,\ldots,T$ and dividing the result by $T$ gives:
\begin{align}\label{change_function_1_9}
    &\frac{\sum_{t=1}^T\mathbb{E}\left[||\nabla\mathcal{L}(\boldsymbol{\theta}_{t})||^2_2\right]}{T} \le \\\nonumber &\frac{\sum_{t=1}^Tt^a\mathbb{E}\left[\mathcal{L}(\boldsymbol{\theta}_{t}) - \mathcal{L}(\boldsymbol{\theta}_{t+1})\right]}{C_0T} + \frac{1}{C_0T}\sum_{t=1}^T\left[\frac{1}{t^{4a-4b}} + \frac{1}{t^{b + e}} + \frac{1}{t^{c}}\right] + \frac{1}{C_0T(1 - \mu)}.
\end{align}
Notice, using first order concavity condition for function $f(t) = t^a$ (if $a\in[0,1]$) and Assumption \textbf{\text{1}}: 
\begin{align*}
    &\sum_{t=1}^Tt^a\mathbb{E}\left[\mathcal{L}(\boldsymbol{\theta}_{t}) - \mathcal{L}(\boldsymbol{\theta}_{t+1})\right] = \\\nonumber
    &\mathcal{L}(\boldsymbol{\theta}_1) + \sum_{t=2}^{T}\left[(t+1)^a - t^a\right]\mathbb{E}\left[\mathcal{L}(\boldsymbol{\theta}_{t})\right] \le B_f + \sum_{t=2}^Tat^{a-1}\mathbb{E}\left[\mathcal{L}(\boldsymbol{\theta}_{t})\right] \le \mathcal{L}(\boldsymbol{\theta}_1) + B_f\sum_{t=2}^Tat^{a-1}\le\\\nonumber
    &B_f + B_faT^{a}.
\end{align*} 
Hence, for (\ref{change_function_1_9}) we have:
\begin{align}\label{change_function_1_10}
    &\frac{\sum_{t=1}^T\mathbb{E}\left[||\nabla\mathcal{L}(\boldsymbol{\theta}_{t})||^2_2\right]}{T} \le \frac{B_f}{C_0}\frac{1}{T} + \frac{aB_f}{C_0}\frac{1}{T^{1-a}} + \frac{1}{T}\mathcal{O}\left(T^{4b-4a+1}\mathbb{I}\{4a-4b<1\}\right) +\\\nonumber
    &\frac{1}{T}\mathcal{O}\left(\log T\mathbb{I}\{4a-4b = 1\}\right) + \frac{1}{T}\mathcal{O}\left( T^{1-b-e}\mathbb{I}\{b+e \ne 1\}+ \log T\mathbb{I}\{b+e = 1\}\right)\\\nonumber
    &\frac{1}{T}\mathcal{O}\left( T^{1-c}\mathbb{I}\{c \ne 1\}+ \log T\mathbb{I}\{c = 1\}\right) + \frac{1}{C_0(1-\mu)}\frac{1}{T} = \\\nonumber
    &\left(\frac{B_f}{C_0} + \frac{1}{C_0(1-\mu)}\right)\frac{1}{T} + \frac{aB_f}{C_0}\frac{1}{T^{1-a}} + \mathcal{O}\left(\frac{1}{T^{4a-4b}}\mathbb{I}\{4a-4b\neq 1\} + \frac{1}{T^{b+e}}\mathbb{I}\{b+e \ne 1\} + \frac{1}{T^c}\mathbb{I}\{c \ne 1\}\right)+\\\nonumber
    &\mathcal{O}\left(\frac{\log T}{T}\right)\left[\mathbb{I}\{4a-4b =  1\} + \mathbb{I}\{b+e = 1\} + \mathbb{I}\{c = 1\}\right].
\end{align}
where $\mathbb{I}\{\textbf{\text{condition}}\} = 1 $ if $\textbf{\text{condition}}$ is satisfied, and $0$ otherwise. Notice, that oracle complexity per iteration of Algorithm 1 is given by $\mathcal{O}(t^{\max\{c,e\}})$. Hence, after $T$ iterations, the total first order oracles complexity is given by $\mathcal{O}\left(T^{1 + \max\{c,e\}}\right)$. Let us denote $\phi(a,b,c,e) = \min\{1-a,  4a-4b,  b+e,  c  \}$. Then, ignoring logarithmic factors $\log T$ we have:
\begin{equation*}
    \frac{1}{T}\sum_{t=1}^T\mathbb{E}\left[||\nabla\mathcal{L}(\boldsymbol{\theta}_{t})||^2_2\right]\le \mathcal{O}\left(\frac{1}{T^{\phi(a,b,c,e)}}\right) \le \delta
\end{equation*}
implies $T = \mathcal{O}\left(\frac{1}{\delta^{\frac{1}{\phi(a,b,c,e)}}}\right)$, and for the first oracles complexity we have the following expression $\Psi(a,b,e,c) = \frac{1}{\delta^{\frac{1 + \max\{c,e\}}{\phi(a,b,c,e)}}}$. Hence, we have the following optimisation problem to find the optimal setup of parameters $a,b,c,e$:
\begin{align}
    &\min_{a,b,c,e}\frac{1 + \max\{c,e\}}{\min\{1-a,  4a-4b,  b+e,  c  \}}\\\nonumber
    &s.t.\ \ a \ge b \\\nonumber
    &0\le a\le 1\\\nonumber
    &0< b\le 1\\\nonumber
    &4a - 4b \le 1\\\nonumber
    &c \ge 0\\\nonumber
    &e \ge 0
\end{align}
Introducing slackness parameter $\varkappa\in(0,0.001]$ let us consider the following setup:
\begin{align}\label{feasible_setup}
    &a = \frac{1}{5 + 4\varkappa}; \ \ c = \frac{4}{5 + 4\varkappa}; \ \ e = \frac{4}{5 + 4\varkappa},\ \  b = \frac{\varkappa}{5 + 4\varkappa}.
\end{align}
then all inequality constraints are satisfied and the overall complexity is given by:
\begin{equation*}
    \Psi(a^*,b^*,c^*,e^*) = \delta^{-\frac{9+4\varkappa}{4-4\varkappa}}.
\end{equation*}
This implies that \text{CI-VI} Algorithm requires $\mathcal{O}\left(\frac{1}{\delta^{\frac{5+4\varkappa}{4}}}\right)$ and  in expectation outputs $\delta-$approximate first order stationary point of function $\mathcal{L}(\boldsymbol{\theta})$ and requires $\mathcal{O}\left(\delta^{-\frac{9+4\varkappa}{4-4\varkappa}}\right)$ calls to the oracles $\mathcal{FOO}_f[\cdot,\cdot]$ and $\mathcal{FOO}_g[\cdot,\cdot]$. Choosing $\varkappa$ close to $0$ establishes the statement of the theorem.
\end{proof}

\end{document}